\theoremstyle{plain}
\newtheorem{theorem}{Theorem}[section]
\newtheorem{proposition}[theorem]{Proposition}
\newtheorem{lemma}[theorem]{Lemma}
\newtheorem{corollary}[theorem]{Corollary}
\theoremstyle{definition}
\newtheorem{assumption}[theorem]{Assumption}
\theoremstyle{remark}
\icmltitlerunning{Enhance the Safety in Reinforcement Learning by ADRC Lagrangian Methods}
\begin{document}

\twocolumn[
  \icmltitle{Enhance the Safety in Reinforcement Learning by ADRC Lagrangian Methods}



  \icmlsetsymbol{equal}{*}

  \begin{icmlauthorlist}
    \icmlauthor{Mingxu Zhang}{equal,yyy}
    \icmlauthor{Huicheng Zhang}{equal,yyy}
    \icmlauthor{Jiaming Ji}{sch}
    \icmlauthor{Yaodong Yang}{sch}
    \icmlauthor{Ying Sun}{yyy}
  \end{icmlauthorlist}

  \icmlaffiliation{yyy}{AI Thrust, The Hong Kong University of Science and Technology (Guangzhou), Guangzhou, China}
  \icmlaffiliation{sch}{School of Artificial Intelligence, Peking University, Beijing, China}

  \icmlcorrespondingauthor{Ying Sun}{sunyinggilly@gmail.com}

  \printAffiliationsAndNotice{}

  \icmlkeywords{Machine Learning, ICML}

  \vskip 0.3in
]
\begin{abstract}
Safe reinforcement learning (Safe RL) seeks to maximize rewards while satisfying safety constraints, typically addressed through Lagrangian-based methods. However, existing approaches, including PID and classical Lagrangian methods, suffer from oscillations and frequent safety violations due to parameter sensitivity and inherent phase lag. To address these limitations, we propose ADRC-Lagrangian methods that leverage Active Disturbance Rejection Control (ADRC) for enhanced robustness and reduced oscillations. Our unified framework encompasses classical and PID Lagrangian methods as special cases while significantly improving safety performance. Extensive experiments demonstrate that our approach reduces safety violations by up to 74\%, constraint violation magnitudes by 89\%, and average costs by 67\%, establishing superior effectiveness for Safe RL in complex environments.
\end{abstract}

\section{Introduction}
Reinforcement Learning (RL) aims to maximize rewards as agents interact with environments, finding applications in fields such as robotics \citep{sun2023zeroshotmultilevelfeaturetransmission,luo2024precisedexterousroboticmanipulation,li2024marladonacooperativeteam} and the post-training of Large Language Models (LLMs) \citep{RLHF, RLAIF, DPO}. 
However, in real-world scenarios like autonomous driving \citep{auto-driving}, safety requirements are often of paramount importance. 
It is essential not only to maximize rewards but also to ensure compliance with safety constraints. 
To address this challenge, Safe RL \citep{garcia2015comprehensive} has emerged as a paradigm dedicated to reliable and robust policy learning in complex and dynamic environments. Safe RL is typically formulated as a Constrained Markov Decision Process (CMDP). Among the various approaches for solving CMDPs, Lagrangian methods play a pivotal role by transforming constrained optimization problems into unconstrained ones through the introduction of a dual variable, the Lagrange multiplier. This transformation enables the adaptation of any RL algorithm into a Safe RL framework \citep{cpo,chow2018lyapunov,chow2019lyapunovbasedsafepolicyoptimization}, leading to the development of numerous novel Safe RL algorithms \citep{liu2024ddmlagdiffusionbaseddecisionmaking,chen2024adaptiveprimaldualmethodsafe}.

Classical Lagrangian updates can be interpreted as pure integral controllers on the constraint violation signal. While simple, this mechanism reacts slowly to the rapid distributional shifts caused by policy updates and the stochasticity of cost estimates, leading to lag, overshoot, and persistent oscillations in safety performance. Attempts to mitigate these issues with PID-based extensions \citep{pidlagrange} reduce oscillations by adding proportional and derivative terms, but their behavior remains fragile: performance is highly sensitive to the chosen gains and rarely transfers robustly across tasks \citep{pidcontrolbook1,PIDcontrolbook2}. These limitations point to a deeper challenge—existing methods lack a way to explicitly counteract the drifting disturbances that underlie oscillatory training.

To overcome this challenge, we turn to the broader toolbox of \emph{adaptive control}, whose central goal is to maintain stability and performance under unknown and time-varying dynamics. Among the many adaptive strategies, Active Disturbance Rejection Control (ADRC) \citep{han1998adrc,PID_ADRC} is particularly well suited to the Safe RL setting. Unlike classical adaptive methods that often require a parametric model or extensive gain tuning, ADRC treats all uncertainty—including model error, noise, and nonstationarity—as a lumped disturbance, and employs a lightweight observer to estimate and cancel it online. This observer-based design makes ADRC both model-free and robust to changing dynamics, while its reliance only on observable signals (such as cost returns) makes it a natural match for reinforcement learning. In contrast, alternatives such as MRAC or Lyapunov-based adaptive schemes typically assume access to richer state information or stronger structural knowledge, which is impractical in high-dimensional RL environments.

Building on this idea, we introduce ADRC Lagrangian methods, which augment the classical dual update with an observer that estimates and cancels the disturbance acting on the constraint return. By combining this observer with a smooth reference trajectory for the cost threshold, our update suppresses transient overshoot while directly compensating for nonstationarity and noise. The resulting method is simple to implement, model-free, and optimizer-agnostic, yet it fundamentally changes the dynamics of constraint regulation: we derive a theoretical lower bound on the observer gain that provides safe defaults across diverse environments, we show that classical and PID Lagrangian updates emerge as strict special cases of our formulation, and we establish through frequency-domain analysis that our approach achieves smaller disturbance-estimation error and reduced phase lag. Empirically, these properties translate into substantial improvements in safety throughout training: on OmniSafe benchmarks our method reduces violation rates by up to 74\%, lowers violation magnitudes by 89\%, and decreases average costs by 67\%, all while maintaining competitive reward performance. These results demonstrate that bringing the ADRC perspective into Safe RL yields both principled theoretical guarantees and significant empirical gains.

In conclusion, our main contributions are:
\begin{itemize}
    \item We are the first to introduce ADRC into Safe RL, dynamically adjusting the Lagrange multiplier to improve constraint satisfaction and training stability.
    \item We theoretically establish that both PID and classical Lagrangian methods are special cases of our ADRC Lagrangian methods. Moreover, through frequency-domain analysis, we demonstrate that our method significantly reduces phase lag compared to traditional approaches, leading to faster and more stable constraint satisfaction.
    \item Comprehensive experiments validate the effectiveness of our method, showing significant improvements in reducing oscillations during training across diverse benchmarks.
\end{itemize}

\section{Related Work}

\paragraph{Safe RL}
Safe reinforcement learning (RL) aims to find optimal policies that maximize rewards while satisfying safety constraints \citep{garcia2015comprehensive,cpo,wachi2020safe,pcpo}. Common approaches include safe exploration techniques to ensure safety during training \citep{sui2015safe,wang2023enforcing} and the primal-dual framework, which employs Lagrangian multipliers to address constrained optimization \citep{ray2019benchmarking,ding2020natural,chow2018lyapunov,chow2019lyapunovbasedsafepolicyoptimization}. Recent advances have improved the tuning of these multipliers through gradient-based methods \citep{DDPG,rcpo,zhang2020first}, PID-based updates \citep{pidlagrange}, adaptive primal-dual methods \citep{chen2024adaptiveprimaldualmethodsafe}, and variational inference approaches \citep{liu2022constrained,huang2022constrained}, enhancing algorithm stability and performance \citep{yao2024constraint}. On-policy algorithms can be broadly categorized into Lagrangian methods, such as PDO \citep{chow2018lyapunov}, and convex optimization methods like CPO \citep{cpo} and CVPO \citep{liu2022constrained}. Recent developments, including APPO \citep{dai2023augmented} and CUP \citep{cup}, specifically address oscillatory behaviors and improve constraint feasibility. Furthermore, accurate estimation of objective and constraint functions is crucial, as it significantly influences the efficiency and reliability of policy updates \citep{altman2021constrained}. Additionally, methods targeting training stability, such as policy inertia learning \citep{chen2021addressing} and soft-switching gradient manipulation \citep{gu2024balance}, have effectively reduced oscillations, highlighting their importance for Safe RL.

\paragraph{Control Theory} 
Control theory includes traditional controllers such as Proportional-Integral-Derivative (PID) controllers \citep{aastrom2006pid,ang2005pid}, which, despite their widespread use, are sensitive to parameter variations and external disturbances \citep{pidcontrolbook1,PIDcontrolbook2}. To overcome these limitations, Model Reference Adaptive Control (MRAC), which is a milestone of adaptive control, has been developed, enabling dynamic parameter adjustment to maintain performance under system uncertainties \citep{nguyen2018model,parks1966liapunov}. By incorporating the MIT rule \citep{MITrule}, MRAC-PID controllers leverage backpropagation-inspired methods \citep{bp} to adapt PID parameters in real time, addressing control gain sensitivity \citep{mrac_PID,kungwalrut2011design}. Furthermore, \citet{zhang2019theory} it has been demonstrated that PID parameters could be selected within a specific manifold to ensure global system stabilization with exponential error convergence. In parallel, Active Disturbance Rejection Control (ADRC) has emerged as a robust alternative for managing uncertainties and disturbances. First introduced by \citet{han1998adrc}, ADRC has been further developed to enhance its applicability and theoretical underpinnings \citep{han2009pid}. Recent advancements include applications in nonlinear systems \citep{guo2017active} and rigorous stability analysis using Lyapunov functions \citep{Zhong_2020}, solidifying ADRC as an effective and versatile control strategy.

\section{Preliminaries}
\label{sec:preliminaries}
\paragraph{Safe Reinforcement Learning}
A Markov Decision Process (MDP) $\mathcal{M}$ \citep{mdp} is defined by the tuple $(\mathcal{S}, \mathcal{A}, R, \mathbb{P}, \mu, \gamma)$, where $\mathcal{S}$ and $\mathcal{A}$ denote state and action spaces, $R$ is the reward function, $\mathbb{P}(s' \mid s,a)$ is the state transition probability, $\mu$ is the initial state distribution, and $\gamma \in (0,1)$ is the discount factor. A parameterized stationary policy $\pi_{\theta}(a \mid s)$ specifies action probabilities given state $s$. The goal of reinforcement learning (RL) is to maximize the expected return:
\begin{equation}
J(\pi_{\theta}) = \mathbb{E}_{s \sim \mu}\left[\sum_{t=0}^{\infty}\gamma^t r_{t+1}\right].
\end{equation}
Safe RL is typically formulated as a constrained MDP (CMDP) \citep{cmdp}, which extends MDPs with constraints defined by cost functions $c_i : \mathcal{S} \times \mathcal{A} \rightarrow \mathbb{R}$ and thresholds $d_i$. The cost return under policy $\pi_{\theta}$ is:
\begin{equation}
J_{c_i}(\pi_{\theta}) = \mathbb{E}{\pi_{\theta}}\left[\sum_{t=0}^{\infty} \gamma^t c_i(s_t,a_t)\right].
\end{equation}
Safe RL aims to find an optimal policy:
\begin{equation}
\pi^* = \arg\max_{\pi_{\theta}} J(\pi_{\theta}) \quad \text{s.t.} \quad J_{c_i}(\pi_{\theta}) \leq d_i, \forall i.
\end{equation}

\paragraph{Lagrangian Methods}
\label{sec:preliminary_lag}
In constrained optimization problems such as those in Safe RL, the goal is to maximize the objective function while satisfying constraints. A common approach is to apply the Lagrangian method. Specifically, for a CMDP with a single cost constraint, denote $d$ as the cost threshold, and $\lambda \geq 0$ as the Lagrangian multiplier, we define the Lagrangian function as:
\begin{equation}
\mathcal{L}(\theta, \lambda) = J(\pi_{\theta}) - \lambda (J_c(\pi_{\theta}) - d),
\end{equation}
The optimal solution aims to maximize the Lagrangian with respect to $\theta$ while minimizing it with respect to the multiplier $\lambda$. To achieve this, we apply a gradient-based approach to update $\lambda$ iteratively. Specifically, we define the constraint violation as $g(\pi_{\theta}) := J_c(\pi_{\theta}) - d$. Denote $\alpha > 0$ as the learning rate controlling the update step size; we perform gradient ascent on $\lambda$ with respect to $\mathcal{L}$:
\begin{equation}
\dot{\lambda} = \alpha g(\pi_{\theta}),
\end{equation}
Thus, optimizing $\lambda$ reduces to a standard gradient ascent problem:
\begin{equation}
\dot{\lambda} = \alpha \frac{\partial \mathcal{L}(\theta, \lambda)}{\partial \lambda} = \alpha g(\pi_{\theta}),
\end{equation}
Discretizing over time, the Lagrangian multiplier is updated iteratively by:
\begin{equation}
\lambda_t = \lambda_{t-1} + \alpha g(\pi_{\theta_{t-1}}),
\end{equation}
or equivalently, by summing constraint violations over time:
\begin{equation}
\lambda_t = \lambda_0 + \alpha \sum_{\tau=0}^{t-1} g(\pi_{\theta_\tau}) \approx \alpha \int_0^t g(\pi_{\theta_\tau}) d\tau.
\end{equation}
This shows that classical Lagrangian methods implement a pure Integral (I) controller on the constraint violation signal $g(\pi_\theta)$. With this view, to reduce oscillations during training, PID Lagrangian methods \citep{pidlagrange} generalize the integral control by adding proportional (P) and derivative (D) terms into the dynamics of $\lambda$:
\begin{equation}
\dot{\lambda} = \alpha g(\pi_{\theta}) + \beta \dot{g}(\pi_{\theta}) + \gamma \ddot{g}(\pi_{\theta}),
\end{equation}
where $\alpha$, $\beta$, and $\gamma$ are positive coefficients controlling the strength of the I, P, and D terms respectively.

Similarly, integrating this equation over time, the resulting PID update law for the multiplier is:
\begin{equation}
\label{eq:PID_law}
\lambda_t = \left( K_p g(\pi_{\theta_t}) + K_i \int_0^t g(\pi_{\theta_\tau}) d\tau + K_d \dot{g}(\pi_{\theta_t}) \right)_+,
\end{equation}
where $K_p$, $K_i$, $K_d$ are proportional, integral, and derivative gains that need to be tuned carefully.

\paragraph{Active Disturbance Rejection Control}
\label{sec:preliminary_adrc}
Compared with PID controller, Active Disturbance Rejection Control (ADRC) \citep{han1998adrc, PID_ADRC} provides a more adaptive and resilient alternative. Unlike traditional PID control, ADRC explicitly estimates and compensates for unknown disturbances through an observer-based framework, reducing reliance on precise model knowledge and hyperparameter sensitivity. The core component of ADRC is the Extended State Observer (ESO), which is designed to simultaneously estimate both the internal system states and the total disturbance affecting the system dynamics. By accurately reconstructing the disturbance in real time, the control input can proactively reject its influence, significantly enhancing system stability and performance. In practical Safe RL scenarios, where exact system dynamics are unknown and only observable quantities like costs are available, a reduced-order ESO design is commonly employed. In addition to disturbance estimation, to achieve smoother system behavior and better transient performance, the control strategy can also incorporate a designed reference trajectory that guides the evolution of the system states toward the desired setpoints. 

\section{Method}

\subsection{Closed-loop System Representation of Safe RL}
Lagrangian-based Safe RL can be viewed as a feedback system: the policy affects the cumulative cost, which drives the Lagrange multiplier that in turn influences the policy update. We capture this interaction in a simple closed-loop form,
\begin{equation}
\label{safeRL_as_closed_loop_system}
\begin{cases}
x_1 = J_c, \\
\dot{x}_1 = x_2, \\
\dot{x}_2 = f(x_1, x_2, t) + u(t), \\
u(t) = \lambda_t,
\end{cases}
\end{equation}
where $x_1$ is the cumulative cost, $x_2$ its derivative, $f(\cdot)$ aggregates all unknown and time-varying effects, and $u(t)$ is the control input given by the multiplier $\lambda_t$. This formulation highlights the root cause of oscillations in existing methods: the dynamics drift as the policy changes, while the multiplier behaves like an integral controller that lags behind disturbances. 

Our goal is to replace this fragile mechanism with an observer-augmented update inspired by ADRC. By explicitly estimating the total disturbance from cost signals and compensating it in real time, while guiding constraint satisfaction through a smooth reference trajectory, our method achieves faster and more stable regulation than classical or PID-based approaches. The designs of the observer and reference signal are presented next.

\subsection{Arranging a Transient Process}
\label{sec:transient_process}

In Safe RL, the dual update implicitly drives the cumulative cost $x_1$ toward the safety threshold d. To formalize this objective, we introduce a reference signal $y^*(t)$, which represents the target trajectory that $x_1$ should ideally follow. Since the ultimate goal is constraint satisfaction, the natural choice of reference is a constant at the threshold:
\begin{equation}
\label{eq:reference_signal}
y^*(t) = d.
\end{equation}
However, tracking this signal directly can be problematic in practice. At the beginning of training, policies are usually far from safe, so the gap $x_1(0)-d$ is large. Forcing the multiplier to eliminate this gap immediately leads to abrupt updates, which amplify estimator noise and policy nonstationarity into overshoot and repeated violations. Empirically, this appears as sharp cost spikes in early training and oscillatory swings of $\lambda_t$, even when the constraint is ultimately feasible.

To prevent these instabilities, we need to arrange a transient process that gradually shrinks the effective budget from the current cost level toward $d$ with critically damped dynamics. In the Safe RL context, this corresponds to a smooth budget schedule: early training permits a controlled violation margin that decays over time, enabling exploration while guiding the system toward feasibility.

Concretely, we filter $y^*(t)$ through a second-order system:
\begin{equation}
\label{eq:ODE}
\ddot{r} = -2c_r \dot{r} - c_r^2(r-d), 
\qquad r(0)=x_1(0),\ \dot{r}(0)=x_2(0),
\end{equation}
where $c_r>0$ controls the tightening speed. The resulting reference trajectory is
\begin{equation}
\label{eq:trancient_process}
r(t)=d+\big(x_1(0)-d\big)e^{-c_rt}+\big(x_2(0)+c_r(x_1(0)-d)\big)te^{-c_rt},
\end{equation}
which starts from the current cost level and slope, then converges smoothly and non-oscillatorily to $d$. This shaped reference avoids abrupt enforcement in early training, stabilizes the multiplier dynamics, and reduces phase lag in constraint regulation. 

A detailed derivation is provided in Appendix~\ref{sec:solve_ODE}.

\subsection{Extended State Observation for Multiplier Updates}
\label{sec:ESO}

Training in Safe RL is inherently noisy and nonstationary: the measured cost fluctuates due to stochastic transitions, estimation error, and abrupt policy changes. If the multiplier reacts to these raw signals directly, it amplifies noise and tends to oscillate. What is missing is an online estimate of the \emph{unmodeled dynamics}—the effective disturbance $f(x_1,x_2,t)$ in Eqn.~\ref{safeRL_as_closed_loop_system}—so that the update can distinguish genuine constraint trends from transient fluctuations.

To this end, we borrow the idea of an \emph{extended state observer} (ESO) from adaptive control, but use it in the simplest reduced-order form~\cite{PID_ADRC} suitable for RL. The ESO maintains an auxiliary state $\xi$ that is updated alongside observed costs:
\begin{equation}
\label{eq:ESO_Form}
\begin{cases}
\dot{\xi} = -\omega_o \xi - \omega_o^2 x_2 - \omega_o u, \\
\hat{f} = \xi + \omega_o x_2,
\end{cases}
\end{equation}
where $\hat f$ serves as a running estimate of the disturbance and $\omega_o>0$ is a gain controlling how aggressively it adapts. Intuitively, $\hat f$ behaves like a bias-correction term that smooths the effect of noise and policy shifts before they reach the multiplier.

With this estimate, the control input $u(t)$ (i.e., the multiplier update) is designed to track the transient reference $r(t)$ using proportional, derivative, and disturbance feedback:
\begin{equation}
\label{eq:u_in_ESO}
u(t) = k_{ap}(x_1 - r) + k_{ad}(x_2 - \dot{r}) + \hat{f} - \ddot{r}.
\end{equation}
Substituting~\eqref{eq:ESO_Form} into~\eqref{eq:u_in_ESO}, and identifying $u(t)$ with $\lambda_t$, we obtain the ADRC-based update law:
\begin{equation}
\label{eq:ADRC_Lag_Law}
\begin{aligned}
\lambda_t = (k_{ap} + \omega_o k_{ad})(x_1 - r) + (k_{ad} + \omega_o)(x_2 - \dot{r}) &
\\+ \omega_o k_{ap} \int_0^t (x_1(\tau) - r(\tau)) d\tau - \ddot{r}.
\end{aligned}
\end{equation}

\begin{proposition}
Classical PID Lagrangian methods are a special case of~\eqref{eq:ADRC_Lag_Law}. Under a specific mapping between $(K_p,K_d,K_i)$ and $(k_{ap},k_{ad},\omega_o)$, the ADRC update reduces exactly to the PID rule in Eqn.~\ref{eq:PID_law}.
\end{proposition}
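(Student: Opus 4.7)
The plan is to collapse the two distinguishing ingredients of the ADRC design, namely the shaped reference and the extended state observer, so that only the PID-like skeleton survives. First, I would take the reference to be the trivial constant profile $r(t)\equiv d$, which is the degenerate solution of Eqn.~\ref{eq:ODE} obtained by setting $c_r$ so that no transient is arranged (equivalently, initializing $r(0)=d$ directly). This immediately gives $\dot r\equiv 0$ and $\ddot r\equiv 0$, and it matches the PID setting in Section~\ref{sec:preliminary_lag}, where the multiplier reacts directly to the raw constraint violation $g(\pi_\theta)=J_c-d$ without any shaping.

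Next, I would substitute this choice into Eqn.~\ref{eq:ADRC_Lag_Law}. Using the identifications $x_1=J_c$, $x_2=\dot J_c$, and $x_1-r=g(\pi_\theta)$, $x_2-\dot r=\dot g(\pi_\theta)$, the update collapses to
\begin{equation*}
\lambda_t=(k_{ap}+\omega_o k_{ad})\,g(\pi_\theta)+(k_{ad}+\omega_o)\,\dot g(\pi_\theta)+\omega_o k_{ap}\int_0^t g(\pi_{\theta_\tau})\,d\tau.
\end{equation*}
Matching this term-by-term with the PID law in Eqn.~\ref{eq:PID_law} then forces the mapping $K_p=k_{ap}+\omega_o k_{ad}$, $K_d=k_{ad}+\omega_o$, and $K_i=\omega_o k_{ap}$, under which the two update rules coincide (on the region where the projection $(\cdot)_+$ is inactive, which one can further enforce by composing with the same projection on the ADRC side).

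To complete the "special case" claim in both directions, I would then verify that the mapping is surjective onto the admissible PID gains. Eliminating $k_{ap}=K_i/\omega_o$ and $k_{ad}=K_d-\omega_o$ yields the scalar cubic $\omega_o^3-K_d\omega_o^2+K_p\omega_o-K_i=0$, whose positive root recovers the ADRC parameters from any chosen $(K_p,K_i,K_d)$; I would note existence of such a root (via the intermediate value theorem, since the cubic is negative at $\omega_o=0$ and positive for large $\omega_o$) and leave the routine algebra for an appendix. As a sanity check, setting $k_{ap}=k_{ad}=0$ further reduces the formula to a pure integral $\lambda_t=\omega_o\int_0^t g\,d\tau$, recovering the classical Lagrangian update as a subcase.

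The main obstacle is not the algebra but the \emph{conceptual} alignment between the two derivations: the integral term in Eqn.~\ref{eq:ADRC_Lag_Law} arose implicitly from integrating the ESO state $\xi$ in Eqn.~\ref{eq:ESO_Form}, whereas in the PID law it is postulated directly as an integral controller. I would therefore open the proof by re-deriving the closed form Eqn.~\ref{eq:ADRC_Lag_Law} with $r\equiv d$ by integrating $\dot\xi=-\omega_o\xi-\omega_o^2 x_2-\omega_o u$ from $0$ to $t$, showing cleanly that the ESO-induced term is precisely $\omega_o k_{ap}\int_0^t g\,d\tau$, so that the PID integral gain arises from the observer gain rather than being tuned independently; this is the place where the correspondence is most easily misread and deserves the most care.
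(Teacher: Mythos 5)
Your core argument is correct and is essentially the paper's own: set $r(t)\equiv d$ so that $\dot r=\ddot r=0$, identify $x_1-r=g(\pi_\theta)$ and $x_2-\dot r=\dot g(\pi_\theta)$, and read off the coefficient correspondence $K_p=k_{ap}+\omega_o k_{ad}$, $K_i=\omega_o k_{ap}$, $K_d=k_{ad}+\omega_o$, which is exactly the mapping the paper records in Eqn.~\eqref{eq:ablated_params} of its ablation appendix. Your suggestion to re-derive \eqref{eq:ADRC_Lag_Law} from the ESO integration is also what the paper does in Appendix~\ref{sec:simplify_ESO}, and your surjectivity discussion via the cubic $\omega_o^3-K_d\omega_o^2+K_p\omega_o-K_i=0$ is a genuine (and welcome) addition beyond what the paper proves, since the Proposition as stated only needs the forward direction.

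Two corrections, however. First, your closing sanity check is wrong: setting $k_{ap}=k_{ad}=0$ in \eqref{eq:ADRC_Lag_Law} kills both the proportional coefficient $k_{ap}+\omega_o k_{ad}$ \emph{and} the integral coefficient $\omega_o k_{ap}$, but leaves the derivative coefficient $k_{ad}+\omega_o=\omega_o$, so you obtain the pure derivative controller $\lambda_t=\omega_o\,\dot g$, not the pure integral $\omega_o\int_0^t g\,d\tau$. Recovering the classical (pure-I) Lagrangian update actually requires $K_p=K_d=0$, i.e.\ $k_{ad}=-\omega_o$ and $k_{ap}=\omega_o^2$, which violates the standing assumption $k_{ad}>0$; the classical method is only a \emph{formal} special case if negative gains are admitted, and this caveat should be stated rather than glossed by an incorrect substitution. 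Second, your surjectivity argument needs an analogous positivity check: the positive root of the cubic yields $k_{ad}=K_d-\omega_o$, which is positive only if the root lies in $(0,K_d)$; by the intermediate value theorem this holds precisely when the cubic is positive at $\omega_o=K_d$, i.e.\ when $K_pK_d>K_i$. For PID gains violating this inequality the inversion produces $k_{ad}\le 0$, so the "both directions" claim should be restricted accordingly (or the positivity assumptions on $(k_{ap},k_{ad})$ relaxed explicitly).
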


Thus ADRC can be seen as a strict generalization of PID Lagrangian: in addition to $P$, $I$, and $D$ terms, it continuously estimates the disturbance and compensates for it in real time. This additional degree of adaptivity reduces phase lag and improves robustness to noise and nonstationarity, as analyzed in Sec.~\ref{sec:lower_bound}.

\subsection{The Lower Bound of Optimal Parameters}
\label{sec:lower_bound}
A central challenge in Safe RL is parameter sensitivity: the same multiplier update rule can behave well in one environment but oscillate or diverge in another. This is particularly acute for PID-based Lagrangian methods, whose stability depends heavily on hand-tuned gains. To make ADRC practical in Safe RL, we aim for a principled condition that guarantees stability and bounded estimation error across environments, thereby removing the need for brittle manual tuning.

We characterize the uncertainty in the closed-loop dynamics of Eqn.~\ref{safeRL_as_closed_loop_system} by bounding how disturbances can depend on the current state and vary over time. Following \citet{PID_ADRC}, we consider
\begin{equation}
\label{eq:non_linear_function_class}
\begin{aligned}
&f(x_1,x_2,t) = h(x_1,x_2) + w(t), \\
&\Big|\tfrac{\partial h}{\partial x_1}\Big| \leq L_1, \Big|\tfrac{\partial h}{\partial x_2}\Big| \leq L_2, |w(t)|,|\dot w(t)| \leq L_3,    
\end{aligned}
\end{equation}
where $L_1$ and $L_2$ bound the sensitivity of disturbances to the cost $x_1$ and its rate $x_2$, while $L_3$ bounds the magnitude and variation of exogenous fluctuations such as noise or nonstationarity. This setting captures both state-dependent effects and purely time-dependent variations.

Within this class, the admissible observer gains $\omega_o$ are constrained by a characteristic polynomial manifold
\begin{equation}
\label{eq:manifold}
\Omega=\left\{\omega\in \mathbb{R} \,\middle|\,
n_0\omega^4+n_1\omega^3+n_2\omega^2+n_3\omega+n_4=0\right\}, 
\end{equation}
where the coefficients $n_i$ depend on $(k_{ap},k_{ad})$ and the constants $(L_1,L_2,L_3)$. We define
\[
\bar{\omega_o} =
\begin{cases}
\max \{ \omega \mid \omega \in \Omega \}, & \text{if } \Omega \neq \emptyset, \\ 
0, & \text{otherwise}.
\end{cases}
\]

The final lower bound ensuring stability and bounded estimation error is therefore
\begin{equation}
\label{eq:lower_bound_of_omega}
\omega_o^* = \max \Big\{ \bar \omega_o,\ 0,\ \tfrac{L_1 - k_{ap}}{k_{ad}},\ L_2 - k_{ad} \Big\}.
\end{equation}
Intuitively, this bound ensures that the ESO adapts quickly enough to track disturbances while remaining stable. In practice, this means practitioners only need to set $\omega_o > \omega_o^*$, removing the trial-and-error search that plagues PID tuning.

Beyond stability, we also analyze how fast and accurately ADRC reacts compared to classical integral updates. Let $e(t) = x_1(t) - r(t)$ be the tracking error, $\hat f_I = k_i \int_0^t e(\tau)\, d\tau$ the disturbance estimate from integral control, and $\hat f$ the ADRC estimate. Their estimation errors are
\[
e_{f_I}(t) = \hat f_I - f(x_1,x_2,t), \qquad 
e_f(t) = \hat f - f(x_1,x_2,t).
\]
Denote the Laplace transforms of $e$, $e_f$, $e_{f_I}$, $\hat f_I$, and $\hat f$ by $E(s)$, $E_f(s)$, $E_{f_I}(s)$, $\hat F_I(s)$, and $\hat F(s)$ respectively, and let $F(s)$ be the transform of the disturbance $f$. Then $G_{e_f}(s)$ and $G_{e_{f_I}}(s)$ are the transfer functions from $F(s)$ to $E_f(s)$ and $E_{f_I}(s)$.

We establish the following result:

\begin{theorem}
\label{theorem:3}
Suppose $\omega_o > \omega_o^*$. Then, for any frequency $\omega$, ADRC Lagrangian achieves uniformly lower disturbance estimation error than integral control:
\[
\frac{|G_{e_f}(i\omega)|}{|G_{e_{f_I}}(i\omega)|} < 1.
\]
Moreover, if $\omega_o > \max\Big\{ \tfrac{k_{ap} - \omega^2}{k_{ad}}, \ \omega_o^* \Big\}$, ADRC Lagrangian also exhibits smaller phase lag:
\[
\arg(G_{e_f}(i\omega)) < \arg(G_{e_{f_I}}(i\omega)).
\]
\end{theorem}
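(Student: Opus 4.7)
The plan is to derive closed-form expressions for both transfer functions and compare them pointwise at $s=i\omega$. I would first compute $G_{e_f}(s)$ by Laplace-transforming the reduced-order ESO in Eqn.~\ref{eq:ESO_Form} together with the plant dynamics $\dot x_2 = f + u$ from Eqn.~\ref{safeRL_as_closed_loop_system}. Substituting $\xi = \hat f - \omega_o x_2$ and using $\dot x_2 = f + u$ causes the $u$ and $\omega_o^2 x_2$ terms to cancel cleanly, leaving the first-order ODE $\dot{\hat f} = -\omega_o(\hat f - f)$. This immediately yields $\hat F(s) = \omega_o F(s)/(s+\omega_o)$ and hence
\begin{equation*}
G_{e_f}(s) \;=\; \frac{\hat F(s) - F(s)}{F(s)} \;=\; -\frac{s}{s+\omega_o}.
\end{equation*}

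Next, I would derive $G_{e_{f_I}}(s)$ from $\hat f_I(t)=k_i\int_0^t e(\tau)\,d\tau$. Using the closed-loop relation $\ddot e = f + u - \ddot r$ with $u$ given by the classical Lagrangian rule and the parameter mapping between $(K_p,K_i,K_d)$ and $(k_{ap},k_{ad},\omega_o)$ established by the preceding proposition, Laplace-transforming produces $G_{e_{f_I}}(s)$ as a higher-order rational function whose denominator is the characteristic polynomial of the closed-loop under pure integral disturbance estimation. After this setup, the magnitude statement reduces to showing that the ratio $|G_{e_f}(i\omega)|^2 / |G_{e_{f_I}}(i\omega)|^2$ is uniformly below $1$. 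This becomes a polynomial inequality in $\omega^2$ parameterized by $(k_{ap},k_{ad},\omega_o)$, and I would discharge it by invoking exactly the lower bounds $\omega_o > (L_1-k_{ap})/k_{ad}$ and $\omega_o > L_2 - k_{ad}$ built into Eqn.~\ref{eq:lower_bound_of_omega}, which are tailored to make each coefficient of the difference polynomial non-negative.

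For the phase claim, $\arg G_{e_f}(i\omega) = \pi/2 - \arctan(\omega/\omega_o)$ is read off directly from the first-order form, while $\arg G_{e_{f_I}}(i\omega)$ is obtained as a signed sum of arctangent contributions from its additional poles and zeros. The extra hypothesis $\omega_o > (k_{ap}-\omega^2)/k_{ad}$ is precisely the condition that fixes the sign of the dominant imaginary-part contribution in the denominator of $G_{e_{f_I}}$, forcing the ordering $\arg G_{e_f}(i\omega) < \arg G_{e_{f_I}}(i\omega)$ to hold at every frequency.

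The main obstacle is the algebraic bookkeeping in the magnitude step: because $G_{e_{f_I}}$ has a strictly higher-order denominator than $G_{e_f}$, the difference $|G_{e_{f_I}}(i\omega)|^2 - |G_{e_f}(i\omega)|^2$ expands into a polynomial in $\omega^2$ of nontrivial degree whose coefficients mix $L_1$, $L_2$, $L_3$, $k_{ap}$, $k_{ad}$, and $\omega_o$. Verifying coefficient-wise positivity for all admissible parameters requires careful use of both the ADRC-to-PID identification from the preceding proposition and every branch of the $\max$ defining $\omega_o^*$; once that polynomial manipulation is done, the phase inequality follows mechanically by comparing the explicit arctangent expressions.
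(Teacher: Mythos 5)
Your overall route is the same as the paper's: collapse the ESO into the scalar error equation $\dot e_f=-\omega_o e_f-\dot f$ so that $G_{e_f}(s)=s/(s+\omega_o)$ (up to sign), push the integral estimate through the tracking channel $s^2+k_{ad}s+k_{ap}$ to obtain $G_{e_{f_I}}(s)=\frac{s^3+k_ds^2+k_ps}{(s+\omega_o)(s^2+k_{ad}s+k_{ap})}$, and then compare moduli and arguments pointwise at $s=i\omega$, invoking the PID--ADRC parameter identification. Up to that point your plan coincides with the paper's proof, and your phase formula $\arg G_{e_f}(i\omega)=\pi/2-\arctan(\omega/\omega_o)$ is in fact more careful than the paper's (which drops the $\pi/2$ contributed by the numerator $i\omega$).

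The step you single out as ``the main obstacle,'' however, rests on a false premise. Both transfer functions depend only on $(\omega_o,k_{ap},k_{ad})$ and, via the identification, on $(k_p,k_i,k_d)$; the constants $L_1,L_2,L_3$ characterize the disturbance class and never enter $G_{e_f}$ or $G_{e_{f_I}}$. Hence the difference $|G_{e_{f_I}}(i\omega)|^2-|G_{e_f}(i\omega)|^2$ cannot have coefficients ``mixing $L_1,L_2,L_3$,'' and the branches $\omega_o>(L_1-k_{ap})/k_{ad}$ and $\omega_o>L_2-k_{ad}$ of $\omega_o^*$ have nothing to certify there --- their role (as in the paper, via the cited ADRC results) is only to guarantee stability and boundedness of $f,\dot f$ so that the frequency-domain analysis is legitimate. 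The magnitude inequality is far simpler than you anticipate: the ratio reduces to $\bigl((k_{ap}-\omega^2)^2+k_{ad}^2\omega^2\bigr)/\bigl((k_p-\omega^2)^2+k_d^2\omega^2\bigr)$, and substituting $k_p=k_{ap}+\omega_ok_{ad}$, $k_d=k_{ad}+\omega_o$ gives denominator minus numerator equal to $\omega_o\bigl(2k_{ap}k_{ad}+\omega_ok_{ad}^2+\omega_o\omega^2\bigr)>0$ for all $\omega$ whenever $\omega_o,k_{ap},k_{ad}>0$; no coefficient-wise case analysis over the $\max$ is needed. Had you searched for $L_i$-dependent coefficients to sign, the step as described would have failed. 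One further caution applying to both your argument and the paper's: converting $\omega_o>(k_{ap}-\omega^2)/k_{ad}$ into $\omega/\omega_o<k_{ad}\omega/(k_{ap}-\omega^2)$ implicitly assumes $k_{ap}-\omega^2>0$, so the regime $\omega^2\ge k_{ap}$ requires a separate (branch-aware) treatment of the arctangent.
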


In words, the frequency-domain guarantees imply concrete benefits in Safe RL training. A smaller disturbance estimation error means the multiplier update is less sensitive to transient noise and policy-induced fluctuations, avoiding spurious reactions to single-batch variability. A smaller phase lag means constraint violations are corrected earlier, rather than several updates later, reducing the amplitude and duration of overshoot. Together, these properties yield training dynamics with fewer repeated safety violations, smoother multiplier trajectories, and faster convergence to the feasible region. Detailed derivations are provided in Appendix~\ref{appendix:proofs}.

\paragraph{Stability Analysis under Surrogate Dynamics.}
Appendix~\ref{sec:theory-corrected} analyzes the ADRC multiplier update using a \emph{surrogate} closed-loop model $x_1(t)=J_c(\pi(t))$ tracking $r(t) \to d$. By treating policy non-stationarity and noise as a lumped disturbance $f(t)$ within this simplified dynamic, we derive an ISS-type tracking bound $\mathcal{O}(L_f/\omega_o)$ and bounded time-average constraint violations. Crucially, we establish a scaling law linking the disturbance bound $L_f$ to Safe RL hyperparameters $(\delta, N, \Delta t)$, providing a principled basis for parameter tuning.

\begin{figure*}
    \centering
    \includegraphics[width=\linewidth]{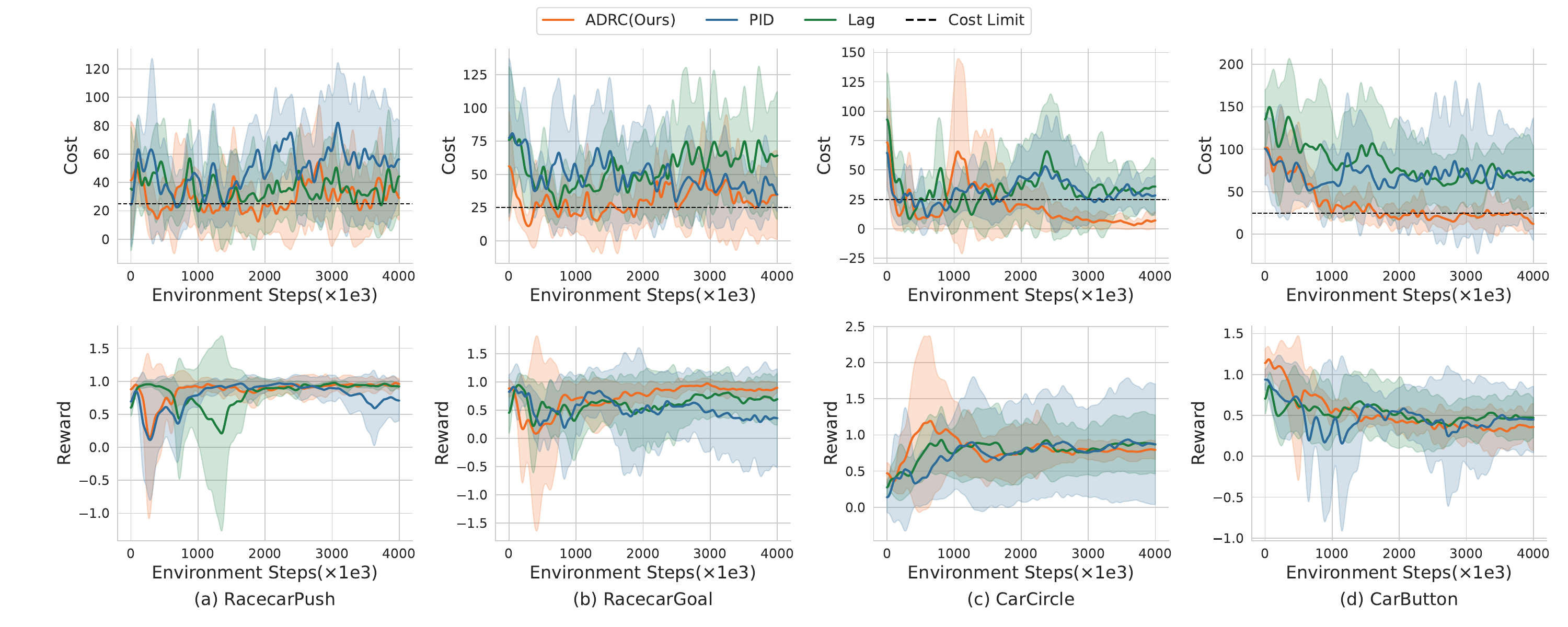}
    \caption{The training curves of PPO with various Lagrangian methods (denoted as CPPOLag, CPPOPID, CPPOADRC) across different tasks, showing episodic returns and costs over five random seeds. Solid lines represent mean values, while shaded areas denote variance. CPPOADRC demonstrates a shorter  phase and lower costs compared to baselines, while achieving competitive rewards. Additional results are provided in Appendix~\ref{sec:More_results}.}
    \label{fig:CPPO}
\end{figure*}

\subsection{ADRC Lagrangian Methods in Safe RL}
We now describe how ADRC Lagrangian methods are applied in Safe RL training. The central idea is to replace the hand-tuned, noise-sensitive dual update with an observer-augmented rule that adapts automatically to the evolving learning dynamics.

In practice, the Lagrangian multiplier $\lambda$ serves as a penalty knob: when the observed cumulative cost approaches the safety threshold, $\lambda$ should increase to discourage unsafe behavior; when costs fall well below the threshold, $\lambda$ can relax to allow more exploration. ADRC realizes this adaptivity by updating $\lambda$ according to Eqn.~\ref{eq:ADRC_Lag_Law}, where the observer $\hat f$ continuously estimates the effect of unmodeled disturbances and compensates for them in real time. This makes multiplier updates less myopic and more responsive than classical dual ascent or PID rules.

A key question is how to choose the observer gain $\omega_o$. As shown in Sec.~\ref{sec:lower_bound}, $\omega_o$ must exceed a lower bound $\omega_o^*$ that depends on environment sensitivities $(L_1,L_2)$. Since these quantities are unknown beforehand, we approximate them online using finite differences of observed costs:
\begin{equation}
\label{eq:calculate_L1_L2}
\begin{aligned}
L_1 &\approx \max_t \left| \frac{\ddot{x}_1(t+1)-\ddot{x}_1(t)}{x_1(t+1)-x_1(t)} \right|, \\
L_2 &\approx \max_t \left| \frac{\ddot{x}_1(t+1)-\ddot{x}_1(t)}{x_2(t+1)-x_2(t)} \right|,
\end{aligned}
\end{equation}
where $x_1$ is the cumulative cost and $x_2$ its derivative. These estimates allow us to adaptively compute $\omega_o$ via Eqns.~\ref{eq:manifold} and~\ref{eq:lower_bound_of_omega}, ensuring stability without manual tuning even as the training dynamics evolve, overcoming the parameter sensitivity and environment-specific retuning that plague PID-based and classical Lagrangian methods.

Finally, large values of $\lambda$ may destabilize policy learning by forcing aggressive gradient steps. Following the previous method \citep{pidlagrange}, we adopt a rescaled optimization objective:
\[
\theta^*(\lambda) = \arg\max_{\theta} \frac{1}{1+\lambda} \big(J(\pi_\theta) - \lambda J_c(\pi_\theta)\big),
\]
which tempers the effect of $\lambda$ while preserving constraint enforcement. This adjustment yields smoother policy updates and makes ADRC Lagrangian straightforward to integrate into standard Safe RL algorithms. The full training procedure is summarized in Algorithm~\ref{alg:ADRC_Multipliers}.

\begin{table*}[t]
\centering
\small
\caption{The proportion of constraint violations during training, the average violation magnitude, and the average cost for various algorithms. Our ADRC method consistently outperforms others.}
\resizebox{\textwidth}{!}{
\begin{tabular}{lcccccc}
\toprule
\textbf{Algorithm} 
& \multicolumn{3}{c}{\textbf{CarPush}} 
& \multicolumn{3}{c}{\textbf{RacecarGoal}} \\
\cmidrule(lr){2-4} \cmidrule(lr){5-7}
& \textbf{Vio. Rate (\%)} & \textbf{Magnitude} & \textbf{Avg. Cost} 
& \textbf{Vio. Rate (\%)} & \textbf{Magnitude} & \textbf{Avg. Cost} \\
\midrule
CPPOLag  & 68.45 ± 18.98 & 21.48 ± 16.81 & 43.38 ± 18.71 & 80.87 ± 19.17 & 31.18 ± 18.99 & 54.24 ± 21.01 \\
CPPOPID  & 62.36 ± 10.66 & 12.40 ± 3.08 & 33.74 ± 3.88 & 72.30 ± 24.96 & 27.11 ± 15.58 & 49.02 ± 18.75 \\
CPPOADRC & \textbf{42.23 ± 15.34} & \textbf{11.68 ± 7.99} & \textbf{29.16 ± 10.19} & \textbf{47.08 ± 21.58} & \textbf{12.31 ± 9.34} & \textbf{30.12 ± 12.74} \\
\midrule
DDPGLag  & 65.52 ± 4.60 & 12.53 ± 0.72 & 35.03 ± 0.53 & 71.44 ± 9.47 & 18.47 ± 4.13 & 40.76 ± 5.00 \\
DDPGPID  & 52.43 ± 0.12 & 7.25 ± 0.93 & 28.35 ± 0.44 & 72.05 ± 4.09 & 17.93 ± 2.81 & 40.29 ± 2.99 \\
DDPGADRC & \textbf{47.36 ± 1.90} & \textbf{2.88 ± 0.57} & \textbf{21.55 ± 0.29} & \textbf{68.41 ± 5.77} & \textbf{17.81 ± 1.34} & \textbf{39.41 ± 2.14} \\
\midrule
TD3Lag   & 80.94 ± 5.87 & 20.68 ± 3.94 & 43.43 ± 4.23 & 75.26 ± 2.13 & 19.93 ± 1.40 & 42.57 ± 1.59 \\
TD3PID   & 70.47 ± 10.44 & 17.00 ± 1.29 & 38.90 ± 2.83 & 73.31 ± 6.06 & 19.19 ± 2.72 & 41.57 ± 3.22 \\
TD3ADRC  & \textbf{40.62 ± 8.51} & \textbf{2.85 ± 0.31} & \textbf{20.65 ± 2.46} & \textbf{71.24 ± 3.00} & \textbf{17.55 ± 3.57} & \textbf{39.71 ± 3.94} \\
\midrule
TRPOLag  & 54.86 ± 6.74 & 10.46 ± 4.56 & 30.97 ± 4.69 & 64.31 ± 23.37 & 22.89 ± 18.69 & 43.67 ± 21.56 \\
TRPOPID  & 44.79 ± 2.84 & 7.34 ± 1.31 & 25.84 ± 0.88 & 49.33 ± 15.00 & 11.70 ± 7.07 & 30.94 ± 8.81 \\
TRPOADRC & \textbf{29.11 ± 3.70} & \textbf{3.44 ± 1.21} & \textbf{20.48 ± 0.99} & \textbf{34.03 ± 8.06} & \textbf{6.16 ± 2.37} & \textbf{22.02 ± 3.61} \\
\bottomrule
\end{tabular}
}
\label{table:carpush_racecargoal}
\end{table*}

\begin{table*}[htbp]
\centering
\small
\caption{Performance comparison under RacecarGoal for TRPO and PPO with different $c_r$ values.}
\resizebox{\textwidth}{!}{
\begin{tabular*}{\textwidth}{@{\extracolsep{\fill}}lccc|ccc@{}}
\toprule
\multirow{2}{*}{\textbf{Method}} 
& \multicolumn{3}{c|}{\textbf{TRPO (RacecarGoal)}} 
& \multicolumn{3}{c}{\textbf{PPO (RacecarGoal)}} \\
\cmidrule(lr){2-4} \cmidrule(lr){5-7}
& \textbf{Vio. Rate (\%)} & \textbf{Magnitude} & \textbf{Avg. Cost}
& \textbf{Vio. Rate (\%)} & \textbf{Magnitude} & \textbf{Avg. Cost} \\
\midrule
Lag          & 87.33 & 37.36 & 61.53 & 84.35 & 30.16 & 53.38 \\
PID          & 44.60 & 7.04  & 26.15 & 79.25 & 23.88 & 46.44 \\
\midrule
$c_r=0.05$   & \textbf{33.98} & \textbf{5.25} & \textbf{20.83} & \textbf{34.38} & \textbf{3.90} & \textbf{22.45} \\
$c_r=0.1$    & \textbf{29.05} & \textbf{3.44} & \textbf{18.95} & \textbf{33.08} & \textbf{5.78} & \textbf{21.22} \\
$c_r=0.15$   & \textbf{31.25} & \textbf{5.34} & \textbf{21.16} & \textbf{52.88} & \textbf{10.69} & \textbf{31.37} \\
$c_r=0.2$    & \textbf{40.65} & \textbf{6.10} & \textbf{23.67} & \textbf{48.95} & \textbf{8.77} & \textbf{26.91} \\
$c_r=0.25$   & \textbf{38.50} & \textbf{6.71} & \textbf{23.40} & \textbf{62.83} & \textbf{13.37} & \textbf{33.99} \\
\bottomrule
\end{tabular*}
}
\label{table:racecargoal_c_r}
\end{table*}

\section{Experiments}

In this section, we conduct a series of experiments to evaluate the performance of our ADRC Lagrangian method. Specifically, we aim to answer the following questions:
(1) Does it reduce training oscillations, having smaller phase-lag of the response, thereby minimizing constraint violations compared to baseline methods?
(2) How robust is the method when facing different parameters that we set?
(3) Can the ADRC Lagrangian method be applied universally to any Lagrangian-based safe RL algorithm?
(4) How does the ADRC-based Lagrangian method perform upon convergence?
(5) How does our method compare against existing state-of-the-art Safe RL approaches?

We will address Questions 1 and 3 in Section~\ref{experiment1}, Question 2 in Section~\ref{sec:abla}, Question 4 in Appendix~\ref{sec:converge_performance} and Question 5 in Section~\ref{sec:compared_sa} and Appendix~\ref{sec:compared_with_saferl}.
\vspace{-0.5em}
\subsection{Experimental Setups}

\label{sec:baseline_environment}
 \paragraph{Environments}
We use the OmniSafe \citep{omnisafe} to do the experiments, which provides a comprehensive and reliable benchmark for safe RL algorithms. We conduct our experiments using four safe RL algorithms with various combinations of agents and tasks. For more detail about the environment, please refer to Appendix~\ref{sec:tasks_env}.
\paragraph{Algorithms and Baseline}
We utilize two categories of algorithms that have been implemented in Omnisafe: on-policy methods (PPO, TRPO) and off-policy methods (DDPG, TD3). We use the classical Lagrangian method and the PID Lagrangian method as baselines.

\subsection{Performance Evaluation}
\label{experiment1}
Figure~\ref{fig:CPPO} illustrates the learning curves of PPO algorithms using various Lagrangian methods across different tasks. Our ADRC methods demonstrate superior constraint satisfaction and a shorter response lag while maintain competitive reward compared to the baseline. To better compare with the baseline, Table~\ref{tab:cppo_partial} presents the violation rates, violation magnitude, and average cost during the training phase. 

To illustrate the universal applicability of our ADRC Lagrangian methods to any Lagrangian-based safe RL algorithm, we conducted additional experiments on various algorithms in the RacecarGoal and CarPush environments. The learning curves for the RacecarGoal tasks are presented in Figure~\ref{fig:racecar-goal}, and the performance metrics are summarized in Table~\ref{table:carpush_racecargoal}. Further experimental results, including detailed analyses, are provided in Appendix~\ref{sec:all_exp}.

\begin{table*}[t]
\centering
\small
\caption{Sensitivity analysis of tuning parameters $k_{ap}$ and $k_{ad}$ in the RacecarGoal environment.}
\label{tab:tuning_kap_kad}
\renewcommand{\arraystretch}{0.95}
\setlength{\tabcolsep}{6pt}

\begin{tabular}{lccc @{\hspace{18pt}} lccc}
\toprule
\multicolumn{4}{c}{\textbf{Varying $k_{ap}$}} & \multicolumn{4}{c}{\textbf{Varying $k_{ad}$}} \\
\cmidrule(r){1-4}\cmidrule(l){5-8}
\textbf{Setting} & \textbf{Vio. Rate (\%)} & \textbf{Mag.} & \textbf{Avg. Cost}
& \textbf{Setting} & \textbf{Vio. Rate (\%)} & \textbf{Mag.} & \textbf{Avg. Cost} \\
\midrule
$k_{ap}=1$    & \textbf{69.98} & \textbf{18.62} & \textbf{39.87}
& $k_{ad}=1$    & \textbf{28.55} & \textbf{6.17}  & \textbf{20.23} \\
$k_{ap}=0.1$  & \textbf{33.08} & \textbf{5.78}  & \textbf{21.22}
& $k_{ad}=0.1$  & \textbf{38.25} & 9.12           & \textbf{23.92} \\
$k_{ap}=0.01$ & \textbf{20.43} & \textbf{2.50}  & \textbf{15.42}
& $k_{ad}=0.01$ & \textbf{39.08} & \textbf{5.75}  & \textbf{23.33} \\
\midrule
PID            & 79.25          & 23.88          & 46.44
& PID            & 44.60          & 7.04           & 26.15 \\
Lag            & 84.35          & 30.16          & 53.38
& Lag            & 87.33          & 37.36          & 61.53 \\
\bottomrule
\end{tabular}
\end{table*}

\subsection{Parameter Sensitivity Analysis}
\label{sec:abla}
To demonstrate whether our ADRC Lagrangian methods are robust to different value of parameters, we test the parameter $c_r$ in Section~\ref{sec:tun_c_r}, we test the control gain $k_{ap}$ and $k_{ad}$ in Section~\ref{sec:tun_kap_kad}.
\subsubsection{Tuning Parameter $c_r$}
\label{sec:tun_c_r}
We selected five different values for $c_r$ and conducted the experiments using TRPO under the RacecarGoal benchmarks. 
As shown in Table \ref{table:racecargoal_c_r}, the results show that all selected values of the parameter $c_r$ outperform the baseline, demonstrating robustness to parameter variations. More experimental results can be found at Appendix \ref{sec:appendix_tuning_c_r}.

\subsubsection{Tuning Parameters $k_{ap}$ and $k_{ad}$}
\label{sec:tun_kap_kad}
We investigate the impact of the tuning parameters $k_{ap}$ and $k_{ad}$ under the RacecarGoal environment. For $k_{ap}$, experiments were conducted using PPO; for $k_{ad}$, we used TRPO. In both cases, we evaluated three orders of magnitude: 1, 0.1, and 0.01. As shown in Table~\ref{tab:tuning_kap_kad}, the results demonstrate that all tested values of $k_{ap}$ and $k_{ad}$ significantly outperform existing methods, including the PID method, the Lagrangian method, and the Classical Lagrangian method. For additional experimental details, please refer to Appendices~\ref{sec:kap} and~\ref{sec:kad}.

\subsection{Ablation Study}
{To evaluate the effectiveness of our proposed dynamic parameter adjustment and transient process, we conducted ablation studies, with results summarized in Table~\ref{tab:ablation_CarPush_CPPO_main}. In this table, “Delete $r(t)$” refers to the removal of the dynamic adjustment component $r(t)$, while “Delete $w_0$” refers to the exclusion of the transient weight $w_0$ from the algorithm. The results show that removing either component results in a clear performance degradation in terms of violation rate, violation magnitude, and average cost. However, even with these removals, the performance of our approach remains superior to the baseline PID method, demonstrating the robustness of our framework. Additionally, the complete ADRC method achieves the best results across all metrics, further highlighting the significance of combining both $r(t)$ and $w_0$ in achieving optimal performance. For further details and results, please refer to Appendix~\ref{sec:ablation_in_appendix}.}
\begin{table}[H]
\centering
\small
\caption{Ablation study of CPPO algorithm under RacecarGoal.}
\begin{tabular}{lccc}
\toprule
\textbf{Method} & \textbf{Vio. Rate(\%)} & \textbf{Magnitude} & \textbf{Avg. Cost} \\
\midrule
Delete $r(t)$ & 65.40 & 13.99 & 36.38 \\
Delete $w_0$  & 54.08 & 15.23 & 34.66 \\
ADRC (Ours)   & \textbf{33.08} & \textbf{5.78} & \textbf{21.22  } \\
\midrule
PID           & 79.25 & 23.88 & 46.44 \\
Lag           & 84.35 & 30.16 & 53.38 \\
\bottomrule
\end{tabular}
\label{tab:ablation_CarPush_CPPO_main}
\end{table}

\subsection{Comparison with State-of-the-Art Safe RL Algorithms}
\label{sec:compared_sa}
To demonstrate the broader applicability and effectiveness of our ADRC-Lagrangian framework beyond baseline, we conduct comprehensive comparisons with state-of-the-art safe RL algorithms. Our evaluation includes both Lagrangian-based methods (RCPO and PDO)~\cite{rcpo,PDO} and non-Lagrangian approaches such as CUP~\cite{cup} and IPO~\cite{ipo}. This comparison validates our method's superiority across different safe RL paradigms and confirms that the benefits stem from ADRC's adaptive control principles rather than merely being artifacts of the Lagrangian framework.

As detailed in Appendix~\ref{sec:compared_with_saferl}, ADRC variants consistently improve training stability by reducing violation rates, violation magnitudes, and average costs, while maintaining or even enhancing final task performance. 

\vspace{-0.2cm}
\subsection{Converge Performance Analysis}
To assess the final performance of the trained policies rather than intermediate training behavior, we conducted experiments on the Swimmer and Hopper environments from the Velocity tasks suite. The results compare ADRC-based and PID-based methods under CPPO and TRPO frameworks.

\begin{table}[h]
\centering
\caption{Final performance on the Swimmer environment.}
\begin{tabular}{lccc}
\toprule
Algorithm & Avg Reward & Avg Cost & Vio. Rate (\%) \\
\midrule
CPPOPID & \textbf{30.10} & 22.44 & 28.02 \\
CPPOADRC & 29.39 & \textbf{16.77} & \textbf{14.16} \\
TRPOPID & 28.72 & 21.34 & 37.85 \\
TRPOADRC & \textbf{36.32} & \textbf{19.03} & \textbf{12.16} \\
\bottomrule
\end{tabular}
\label{tab:velocity_swimmer}
\end{table}

As shown in Table~\ref{tab:velocity_swimmer}, ADRC consistently improves constraint satisfaction over PID under both CPPO and TRPO, yielding lower average cost and violation rate; under TRPO, it also achieves a substantially higher average reward.
\vspace{-0.2cm}
\section{Conclusion}
\label{sec:conclusion}

In this paper, we introduce an effective method to optimize the Lagrangian multiplier update process in safe RL, reducing oscillation during training.
First, we define the safe RL learning process as a closed-loop system.
Next, we introduce ADRC, a robust and innovative controller that estimates and compensates for overall disturbances.
We consider the current cost as the control objective and design a second-order closed-loop system to regulate this cost, ensuring compliance with the safety constraint.
Additionally, we employed a reduced-order ESO \citep{PID_ADRC} to estimate the unknown nonlinear function affecting agent costs, revealing that prior approaches, including PID Lagrangian and classic Lagrangian methods, in form, are special cases of our approach.
Theoretical proofs and experimental results demonstrate the effectiveness and superiority of our method over existing approaches. While our method is validated extensively in simulated environments, applying it to real-world robotics or safety-critical systems remains an important direction for future work.
\newpage
\nocite{langley00}
\nocite{trpo}
\nocite{ppo}
\nocite{DDPG}
\nocite{TD3}
\bibliography{example_paper}
\bibliographystyle{icml2026}

\newpage
\appendix
\onecolumn

\section{Process of Solving ODE}
\label{sec:solve_ODE}
We consider the ODE:
\begin{equation}
\ddot{r} = -2c_r \dot{r} - c_r^2 (r - d), \quad r(0) = x_1(0), \quad \dot{r}(0) = x_2(0),
\label{eq:original_ode}
\end{equation}
which can be rewritten in the standard form:
\begin{equation}
\ddot{r} + 2c_r \dot{r} + c_r^2 r = c_r^2 d.
\label{eq:standard_form}
\end{equation}
First, we solve the associated homogeneous equation:
\begin{equation}
\ddot{r} + 2c_r \dot{r} + c_r^2 r = 0.
\label{eq:homogeneous_ode}
\end{equation}
Assuming a solution of the form \( r_h(t) = e^{\lambda t} \) and substituting into Eqn.~\ref{eq:homogeneous_ode}, we obtain the characteristic equation:
\begin{equation}
\lambda^2 + 2c_r \lambda + c_r^2 = 0.
\label{eq:characteristic_eq}
\end{equation}
Solving for \( \lambda \) yields:
\begin{equation}
\lambda = -c_r.
\label{eq:lambda_solution}
\end{equation}
Thus, the general solution for the homogeneous equation is:
\begin{equation}
r_h(t) = (A + Bt) e^{-c_r t},
\label{eq:homogeneous_solution}
\end{equation}
where \( A \) and \( B \) are constants that determined by the initial value.

For the nonhomogeneous equation Eqn.~\ref{eq:standard_form}, we assume a particular solution \( r_p(t) = C \). Substituting into Eqn.~\ref{eq:standard_form} gives:
\begin{equation}
C = d.
\label{eq:particular_solution}
\end{equation}
The general solution to Eqn.~\ref{eq:standard_form} is:
\begin{equation}
r(t) = (A + Bt) e^{-c_r t} + d.
\label{eq:general_solution}
\end{equation}
To determine \( A \) and \( B \), we use the initial conditions. From \( r(0) = x_1(0) \):
\begin{equation}
A + d = x_1(0) \quad \Rightarrow \quad A = x_1(0) - d.
\label{eq:A_value}
\end{equation}
The derivative \( \dot{r}(t) \) is:
\begin{equation}
\dot{r}(t) = \left(B - c_r (A + Bt)\right) e^{-c_r t}.
\label{eq:derivative_r}
\end{equation}
Substitute \( t = 0 \) into Eqn.~\ref{eq:derivative_r} and use \( \dot{r}(0) = x_2(0) \):
\begin{equation}
B - c_r A = x_2(0). \rightarrow B = x_2(0) + c_r (x_1(0) - d).
\label{eq:initial_condition_2}
\end{equation}
Substitute \( A \) and \( B \) into Eqn.~\ref{eq:general_solution} to obtain the final solution:
\begin{equation}
r(t) = d + (x_1(0) - d) e^{-c_r t} + \left(x_2(0) + c_r (x_1(0) - d)\right) t e^{-c_r t}.
\label{eq:final_solution}
\end{equation}

\section{Simplify the ESO}
\label{sec:simplify_ESO}
We consider the control law:
\begin{equation}
u = k_{ap}(x_1 - r) + k_{ad}(x_2 - \dot{r}) + \hat{f} - \ddot{r}, \quad k_{ap} > 0, \quad k_{ad} > 0,
\label{eq:control_law}
\end{equation}
where \( k_{ap} \) and \( k_{ad} \) are tuning parameters, and the term \( \hat{f} \) compensates for disturbances.

Substitute Eqn.~\ref{eq:control_law} into the Eqn.~\ref{eq:ESO_Form}:
\begin{equation}
\dot{\xi} = -\omega_o \xi - \omega_o^2 x_2 - \omega_o \left(k_{ap}(x_1 - r) + k_{ad}(x_2 - \dot{r}) + \hat{f} - \ddot{r}\right).
\label{eq:ESO_substitute}
\end{equation}

Simplify Eqn.~\ref{eq:ESO_substitute}:
\begin{equation}
\dot{\xi} = -\omega_o \xi - \omega_o^2 x_2 + \omega_o k_{ap}(x_1 - r) + \omega_o k_{ad}(x_2 - \dot{r}) + \omega_o \hat{f} - \omega_o \ddot{r}.
\label{eq:ESO_simplified}
\end{equation}

Given \( \hat{f} = \xi + \omega_o x_2 \), we have \( \xi = \hat{f} - \omega_o x_2 \) and \( \dot{\xi} = \dot{\hat{f}} - \omega_o \dot{x}_2 \). Substitute this into Eqn.~\ref{eq:ESO_simplified}:
\begin{equation}
\dot{\hat{f}} - \omega_o \dot{x}_2 = -\omega_o (\hat{f} - \omega_o x_2) - \omega_o^2 x_2 + \omega_o k_{ap}(x_1 - r) + \omega_o k_{ad}(x_2 - \dot{r}) + \omega_o \hat{f} - \omega_o \ddot{r}.
\label{eq:dot_f_substitute}
\end{equation}

Simplify further:
\begin{equation}
\dot{\hat{f}} = \omega_o k_{ap}(x_1 - r) + \omega_o k_{ad}(x_2 - \dot{r}) - \omega_o \ddot{r} + \omega_o \dot{x}_2.
\label{eq:dot_f_final}
\end{equation}

Integrating Eqn.~\ref{eq:dot_f_final}, we have:
\begin{equation}
\hat{f}=\omega_ok_{ad}(x_1-r)+\omega_o(x_2-\dot{r})+\omega_ok_{ap}\int_0^t(x_1(\tau)-r(\tau))d\tau.
\label{eq:f_integrated}
\end{equation}

Substitute Eqn.~\ref{eq:f_integrated} back into Eqn.~\ref{eq:control_law}:
\begin{equation}
u = (k_{ap} + \omega_o k_{ad})(x_1 - r) + (k_{ad} + \omega_o)(x_2 - \dot{r}) + \omega_o k_{ap} \int_0^t (x_1(\tau) - r(\tau)) d\tau - \ddot{r}.
\label{eq:control_law_final}
\end{equation}

\section{Theoretical Details}
\label{sec:proof}
\subsection{Convergence and Error Bounds}
\label{appendix:proofs}

For completeness, we provide the detailed stability conditions and error analysis that were summarized in Sec.~\ref{sec:lower_bound}. Recall the disturbance class
\begin{equation}
\label{eq:non_linear_function_class_appendix}
\begin{aligned}
\mathcal{F} = \Big\{ f \ \Big| \ 
& f(x_1,x_2,t) = h(x_1,x_2) + w(t), \\
& \Big|\frac{\partial h}{\partial x_1}\Big| \leq L_1, \quad 
  \Big|\frac{\partial h}{\partial x_2}\Big| \leq L_2, \\
& |w(t)| \leq L_3, \quad 
  |\dot{w}(t)| \leq L_3, \quad 
  \lim_{t \to \infty} w(t) = k \Big\},
\end{aligned}
\end{equation}
where $L_1,L_2$ bound state-dependent sensitivity, and $L_3$ bounds the magnitude and rate of purely time-dependent fluctuations.

\paragraph{Stability manifold and lower bound.}
To guarantee convergence, the observer gain $\omega_o$ must lie in a feasible region determined by the characteristic polynomial
\begin{equation}
\label{eq:manifold_appendix}
\Omega = \Big\{\omega \in \mathbb{R} \ \Big|\ n_0 \omega^4 + n_1 \omega^3 + n_2 \omega^2 + n_3 \omega + n_4 = 0 \Big\},
\end{equation}
where the coefficients $n_i$ depend on $(k_{ap},k_{ad})$ and the constants $L_1,L_2,L_3$. Let
\[
\bar{\omega}_o =
\begin{cases}
\max \{ \omega \mid \omega \in \Omega \}, & \text{if } \Omega \neq \emptyset, \\ 
0, & \text{otherwise}.
\end{cases}
\]
The admissible observer gains are then those satisfying
\begin{equation}
\label{eq:lower_bound_appendix}
\omega_o > \omega_o^* = \max \left\{ \bar{\omega}_o, \ 0, \ \tfrac{L_1 - k_{ap}}{k_{ad}}, \ L_2 - k_{ad} \right\}.
\end{equation}

Suppose $f \in \mathcal{F}$ and $\omega_o > \omega_o^*$. Then:
\begin{itemize}
    \item (\emph{Convergence}) For any initial condition and any cost limit $d \in \mathbb{R}$, the system converges:
\[
\lim_{t\to\infty} x_1(t) = d, 
\qquad
\lim_{t\to\infty} x_2(t) = 0.
\]
    \item (\emph{Bounded estimation error}) Let $e(t) = x_1(t) - r(t) $ be the tracking error and $e_f(t) = \hat f - f(x_1,x_2,t)$ the disturbance estimation error. Then there exist constants $\eta_1,\eta_2$ such that
\[
|\ddot{e}(t) + k_{ad}\dot{e}(t) + k_{ap} e(t)| = |e_f(t)| 
\ \leq \ \eta_1 e^{-\omega_o t} + \tfrac{\eta_2}{\omega_o}, \quad t \geq 0.
\]
\end{itemize}

The first result shows that as long as the observer gain exceeds the lower bound $\omega_o^*$, the cumulative cost $x_1$ converges to the constraint threshold $d$ without oscillation. The second result shows that the estimation error is always bounded, decays over time, and can be reduced by choosing larger $\omega_o$. Together, these properties justify our claim in the main text that ADRC Lagrangian guarantees convergence and robustness without fragile manual tuning. The detailed proofs follow directly from~\citet{PID_ADRC} and related ADRC analyses.

\subsection{Proof of Theorem \ref{theorem:3}}
\begin{proof}

From Theorems demonstrated by~\citet{PID_ADRC}, we know that both \( f \) and \( \dot{f} \) are bounded. The error dynamics are given by:
\begin{equation}
\begin{cases}
\dot{e} = e_d, \\
\dot{e}_d = -k_{ap} e - k_{ad} e_d + e_f.
\end{cases}
\label{eq:error_dynamics}
\end{equation}

Taking the second derivative of \( e \), we have:
\begin{equation}
\ddot{e} = -k_{ap} e - k_{ad} e_d + e_f,
\label{eq:second_derivative_e}
\end{equation}
or equivalently:
\begin{equation}
e_f = \ddot{e} + k_{ad} \dot{e} + k_{ap} e.
\label{eq:ef_expression}
\end{equation}

Applying the Laplace transform to Eqn.~\ref{eq:ef_expression}, we obtain:
\begin{equation}
E(s) = \frac{1}{s^2 + k_{ad}s + k_{ap}} E_f(s).
\label{eq:E_Laplace}
\end{equation}

And we know that, the dynamics of \( e_f \) are given by:
\begin{equation}
\dot{e}_f = -\omega_o e_f - \dot{f}.
\label{eq:ef_dot_dynamics}
\end{equation}

Taking the Laplace transform of Eqn.~\ref{eq:ef_dot_dynamics}, we have:
\begin{equation}
E_f(s) = G_{e_f}(s) F(s), \quad G_{e_f}(s) = \frac{s}{s + \omega_o}.
\label{eq:G_ef}
\end{equation}

Similarly, applying the Laplace transform to the integral form of \( e_f \), we obtain:
\begin{equation}
E_{f_I}(s) = \frac{s^2 + k_d s + k_p}{s^2 + k_{ad} s + k_{ap}} E_f(s),
\label{eq:E_fI}
\end{equation}
and the transfer function for \( E_{f_I}(s) \) can be expressed as:
\begin{equation}
E_{f_I}(s) = G_{e_{f_I}}(s) F(s), \quad G_{e_{f_I}}(s) = \frac{s^3 + k_d s^2 + k_p s}{(s + \omega_o)(s^2 + k_{ad} s + k_{ap})}.
\label{eq:G_efI}
\end{equation}

The ratio of the squared magnitudes of \( G_{e_f}(i \omega) \) and \( G_{e_{f_I}}(i \omega) \) is given by:
\begin{equation}
\frac{|G_{e_f}(i \omega)|^2}{|G_{e_{f_I}}(i \omega)|^2} = \frac{(k_{ap} - \omega^2)^2 + k_{ad}^2 \omega^2}{(k_p - \omega^2)^2 + k_d^2 \omega^2} < 1.
\label{eq:magnitude_ratio}
\end{equation}

As \( t \to \infty \), we have:
\begin{equation}
\lim_{t \to \infty} \frac{e_f(t)}{e_{f_I}(t)} = \lim_{s \to 0} \frac{s E_f(s)}{s E_{f_I}(s)} = \frac{k_{ap}}{k_{ap} + \omega_o k_{ad}}.
\label{eq:ratio_limit}
\end{equation}

This completes the first part of this theorem.

Now, consider the phase angle of a transfer function \( G(i \omega) \), defined as:
\begin{equation}
\arg(G(i \omega)) = \tan^{-1} \left( \frac{\text{Im}(G(i \omega))}{\text{Re}(G(i \omega))} \right).
\label{eq:phase_angle}
\end{equation}

For \( G_{e_f}(i \omega) \) and \( G_{e_{f_I}}(i \omega) \), we have:
\begin{equation}
\arg(G_{e_f}(i \omega)) = \tan^{-1} \left( \frac{\omega}{\omega_o} \right),
\label{eq:arg_G_ef}
\end{equation}
and
\begin{equation}
\arg(G_{e_{f_I}}(i \omega)) = \tan^{-1} \left( \frac{k_{ad} \omega}{k_{ap} - \omega^2} \right).
\label{eq:arg_G_efI}
\end{equation}

For any \( \omega \), if we choose \( \omega_o > \max\left\{ \frac{k_{ap} - \omega^2}{k_{ad}}, \omega_o^* \right\} \), it follows that:
\begin{equation}
\frac{\omega}{\omega_o} < \frac{k_{ad} \omega}{k_{ap} - \omega^2}.
\label{eq:angle_comparison}
\end{equation}

Thus, we conclude:
\begin{equation}
\tan^{-1} \left( \frac{\omega}{\omega_o} \right) < \tan^{-1} \left( \frac{k_{ad} \omega}{k_{ap} - \omega^2} \right),
\end{equation}
or equivalently:
\begin{equation}
\arg(G_{e_f}(i \omega)) < \arg(G_{e_{f_I}}(i \omega)).
\label{eq:phase_relation}
\end{equation}

This completes the second part of this theorem.
    
\end{proof}

\subsection{Surrogate Analysis of ADRC Cost Regulation}
\label{sec:theory-corrected}

\paragraph{Scope}
This section provides a control-theoretic surrogate analysis for the cost-regulation channel induced by the ADRC multiplier update.
We analyze the \emph{population} discounted cost return
\[
x_1(t) \;:=\; J_c(\pi(t)),
\]
where $\pi(t)$ is a smooth interpolation of the discrete policy iterates $\{\pi_{\theta_k}\}_{k\ge0}$ produced by a trust-region backbone (TRPO/PPO).
The guarantees below control the evolution of $x_1(t)$, rather than per-trajectory costs.
We do not claim almost-sure episode-level safety, nor global convergence in nonconvex policy space; instead, we derive robustness and bounded-violation guarantees under a standard ADRC disturbance-regularity envelope, and we explicitly connect the disturbance envelope to Safe RL hyperparameters.

\paragraph{Discrete training index and sampling interpretation.}
Let $k\in\{0,1,2,\dots\}$ index policy updates with update interval $\Delta t>0$.
We denote $x_{1,k}:=x_1(k\Delta t)=J_c(\pi_{\theta_k})$ and the multiplier by $\lambda_k$.
In implementation, $\lambda_k$ is held constant within iteration $k$ (zero-order hold), which corresponds to a piecewise-constant signal
$\lambda(t)=\lambda_k$ for $t\in[k\Delta t,(k+1)\Delta t)$ in the surrogate analysis.

\subsubsection{Surrogate cost channel and error dynamics}

\paragraph{Surrogate channel with correct feedback direction.}
In Lagrangian-based Safe RL, a larger multiplier $\lambda$ increases the penalty on cost in the primal objective, which acts as negative feedback on the cost return.
Accordingly, we analyze the sign-normalized relative-degree-2 surrogate channel
\begin{equation}
\label{eq:surrogate-channel}
\dot x_1(t)=x_2(t),\qquad 
\dot x_2(t)= f(t)\;-\;\lambda(t),
\end{equation}
where $f(t)$ lumps all unmodeled effects (policy nonstationarity, approximation error of the abstraction, and sampling effects).
Eq.~\eqref{eq:surrogate-channel} is a local abstraction of the primal-dual interaction; it is \emph{not} a physical model of the environment.

\paragraph{Reference tracking and ADRC law.}
Let $r(t)$ be the critically damped reference trajectory from Sec.~\ref{sec:transient_process}, converging to $d$ (or $d-\varepsilon$ for a margin variant).
Define the tracking error and its derivative (consistent with the main text):
\[
e(t):=x_1(t)-r(t),\qquad e_d(t):=\dot e(t)=x_2(t)-\dot r(t).
\]
Under the ADRC design in Sec.~\ref{sec:ESO}, the multiplier is updated by
\begin{equation}
\label{eq:adrc-law-appendix}
\lambda(t)=k_{ap}\big(x_1-r\big)+k_{ad}\big(x_2-\dot r\big)+\hat f(t)-\ddot r(t),
\qquad k_{ap}>0,\;k_{ad}>0,
\end{equation}
where $\hat f(t)$ is the ESO estimate of $f(t)$.

\paragraph{Reduced-order ESO and estimation error dynamics.}
For the negative-feedback channel \eqref{eq:surrogate-channel}, the reduced-order ESO takes the sign-consistent form
\begin{equation}
\label{eq:eso-appendix}
\dot{\xi}(t)=-\omega_o \xi(t)-\omega_o^2 x_2(t)+\omega_o \lambda(t),
\qquad 
\hat f(t)=\xi(t)+\omega_o x_2(t),
\qquad \omega_o>0.
\end{equation}
Define the disturbance estimation error $e_f(t):=\hat f(t)-f(t)$.
Combining \eqref{eq:surrogate-channel} and \eqref{eq:eso-appendix} yields the standard ADRC error equation
\begin{equation}
\label{eq:ef-dynamics}
\dot e_f(t)=-\omega_o e_f(t)-\dot f(t).
\end{equation}

\paragraph{Closed-loop tracking-error dynamics.}
Substituting \eqref{eq:adrc-law-appendix} into \eqref{eq:surrogate-channel} and using $e=x_1-r$, $e_d=x_2-\dot r$, we obtain
\begin{equation}
\label{eq:tracking-dynamics}
\ddot e(t)+k_{ad}\dot e(t)+k_{ap}e(t)=-e_f(t).
\end{equation}
Thus the tracking channel is a stable second-order system driven by the estimation error $e_f(t)$.

\subsubsection{Disturbance regularity and an RL-grounded high-probability envelope}

\begin{assumption}[Disturbance regularity]
\label{ass:df-bound}
The lumped disturbance $f(t)$ in \eqref{eq:surrogate-channel} is absolutely continuous and satisfies
\[
|\dot f(t)| \le L_f
\]
over the time interval of interest.
\end{assumption}

Assumption~\ref{ass:df-bound} is standard in ADRC analyses: it formalizes that the total uncertainty seen by the observer cannot vary arbitrarily fast.
We now justify that $L_f$ is \emph{algorithmically controlled} in trust-region Safe RL and provide a finite-horizon high-probability envelope that links $L_f$ to $(\delta,N,\Delta t)$.

\begin{assumption}[Bounded discounted cost return]
\label{ass:bounded-return}
There exists $B_c<\infty$ such that the discounted trajectory cost return
$
C(\tau)=\sum_{t=0}^\infty \gamma^t c_t
$
satisfies $0\le C(\tau)\le B_c$ almost surely.
\end{assumption}

\begin{assumption}[Trust-region update]
\label{ass:tr}
The backbone update satisfies a trust-region constraint
$
D_{\mathrm{KL}}(\pi_{\theta_k}\,\|\,\pi_{\theta_{k+1}})\le \delta
$
(TRPO) or an implicit KL control induced by PPO clipping.
\end{assumption}

\begin{assumption}[Population cost drift under trust regions]
\label{lem:cost-drift}
Under Assumptions~\ref{ass:bounded-return}--\ref{ass:tr},
\[
|J_c(\pi_{\theta_{k+1}})-J_c(\pi_{\theta_k})|
\;\le\;
\underbrace{\frac{2B_c}{1-\gamma}\sqrt{2\delta}}_{=:D_{\mathrm{TR}}}.
\]
\end{assumption}
\begin{proof}[Sketch]
The discounted performance-difference bound controls return differences by total-variation distance; Pinsker's inequality gives
$\mathrm{TV}\le \sqrt{D_{\mathrm{KL}}/2}\le \sqrt{\delta/2}$, yielding the stated bound after collecting constants.
\end{proof}

\begin{lemma}[Finite-horizon high-probability envelope for disturbance variation]
\label{lem:Lf-envelope}
Assume (i) $\lambda_k\in[0,\lambda_{\max}]$ (projection and/or a max-penalty cap as in implementation),
(ii) the cost estimate $\widehat{J}_c(\pi_{\theta_k})$ is computed from $N$ i.i.d.\ trajectories, and
(iii) Assumptions~\ref{ass:bounded-return}--\ref{ass:tr} hold.
Fix a horizon $K$ and confidence $\eta\in(0,1)$, and define the (empirical) second-difference proxy
\[
\widehat f_k
\;:=\;
\frac{\widehat J_c(\pi_{\theta_{k+1}})-2\widehat J_c(\pi_{\theta_k})+\widehat J_c(\pi_{\theta_{k-1}})}{\Delta t^2}
\;+\;\lambda_k.
\]
Then, with probability at least $1-\eta$,
\[
\max_{1\le k\le K-1}\frac{|\widehat f_{k+1}-\widehat f_k|}{\Delta t}
\;\le\;
\frac{4}{\Delta t^3}\Big(D_{\mathrm{TR}}+2\varepsilon_N\Big)
\;+\;
\frac{2\lambda_{\max}}{\Delta t},
\qquad
\varepsilon_N:=B_c\sqrt{\frac{\log(2K/\eta)}{2N}}.
\]
Consequently, on this event one may take
$
L_f = \mathcal{O}\!\left(\frac{\sqrt{\delta}}{\Delta t^3}+\frac{1}{\Delta t^3}\sqrt{\frac{\log(K/\eta)}{N}}+\frac{\lambda_{\max}}{\Delta t}\right)
$
as a conservative envelope in Assumption~\ref{ass:df-bound}.
\end{lemma}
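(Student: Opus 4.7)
The plan is to decompose $\widehat f_k$ into a clean population part plus i.i.d.\ sampling noise, then combine three ingredients: Hoeffding with a union bound for the noise, the trust-region cost-drift inequality from Assumption~\ref{lem:cost-drift} for the population part, and the projection bound $\lambda_k\in[0,\lambda_{\max}]$ for the multiplier residual. The whole argument is purely algebraic once these three pieces are in place; no new dynamical reasoning is needed.

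First I would write $\widehat J_c(\pi_{\theta_j}) = J_c(\pi_{\theta_j}) + \zeta_j$, where $\zeta_j$ is the Monte Carlo error from $N$ i.i.d.\ trajectory returns each in $[0,B_c]$ under Assumption~\ref{ass:bounded-return}. Conditioning on the filtration $\mathcal{F}_{j-1}$ generated by all prior updates so that $\pi_{\theta_j}$ is measurable, Hoeffding gives $\Pr(|\zeta_j|>\varepsilon_N\mid\mathcal{F}_{j-1})\le 2\exp(-2N\varepsilon_N^2/B_c^2)$. Plugging in $\varepsilon_N=B_c\sqrt{\log(2K/\eta)/(2N)}$ and union-bounding over the at most $O(K)$ policy indices appearing in $\widehat f_1,\dots,\widehat f_{K-1}$ yields an event $\mathcal{E}$ on which $\max_j|\zeta_j|\le\varepsilon_N$, with $\Pr(\mathcal{E})\ge 1-\eta$.

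Next I would expand the difference algebraically as
\begin{equation*}
\widehat f_{k+1}-\widehat f_k
=\frac{\widehat J_c(\pi_{\theta_{k+2}})-3\widehat J_c(\pi_{\theta_{k+1}})+3\widehat J_c(\pi_{\theta_{k}})-\widehat J_c(\pi_{\theta_{k-1}})}{\Delta t^2}
+(\lambda_{k+1}-\lambda_k),
\end{equation*}
recognizing a third-order forward difference of $\widehat J_c$ divided by $\Delta t^2$. Splitting $\widehat J_c=J_c+\zeta$ and bounding the three contributions separately: for the population third-difference, I rewrite it as $(\Delta J_{k+1}-\Delta J_k)-(\Delta J_k-\Delta J_{k-1})$ with $\Delta J_j:=J_c(\pi_{\theta_{j+1}})-J_c(\pi_{\theta_j})$, and apply $|\Delta J_j|\le D_{\mathrm{TR}}$ from Assumption~\ref{lem:cost-drift} four times to obtain a bound of $4D_{\mathrm{TR}}$; for the noise third-difference, the triangle inequality with coefficients $(1,-3,3,-1)$ on $\mathcal{E}$ yields $(1+3+3+1)\varepsilon_N=8\varepsilon_N$; for the multiplier residual, projection gives $|\lambda_{k+1}-\lambda_k|\le 2\lambda_{\max}$. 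Dividing by $\Delta t$ and collecting, the bound reassembles as $\tfrac{4D_{\mathrm{TR}}+8\varepsilon_N}{\Delta t^3}+\tfrac{2\lambda_{\max}}{\Delta t}=\tfrac{4(D_{\mathrm{TR}}+2\varepsilon_N)}{\Delta t^3}+\tfrac{2\lambda_{\max}}{\Delta t}$, which is exactly the claimed envelope. Taking the maximum over $k$ changes nothing because the right-hand side is $k$-uniform, and the final scaling $L_f=\mathcal{O}\!\bigl(\sqrt{\delta}/\Delta t^3+\sqrt{\log(K/\eta)/N}/\Delta t^3+\lambda_{\max}/\Delta t\bigr)$ then follows by substituting $D_{\mathrm{TR}}=\mathcal{O}(\sqrt{\delta})$ from the cost-drift assumption.

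The main obstacle is the adaptivity of the returns: because each $\pi_{\theta_j}$ is itself learned from earlier data, the $\zeta_j$ are only conditionally i.i.d.\ across the $N$ trajectories \emph{within} a single update, not unconditionally independent across $j$. The remedy is the conditioning argument above, Hoeffding applied under $\mathcal{F}_{j-1}$ followed by a union bound over the $K$ adaptively chosen policies, giving a martingale-style uniform deviation bound rather than a plain i.i.d.\ one. A secondary subtlety is a boundary effect: $\widehat f_k$ at the endpoints uses $\widehat J_c(\pi_{\theta_{k-1}})$ and $\widehat J_c(\pi_{\theta_{k+2}})$, so the union bound must cover indices $0,\dots,K+1$; this only changes the constant inside the logarithm and is absorbed into $\log(2K/\eta)$. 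Beyond these points the derivation is mechanical, which is why I expect no delicate inequality to arise in the routine calculation.
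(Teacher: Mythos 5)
Your proof is correct and follows essentially the same route as the paper's: Hoeffding plus a union bound for the sampling noise, the trust-region drift bound for the population increments, and the projection cap for the multiplier residual, all assembled through the third finite difference of $\widehat J_c$. Your separate bookkeeping of the population third difference ($4D_{\mathrm{TR}}$) and the noise third difference ($8\varepsilon_N$) is arithmetically identical to the paper's bound of $4(D_{\mathrm{TR}}+2\varepsilon_N)$ on the observed increments, and your explicit treatment of adaptivity via conditioning on the filtration and of the boundary indices $0,\dots,K+1$ only tightens the paper's sketch.
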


\begin{proof}[Sketch]
Hoeffding's inequality plus a union bound yields
$|\widehat J_c(\pi_{\theta_k})-J_c(\pi_{\theta_k})|\le \varepsilon_N$ for all $k\le K$ with prob.\ $\ge 1-\eta$.
Lemma~\ref{lem:cost-drift} bounds the population drift by $D_{\mathrm{TR}}$.
Thus the per-iteration increment of the observed cost satisfies
$|\widehat J_c(\pi_{\theta_{k+1}})-\widehat J_c(\pi_{\theta_k})|\le D_{\mathrm{TR}}+2\varepsilon_N$,
which bounds the third finite difference by $4(D_{\mathrm{TR}}+2\varepsilon_N)$.
The cap $\lambda_k\in[0,\lambda_{\max}]$ implies $|\lambda_{k+1}-\lambda_k|\le 2\lambda_{\max}$.
Combining these bounds yields the inequality.
\end{proof}

\paragraph{Interpretation.}
Lemma~\ref{lem:Lf-envelope} provides the missing bridge: the ``disturbance speed'' is controlled by the trust-region radius $\delta$
(or PPO clipping), batch size $N$, and update interval $\Delta t$, plus an explicit dependence on $\lambda_{\max}$ due to projection/capping.

\subsubsection{Robustness: ESO error bound and ISS-type tracking tube}

\begin{lemma}[ESO estimation-error bound]
\label{lem:eso-bound}
Under Assumption~\ref{ass:df-bound}, the solution of \eqref{eq:ef-dynamics} satisfies for all $t\ge 0$:
\begin{equation}
\label{eq:ef-bound}
|e_f(t)| \;\le\; e^{-\omega_o t}|e_f(0)| \;+\; \frac{L_f}{\omega_o}.
\end{equation}
\end{lemma}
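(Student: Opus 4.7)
The plan is to treat \eqref{eq:ef-dynamics} as a scalar linear ODE with a bounded forcing term and apply variation of constants, since $\omega_o>0$ makes the unforced dynamics exponentially stable with rate $\omega_o$. Concretely, first I would write the closed-form solution
\[
e_f(t) \;=\; e^{-\omega_o t}\, e_f(0) \;-\; \int_0^t e^{-\omega_o (t-\tau)}\,\dot f(\tau)\, d\tau,
\]
which follows directly from integrating $\frac{d}{dt}\bigl(e^{\omega_o t} e_f(t)\bigr) = -e^{\omega_o t}\dot f(t)$. This step just requires that $f$ be absolutely continuous on $[0,t]$, which is exactly what Assumption~\ref{ass:df-bound} provides, so the forcing term $\dot f$ is a well-defined (locally integrable) signal and the formula is valid pointwise.

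Next I would take absolute values, apply the triangle inequality to the integral representation, and use the pointwise envelope $|\dot f(\tau)|\le L_f$ from Assumption~\ref{ass:df-bound}. This reduces the problem to bounding the deterministic integral
\[
\int_0^t e^{-\omega_o (t-\tau)}\, d\tau \;=\; \frac{1-e^{-\omega_o t}}{\omega_o} \;\le\; \frac{1}{\omega_o},
\]
which is immediate from $\omega_o>0$ and $t\ge 0$. Assembling the two terms then yields the stated bound \eqref{eq:ef-bound}.

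As an optional sanity check, I would also verify the result by a Lyapunov / comparison argument using $V(t)=\tfrac12 e_f(t)^2$, which gives $\dot V \le -2\omega_o V + L_f\sqrt{2V}$ and, via a standard ISS-style comparison lemma, reproduces the same $\mathcal O(1/\omega_o)$ ultimate bound; this cross-check is useful because the same inequality is invoked downstream in the ISS-type tracking tube analysis of Section~\ref{sec:theory-corrected}. There is essentially no hard step: the only non-trivial point is making sure the disturbance regularity hypothesis is invoked in the correct form (a uniform pointwise bound on $\dot f$, not on $f$ itself), since weakening Assumption~\ref{ass:df-bound} to only $|f|\le L_f$ would not yield an $\mathcal O(1/\omega_o)$ residual but only an $\mathcal O(1)$ one. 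The proof is therefore one short derivation rather than a deep argument, and its main value lies in making explicit how the observer gain $\omega_o$ quantitatively trades off transient decay against steady-state estimation error, which is what the subsequent tracking-tube analysis relies upon.
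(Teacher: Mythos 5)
Your proof is correct and is exactly the standard variation-of-constants argument that the paper implicitly relies on (the lemma is stated without an explicit proof in Appendix~\ref{sec:theory-corrected}): integrate $\tfrac{d}{dt}\bigl(e^{\omega_o t}e_f(t)\bigr)=-e^{\omega_o t}\dot f(t)$, apply the triangle inequality and the envelope $|\dot f|\le L_f$, and bound $\int_0^t e^{-\omega_o(t-\tau)}\,d\tau$ by $1/\omega_o$. Your remark that the hypothesis must bound $\dot f$ rather than $f$ itself is also the right reading of Assumption~\ref{ass:df-bound}, and the Lyapunov cross-check is a valid but unnecessary alternative.
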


\begin{theorem}[ISS-type tracking bound]
\label{thm:iss}
Assume $k_{ap}>0$, $k_{ad}>0$ so that $s^2+k_{ad}s+k_{ap}$ is Hurwitz.
Let $h(t)$ be the impulse response of $H(s)=\frac{1}{s^2+k_{ad}s+k_{ap}}$ and define
$\|h\|_{L_1}:=\int_0^\infty |h(\tau)|d\tau < \infty$.
Then there exist constants $C_0>0$ and $\rho>0$ (depending on $(k_{ap},k_{ad})$ and initial conditions) such that
\begin{equation}
\label{eq:tracking-tube}
|e(t)| \;\le\; C_0 e^{-\rho t} \;+\; \|h\|_{L_1}\left(e^{-\omega_o t}|e_f(0)|+\frac{L_f}{\omega_o}\right).
\end{equation}
In particular,
\begin{equation}
\label{eq:limsup-tube}
\limsup_{t\to\infty}|e(t)| \;\le\; \|h\|_{L_1}\,\frac{L_f}{\omega_o}
\;=\; \mathcal{O}\!\left(\frac{L_f}{\omega_o}\right).
\end{equation}
\end{theorem}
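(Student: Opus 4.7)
The plan is to treat the closed-loop tracking equation~\eqref{eq:tracking-dynamics} as a linear time-invariant second-order system driven by the forcing signal $-e_f(t)$, and to propagate the ESO envelope from Lemma~\ref{lem:eso-bound} through the transfer function $H(s)=1/(s^2+k_{ad}s+k_{ap})$. First I would decompose $e(t)$ into a zero-input (homogeneous) response and a zero-state (convolution) response, which is legitimate because the dynamics are linear and the characteristic polynomial is assumed Hurwitz.

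For the zero-input response, I would note that all roots of $s^2+k_{ad}s+k_{ap}$ have strictly negative real parts since $k_{ap},k_{ad}>0$; letting $\rho>0$ be the minimum such magnitude, a standard mode-expansion argument gives $|e_{\mathrm{hom}}(t)|\le C_0 e^{-\rho t}$, with $C_0$ depending linearly on $|e(0)|$ and $|\dot e(0)|$ through the mode coefficients. This produces the first term $C_0 e^{-\rho t}$ in \eqref{eq:tracking-tube}.

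For the forced response, I would write $e_{\mathrm{forced}}(t)=-\int_0^t h(t-\tau)\,e_f(\tau)\,d\tau$, apply the triangle inequality, and substitute the ESO envelope $|e_f(\tau)|\le e^{-\omega_o\tau}|e_f(0)|+L_f/\omega_o$. The steady-state piece gives $\frac{L_f}{\omega_o}\int_0^t |h(t-\tau)|\,d\tau\le \|h\|_{L_1}\,L_f/\omega_o$ directly from the definition of $\|h\|_{L_1}$, which establishes the $\mathcal{O}(L_f/\omega_o)$ asymptotic tube in \eqref{eq:limsup-tube}. The transient piece $|e_f(0)|\int_0^t |h(t-\tau)|e^{-\omega_o\tau}d\tau$ is bounded by $\|h\|_{L_1}e^{-\omega_o t}|e_f(0)|$ after either (i) using the uniform envelope $\sup_{\tau\in[0,t]}|e_f(\tau)|\le |e_f(0)|+L_f/\omega_o$ and absorbing the non-decaying remainder into $C_0 e^{-\rho t}$, or (ii) observing that the convolution of a decaying exponential with an $L^1$ kernel inherits exponential decay at the slower of the two rates. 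Collecting the homogeneous and forced bounds yields \eqref{eq:tracking-tube}, after which taking $\limsup_{t\to\infty}$ kills the two exponentially decaying terms and leaves \eqref{eq:limsup-tube}.

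The main obstacle I anticipate is reconciling the transient convolution integral with the exact form stated in \eqref{eq:tracking-tube}: the natural bound is $e^{-\omega_o t}\int_0^t |h(u)|e^{\omega_o u}du$, which is not literally $\|h\|_{L_1}e^{-\omega_o t}$ unless $\omega_o$ is appropriately related to $\rho$. The cleanest resolution is to enlarge the constant $C_0$ (and possibly shrink $\rho$ to $\min\{\rho,\omega_o\}$) so that the slack from this transient term is absorbed into the homogeneous prefactor, thereby matching the statement without loss of the asymptotic conclusion. All other steps — the LTI decomposition, the Hurwitz mode bound, and the $L^\infty$--$L^1$ convolution estimate — are standard and routine.
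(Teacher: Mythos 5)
Your proposal is correct and follows exactly the route the theorem's setup anticipates: view \eqref{eq:tracking-dynamics} as a Hurwitz LTI system driven by $-e_f$, bound the zero-input response by $C_0e^{-\rho t}$ via the mode expansion, and push the envelope of Lemma~\ref{lem:eso-bound} through the convolution with $h$, the steady-state piece giving $\|h\|_{L_1}L_f/\omega_o$ and hence \eqref{eq:limsup-tube}. You have also correctly isolated the only delicate point---the transient convolution $\int_0^t|h(t-\tau)|e^{-\omega_o\tau}\,d\tau$ decays at rate $\min\{\rho,\omega_o\}$ rather than $\omega_o$---and your resolution via option (ii) is valid: absorb that term into an enlarged $C_0e^{-\rho t}$ with $\rho$ reduced to (slightly below) $\min\{\rho,\omega_o\}$, which still yields \eqref{eq:tracking-tube} because the $\|h\|_{L_1}e^{-\omega_o t}|e_f(0)|$ term on its right-hand side is nonnegative and only loosens the bound. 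The one caveat is that option (i) as literally written does not work, since a non-decaying remainder cannot be absorbed into an exponentially decaying prefactor uniformly in $t$; stick with option (ii).
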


\paragraph{A computable constant.}
If $k_{ad}^2\ge 4k_{ap}$, then $h(t)\ge 0$ and $\|h\|_{L_1}=H(0)=1/k_{ap}$, yielding the explicit tube radius
$
\limsup_{t\to\infty}|e(t)| \le \frac{1}{k_{ap}}\frac{L_f}{\omega_o}.
$

\subsubsection{Safety: bounded time-average violation and eventual feasibility}

\begin{theorem}[Bounded time-average population violation]
\label{thm:avg-viol}
Let $x_1(t)=J_c(\pi(t))$ and suppose $r(t)\to d$ as in Sec.~\ref{sec:transient_process}.
Under the conditions of Theorem~\ref{thm:iss},
\begin{equation}
\label{eq:avg-viol}
\limsup_{T\to\infty}\frac{1}{T}\int_0^T \big(x_1(t)-d\big)_+\,dt
\;\le\;
\|h\|_{L_1}\,\frac{L_f}{\omega_o}.
\end{equation}
\end{theorem}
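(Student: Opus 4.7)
The plan is to reduce the time-average violation bound to the already-established ISS tracking tube of Theorem~\ref{thm:iss}, using the reference $r(t)$ as a bridge between $x_1(t)$ and the threshold $d$. The decomposition I would use is
\[
x_1(t)-d \;=\; \bigl(x_1(t)-r(t)\bigr)+\bigl(r(t)-d\bigr) \;=\; e(t)+\bigl(r(t)-d\bigr),
\]
and since $(\cdot)_+$ is $1$-Lipschitz and nonnegative, the pointwise inequality $(x_1(t)-d)_+ \le |e(t)| + |r(t)-d|$ holds. This converts the problem into two separate time-averages: one for the tracking error and one for the reference-to-threshold gap.

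For the reference contribution, I would invoke the closed-form expression from Sec.~\ref{sec:transient_process}, which gives $|r(t)-d|\le (A_0+B_0 t)e^{-c_r t}$ for constants $A_0,B_0$ depending only on initial conditions. This is integrable on $[0,\infty)$, so $\int_0^\infty |r(t)-d|\,dt<\infty$ and hence $\frac{1}{T}\int_0^T|r(t)-d|\,dt\to 0$ as $T\to\infty$. This term therefore contributes nothing to the limsup.

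For the tracking contribution, I would feed the ISS tube \eqref{eq:tracking-tube} from Theorem~\ref{thm:iss} directly into the time average: the transient pieces $C_0 e^{-\rho t}$ and $\|h\|_{L_1} e^{-\omega_o t}|e_f(0)|$ are both integrable, so their Cesàro averages vanish, while the constant residual $\|h\|_{L_1}L_f/\omega_o$ survives. Formally, I would fix $\varepsilon>0$, pick $T_0$ large enough that $|e(t)|\le \|h\|_{L_1}L_f/\omega_o+\varepsilon$ and $|r(t)-d|\le \varepsilon$ for all $t\ge T_0$, split $\int_0^T=\int_0^{T_0}+\int_{T_0}^T$, and observe that the finite piece is divided by $T\to\infty$ while the tail yields $\|h\|_{L_1}L_f/\omega_o+2\varepsilon$. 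Sending $\varepsilon\downarrow 0$ gives the stated bound.

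The main technical subtlety, rather than an obstacle, is justifying that $(\cdot)_+$ does not inflate the bound: I rely on $(a+b)_+\le |a|+|b|$, so the one-sided violation is controlled by the two-sided error envelope. A second point worth flagging is that the proof is insensitive to the amplitude of the exponential transients (they vanish under Cesàro averaging regardless of $C_0$, $|e_f(0)|$, $A_0$, $B_0$); only the asymptotic residual $\|h\|_{L_1}L_f/\omega_o$ from Theorem~\ref{thm:iss} appears in the final bound, which both explains why the observer gain $\omega_o$ directly shrinks the violation tube and how the disturbance envelope $L_f$ from Lemma~\ref{lem:Lf-envelope} enters safety guarantees through trust-region radius, batch size, and update interval.
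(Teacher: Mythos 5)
Your proposal is correct and follows essentially the same route as the paper's own proof: the identical decomposition $x_1-d=e+(r-d)$, the vanishing Cesàro average of the exponentially decaying reference gap, and the ISS tube of Theorem~\ref{thm:iss} supplying the surviving residual $\|h\|_{L_1}L_f/\omega_o$. The only cosmetic difference is that you bound $(x_1-d)_+$ by $|e|+|r-d|$ where the paper uses the marginally tighter $(r-d)_+ + |e|$; since both reference terms average to zero, the conclusions coincide, and your explicit $\varepsilon$--$T_0$ splitting merely fills in details the paper leaves as a sketch.
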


\begin{proof}[Sketch]
Since $x_1=r+e$,
\[
(x_1-d)_+ = (r+e-d)_+ \le (r-d)_+ + |e|.
\]
Because $r(t)\to d$ exponentially, $\lim_{T\to\infty}\frac{1}{T}\int_0^T (r-d)_+dt=0$.
The remaining term is controlled by the tracking tube in Theorem~\ref{thm:iss}.
\end{proof}

\begin{corollary}[Eventual feasibility with a fixed safety margin]
\label{cor:margin}
If the reference is chosen to converge to $d-\varepsilon$ with $\varepsilon>0$ (replace $d$ by $d-\varepsilon$ in the transient process),
and if
$
\varepsilon > \|h\|_{L_1}\,{L_f}/{\omega_o},
$
then there exists $T_\varepsilon<\infty$ such that $x_1(t)\le d$ for all $t\ge T_\varepsilon$ in the surrogate channel.
\end{corollary}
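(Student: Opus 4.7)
The plan is to combine the exponential convergence of the shaped reference toward $d-\varepsilon$ with the ISS-type tracking tube of Theorem~\ref{thm:iss}, and then exploit the strict slack $\varepsilon - \|h\|_{L_1}L_f/\omega_o > 0$ to absorb the residual tracking error after a finite time. No new estimates are needed; the corollary is essentially an $\varepsilon$-selection argument layered on top of machinery already established.

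First I would construct the modified reference by substituting $d \mapsto d-\varepsilon$ in the shaping ODE~\eqref{eq:ODE} and in its closed form~\eqref{eq:trancient_process}. This yields $r_\varepsilon(t)$ with $|r_\varepsilon(t) - (d-\varepsilon)| \leq M_r(1 + c_r t)e^{-c_r t}$ for a constant $M_r$ depending only on the initial conditions and $\varepsilon$, so $r_\varepsilon \to d-\varepsilon$ exponentially at rate $c_r$. Next I would apply Theorem~\ref{thm:iss} with this new reference, noting that the closed-loop derivation of~\eqref{eq:tracking-dynamics} carries over verbatim: the $-\ddot r$ feed-forward in the ADRC law~\eqref{eq:adrc-law-appendix} still cancels, and the lumped disturbance $f(t)$ in~\eqref{eq:surrogate-channel} does not depend on the target value of the reference. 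This gives $|e(t)| \leq C_0 e^{-\rho t} + \|h\|_{L_1}\bigl(e^{-\omega_o t}|e_f(0)| + L_f/\omega_o\bigr)$ for the shifted tracking error $e(t) := x_1(t) - r_\varepsilon(t)$. Finally, decomposing $x_1(t) \leq (d-\varepsilon) + |r_\varepsilon(t) - (d-\varepsilon)| + |e(t)|$ and defining the positive slack $\delta_0 := \varepsilon - \|h\|_{L_1}L_f/\omega_o$, I would choose $T_\varepsilon$ large enough that each exponentially decaying term, namely the reference residual $M_r(1+c_r t)e^{-c_r t}$, the modal transient $C_0 e^{-\rho t}$, and the ESO initial-error contribution $\|h\|_{L_1}e^{-\omega_o t}|e_f(0)|$, is at most $\delta_0/3$. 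Summing yields $x_1(t) \leq (d-\varepsilon) + \delta_0 + \|h\|_{L_1}L_f/\omega_o = d$ for all $t \geq T_\varepsilon$.

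The main obstacle, such as it is, amounts to careful bookkeeping: one must verify that the disturbance envelope $L_f$ from Lemma~\ref{lem:Lf-envelope}, the ESO stability condition $\omega_o > \omega_o^*$, and the Hurwitz condition on $(k_{ap}, k_{ad})$ all transfer unchanged under the reference shift. Each of these follows immediately from the observation that the relevant quantities are defined purely in terms of the plant, the disturbance regularity class in Assumption~\ref{ass:df-bound}, and the controller gains, none of which depend on the target value of $r$. Beyond this verification the proof is a direct consequence of exponential decay and the strict slack assumption, so no additional analytic machinery is required.
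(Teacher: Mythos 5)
Your proposal is correct and follows the same route the paper intends: the corollary is a direct consequence of Theorem~\ref{thm:iss} applied to the shifted reference, using the decomposition $x_1 = r_\varepsilon + e$ together with the exponential decay of the reference residual and the transient terms, and absorbing the asymptotic tube radius $\|h\|_{L_1}L_f/\omega_o$ into the strict slack $\varepsilon$. Your explicit $\delta_0/3$ bookkeeping and the check that the disturbance envelope and stability conditions are unaffected by the target shift are exactly the details the paper leaves implicit.
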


\subsubsection{Reward discussion: what ADRC preserves}

ADRC modifies only the \emph{dual} schedule $\{\lambda_k\}$ and does not change the primal trust-region mechanism.
Therefore, for each fixed $\lambda_k$, standard TRPO/PPO analyses apply to the \emph{penalized} objective
$
J(\pi)-\lambda_k J_c(\pi)
$
(or the rescaled variant used in our implementation).
In other words, ADRC does not invalidate the backbone optimizer's trust-region reasoning; it provides a smoother, less lagged multiplier signal,
which empirically stabilizes the reward--cost tradeoff.

\subsubsection{Projection and saturation}

In implementation, $\lambda_k$ is projected to $\lambda_k\ge 0$ (and optionally capped by $\lambda_{\max}$).
Projection is non-expansive: for any $a,b\in\mathbb{R}$,
$
|\Pi_+(a)-\Pi_+(b)|\le |a-b|
$
with $\Pi_+(x)=\max\{0,x\}$.
Thus projection does not amplify multiplier perturbations; it only clamps the signal and ensures boundedness.
A full hybrid-systems analysis with saturation is beyond the scope of this appendix.

\section{Implementation Detail}
\label{sec:details}
This section outlines the details of the proposed method through the pseudo-code presented in Algorithm \ref{alg:ADRC_Multipliers}. The algorithm describes the procedure for adjusting the Lagrange multipliers using ADRC during training, ensuring robust performance and adaptability to varying conditions.
\begin{algorithm}
\small 
\caption{ADRC-Controlled Lagrange Multiplier}\label{alg:ADRC_Multipliers}
\begin{algorithmic}[1]
\Require Choosed parameters $k_{ap}, k_{ad} \geq 0$
\State Integral: $I \gets 0$
\State Previous Cost: $J_{C, \text{prev}} \gets 0$
\For{each iteration \( t \)}
    \State Receive current cost $J_C$, reference cost $r$, its time derivative $\dot{r}$, $\ddot{r}$ and the optimal gain $\omega_o$.
    \State $\Delta \gets J_C - r$
    \State $\partial \gets (J_C - J_{C, \text{prev}}-\dot{r})_+$
    \State $I \gets (I + \Delta)_+$
    \State $K_P \gets k_{ap} + \omega_o k_{ad}$
    \State $K_I \gets \omega_o k_{ap}$
    \State $K_D \gets \omega_o +k_{ap}$
    \State $\lambda \gets (K_P \Delta + K_I I + K_D \partial-\ddot{r})_+$
    \State $J_{C, \text{prev}} \gets J_C$
\EndFor
\end{algorithmic}
\end{algorithm}

\subsection{Hyper-parameters}
\label{sec:parameters}
For the on-policy algorithms TRPO and PPO, we adopt the default parameters provided by Omnisafe \citep{omnisafe}, as detailed in Table~\ref{tab:onpolicy_para}. These parameters are consistently applied across all tasks.
\begin{table}[htbp]
\centering
\caption{Parameter Comparison: ADRC, PID, and Lag Methods}
\label{tab:onpolicy_para}
\renewcommand{\arraystretch}{1.1}
\setlength{\tabcolsep}{12pt}
\begin{tabular}{lccc}
\toprule
\textbf{Parameter}                 & \textbf{ADRC} & \textbf{PID} & \textbf{Lag} \\ 
\midrule
$k_p(k_{ap})$                                 & 0.1           & 0.1          & -            \\
$k_i$                & -             & 0.01         & 0.035        \\
$k_d(k_{ad})$                                & 0.01          & 0.01         & -            \\
Delay                              & 10            & 10           & -            \\
EMA $\alpha$ (Proportional Term)   & 0.95          & 0.95         & -            \\
EMA $\alpha$ (Derivative Term)     & 0.95          & 0.95         & -            \\
Sum Normalization                  & True          & True         & -            \\
Derivative Normalization           & False         & False        & -            \\
Cost Limit                         & 25.0          & 25.0         & 25.0         \\
Max Penalty Coefficient            & 100.0         & 100.0        & -            \\
Initial Lagrangian Multiplier      & 0.001         & 0.001        & 0.001        \\
Hidden Layer Sizes (Actor)         & {[}64, 64{]}  & {[}64, 64{]} & {[}64, 64{]} \\
Activation Function (Actor)        & tanh          & tanh         & tanh         \\
Hidden Layer Sizes (Critic)        & {[}64, 64{]}  & {[}64, 64{]} & {[}64, 64{]} \\
Activation Function (Critic)       & tanh          & tanh         & tanh         \\
Critic Learning Rate               & 0.0003        & 0.0003       & 0.0003       \\
Linear Learning Rate Decay         & True          & True         & True         \\
Clip Ratio                         & 0.2           & 0.2          & 0.2          \\
Target KL                          & 0.02          & 0.02         & 0.02         \\
Use Max Gradient Norm              & True          & True         & True         \\
Max Gradient Norm                  & 40.0          & 40.0         & 40.0         \\
\bottomrule
\end{tabular}
\label{table:parameter_comparison}
\end{table}

\section{Experimental Details}
\label{sec:experimental_details}

\subsection{Baseline}
To comprehensively evaluate the effectiveness of our proposed ADRC method, we compare it against four well-established reinforcement learning algorithms. These include two off-policy algorithms, TD3 and DDPG, as well as two on-policy algorithms, PPO and TRPO. These algorithms were chosen due to their widespread adoption and proven performance across various RL tasks, providing a robust foundation for benchmarking.

\subsection{Tasks Specification}
\label{sec:tasks_env}
To demonstrate the effectiveness and generalizability of our proposed methods, we conduct comprehensive experiments across diverse environments. We select three distinct agents, namely Car, Racecar, and Ant, each governed by different physical dynamics. 
\begin{figure}[htbp]
    \centering
    \includegraphics[width=\linewidth]{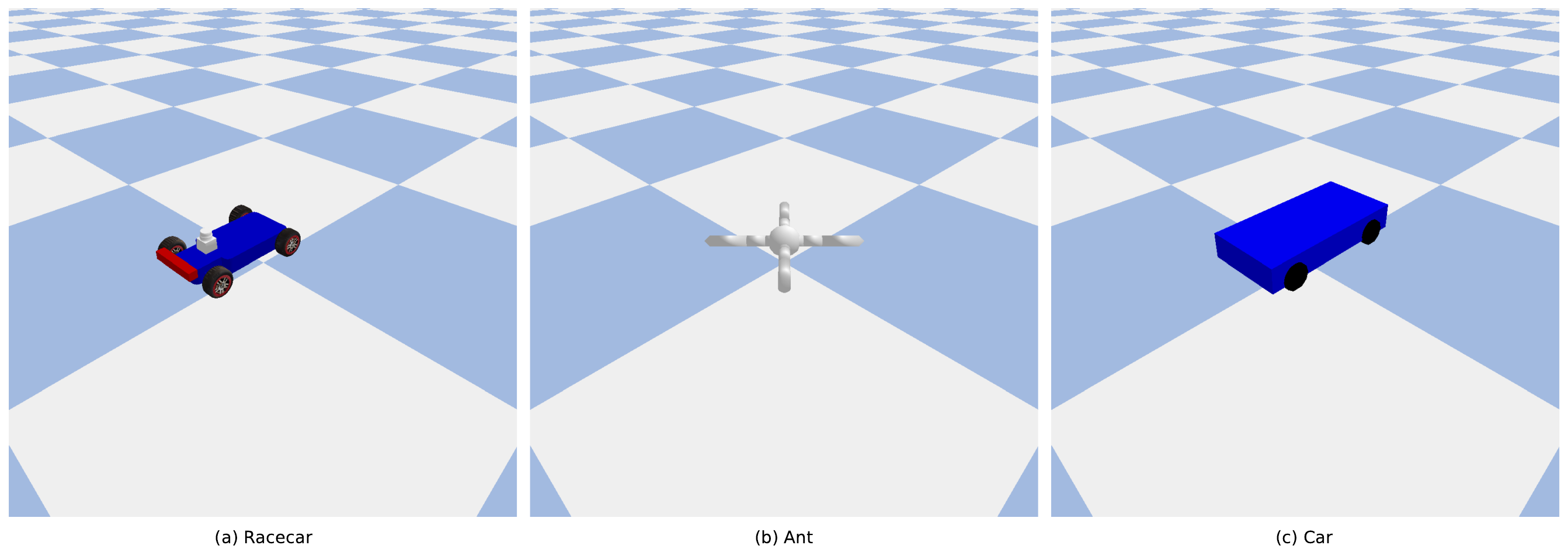}
    \caption{Illustration of the three distinct agents used in our experiments. Car: A simple wheeled agent with low degrees of freedom. Racecar: A dynamic and agile wheeled agent with higher motion complexity. Ant: A multi-legged bionic agent with high degrees of freedom and non-linear dynamics. These agents represent diverse physical characteristics, allowing us to comprehensively evaluate the performance of our method under various physical dynamics.}
    \label{fig:agents}
\end{figure}

As illustrated in Figure~\ref{fig:agents}, the three agents represent diverse physical characteristics, enabling us to evaluate the performance of our method comprehensively across varying physical dynamics.

We consider four tasks in our experiments, as shown in Figure~\ref{fig:tasks}:

\begin{itemize}
    \item \textbf{Goal Task}
    The robot must navigate to a specified goal region while avoiding hazards.
    \item \textbf{Button Task}
    The robot must press the correct button while avoiding hazards and gremlins, and must not press any wrong buttons.
    \item \textbf{Push Task}
    The robot must push a box to the goal region while avoiding hazards. A pillar is present but does not penalize collisions.
    \item \textbf{Circle Task}
    The robot moves around a circular track, without additional objects or hazards. This is mainly for testing circular navigation behavior.
\end{itemize}

\begin{figure}[ht]
    \centering
    \includegraphics[width=\linewidth]{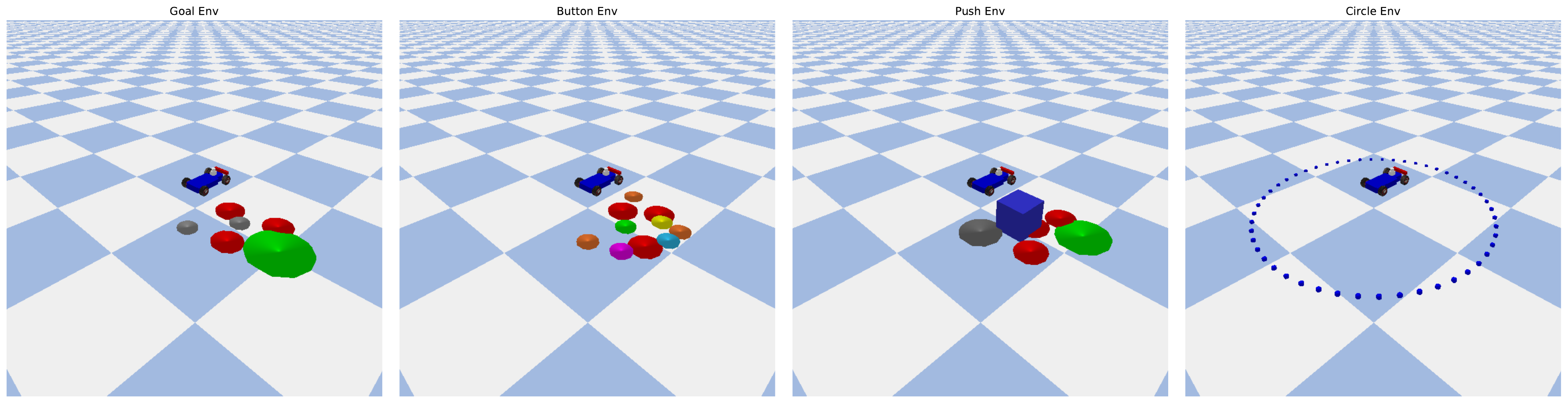}
    \caption{Four different tasks used in our experiments. 
    (a) \textit{Goal Task}: The agent must reach the goal area (blue sphere) without entering dangerous zones (red circles). 
    (b) \textit{Button Task}: The agent must press the correct button (green) and avoid pressing wrong ones (yellow, purple, etc.) or colliding with gremlins. 
    (c) \textit{Push Task}: The agent must push the box to the goal location (green circle) while avoiding hazards (red). 
    (d) \textit{Circle Env}: The agent moves around a simple circular track.}
    \label{fig:tasks}
\end{figure}

\subsection{Evaluation Metrics}
\label{sec:metrics}

To comprehensively assess the performance of the proposed reinforcement learning algorithms, we employ several evaluation metrics. These metrics evaluate both the agent's ability to minimize costs and its adherence to safety constraints.

\paragraph{Reward and Cost}  
The primary performance metrics are the \textbf{reward} and \textbf{cost}, which respectively measure the benefits and penalties accumulated by the agent over the course of an episode. For an episode consisting of $T$ time steps:

\begin{itemize}
    \item The \textbf{return reward}, $R$, is defined as:
    \[
    R = \sum_{t=1}^{T} r_t,
    \]
    where $r_t$ is the reward received at time step $t$. This metric reflects the agent's ability to achieve its objective efficiently.

    \item The \textbf{return cost}, $C$, is calculated as:
    \[
    C = \sum_{t=1}^{T} c_t,
    \]
    where $c_t$ is the cost incurred at time step $t$. This metric assesses the penalties associated with the agent's actions, capturing its safety and resource efficiency.
\end{itemize}

\paragraph{Violation Rate (Vio. Rate)}  
The Violation Rate quantifies the proportion of episodes during training in which the agent breaches predefined safety constraints. It is expressed as:
\[
\text{Vio Rate} = \frac{N_{\text{violations}}}{N_{\text{total\_episodes}}},
\]
where $N_{\text{violations}}$ is the number of episodes in which the agent's cumulative cost $C$ exceeds the allowable threshold, and $N_{\text{total\_episodes}}$ is the total number of training episodes. A lower violation rate indicates better safety performance.

\paragraph{Constraint Violation Magnitude(Magnitude)}  
The Violation Magnitude measures the severity of constraint violations in episodes where breaches occur. It is calculated as the average amount by which the return cost exceeds the allowable threshold across all violating episodes:
\[
\text{Violation Magnitude} = \frac{1}{N_{\text{violations}}} \sum_{i=1}^{N_{\text{violations}}} \max(0, C_i - d),
\]
where $C_i$ is the return cost of the $i$-th violating episode and $d$ is the cost threshold that we set. Smaller magnitudes indicate less severe constraint violations.

\paragraph{Average Cost(Avg. Cost)}  
To evaluate the overall performance during training, we calculate the Average Cost across all episodes:
\[
\text{Average Cost} = \frac{1}{N_{\text{total\_episodes}}} \sum_{i=1}^{N_{\text{total\_episodes}}} C_i,
\]
where $C_i$ is the return cost of the $i$-th episode.

By analyzing these metrics, we can comprehensively assess the effectiveness of each algorithm in achieving a balance between reward maximization and safety constraint adherence.

\section{More Experimental Results}

\subsection{Tables and Figures Referenced in the Main Text}

\begin{table}[H]
\centering
\tiny
\renewcommand{\arraystretch}{0.9} 
\caption{Constraint violation rate (Vio.), violation magnitude (Mag.), and average cost (Cost) during PPO training with various Lagrangian methods.}
\resizebox{\columnwidth}{!}{%
\begin{tabular}{llccc}
\toprule
\textbf{Task} & \textbf{Method} & \textbf{Vio. (\%)} & \textbf{Mag.} & \textbf{Cost} \\
\midrule
\multirow{3}{*}{CarButton} 
& Lag  & 89.77 $\pm$ 19.38 & 59.77 $\pm$ 39.05 & 84.05 $\pm$ 40.18 \\
& PID  & 85.09 $\pm$ 16.67 & 45.27 $\pm$ 46.80 & 68.95 $\pm$ 47.94 \\
& ADRC & \textbf{50.16 $\pm$ 17.08} & \textbf{14.80 $\pm$ 3.96} & \textbf{34.20 $\pm$ 5.75} \\
\midrule
\multirow{3}{*}{CarCircle} 
& Lag  & 46.74 $\pm$ 20.16 & 15.85 $\pm$ 17.40 & 33.54 $\pm$ 20.09 \\
& PID  & 52.78 $\pm$ 17.62 & 12.29 $\pm$ 8.14 & 31.24 $\pm$ 10.31 \\
& ADRC & \textbf{21.35 $\pm$ 13.09} & \textbf{7.74 $\pm$ 6.46} & \textbf{18.85 $\pm$ 9.30} \\
\midrule
\multirow{3}{*}{RacecarGoal} 
& Lag  & 80.87 $\pm$ 19.17 & 31.18 $\pm$ 18.99 & 54.24 $\pm$ 21.01 \\
& PID  & 72.30 $\pm$ 24.96 & 27.11 $\pm$ 15.58 & 49.02 $\pm$ 18.75 \\
& ADRC & \textbf{47.08 $\pm$ 21.58} & \textbf{12.31 $\pm$ 9.34} & \textbf{30.12 $\pm$ 12.74} \\
\midrule
\multirow{3}{*}{RacecarPush} 
& Lag  & 57.91 $\pm$ 22.12 & 15.45 $\pm$ 12.92 & 35.54 $\pm$ 15.56 \\
& PID  & 70.84 $\pm$ 23.97 & 28.16 $\pm$ 16.84 & 49.67 $\pm$ 19.96 \\
& ADRC & \textbf{47.28 $\pm$ 17.05} & \textbf{12.50 $\pm$ 7.05} & \textbf{29.35 $\pm$ 11.03} \\
\bottomrule
\end{tabular}%
}
\label{tab:cppo_partial}
\end{table}

As shown in Table~\ref{tab:cppo_partial}, the ADRC methods significantly reduce violation rate, a smaller violation magnitude, indicating reduced oscillations and a shorter phase-lag in response. The calculation of metrics are detailed in Appendix~\ref{sec:metrics}.

\begin{figure}[H]
    \centering
    \includegraphics[width=1\linewidth]{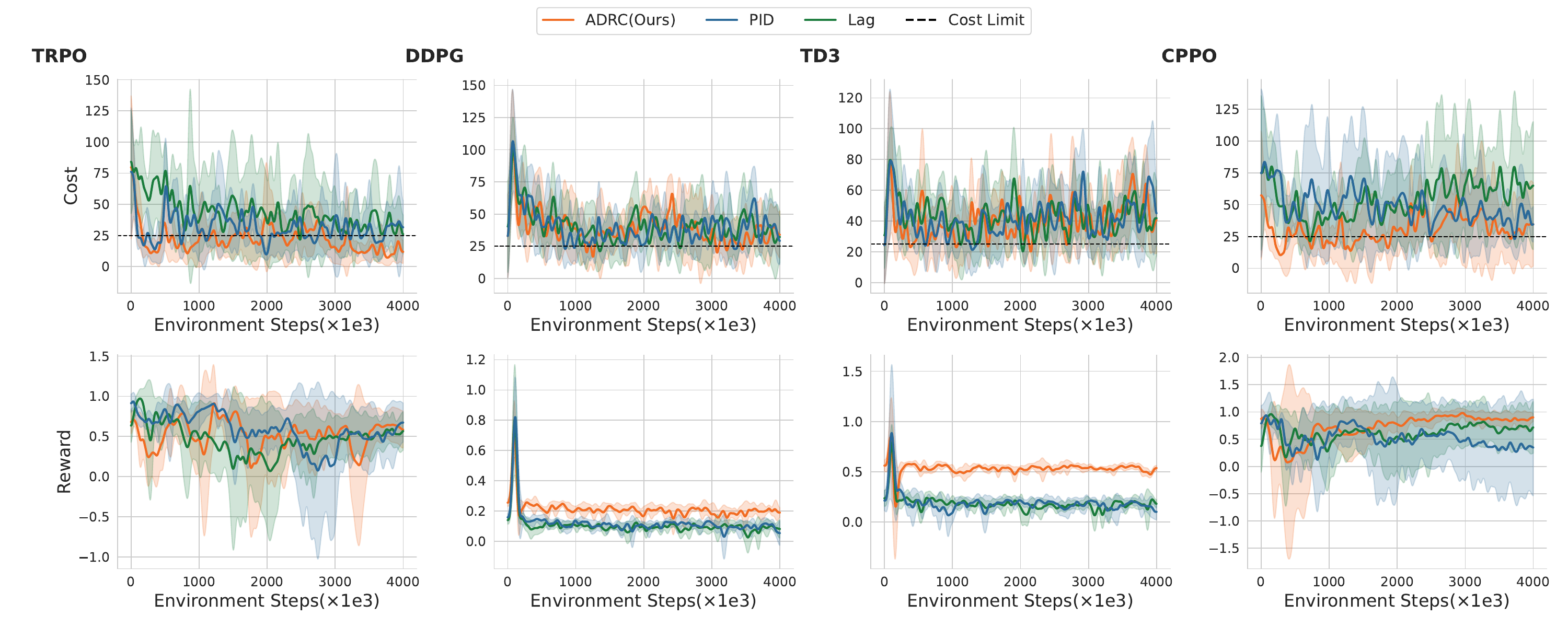}
    \caption{Training curves of Racecargoal task.}
    \label{fig:racecar-goal}
\end{figure}

Figure~\ref{fig:racecar-goal} shows training curves for different constraint-handling methods (ADRC, PID, and Classical Lagrangian) across four RL algorithms (TRPO, DDPG, TD3, and PPO) in the RacecarGoal environment. Results show that ADRC consistently maintains cost below the limit while achieving competitive or superior rewards compared to other methods across different RL backbones.

\label{sec:More_results}
\subsection{Main Results}
\label{sec:all_exp}
To ensure clarity and readability, we present the training curves for each environment separately, avoiding the complexity of overlaying multiple curves on a single plot. This approach allows for a more intuitive comparison of performance across different settings. For a comprehensive evaluation of our method’s effectiveness, we conducted experiments across three agents—Ant, Racecar, and Car—and four reinforcement learning tasks: Goal, Circle, Button, and Push. This setup resulted in a total of 12 experimental groups. For each group, we ran experiments with 5 different random seeds to account for variability and ensure statistical robustness. Furthermore, we benchmarked our method against four widely used reinforcement learning algorithms: TRPO, PPO, DDPG, and TD3, covering both on-policy and off-policy approaches. This rigorous experimental design provides a thorough validation of our method’s adaptability and performance across diverse scenarios.

\subsubsection{Ant Environments}

Figures \ref{fig:antbutton_performance} to Figure~\ref{fig:antpush_performance} present the training curves for the Ant environment across four tasks: Button, Circle, Goal, and Push. Each plot illustrates the episodic returns and costs averaged over five random seeds, with solid lines representing the mean and shaded areas denoting the variance.
\begin{figure}[htbp]
    \centering
    \includegraphics[width=\linewidth]{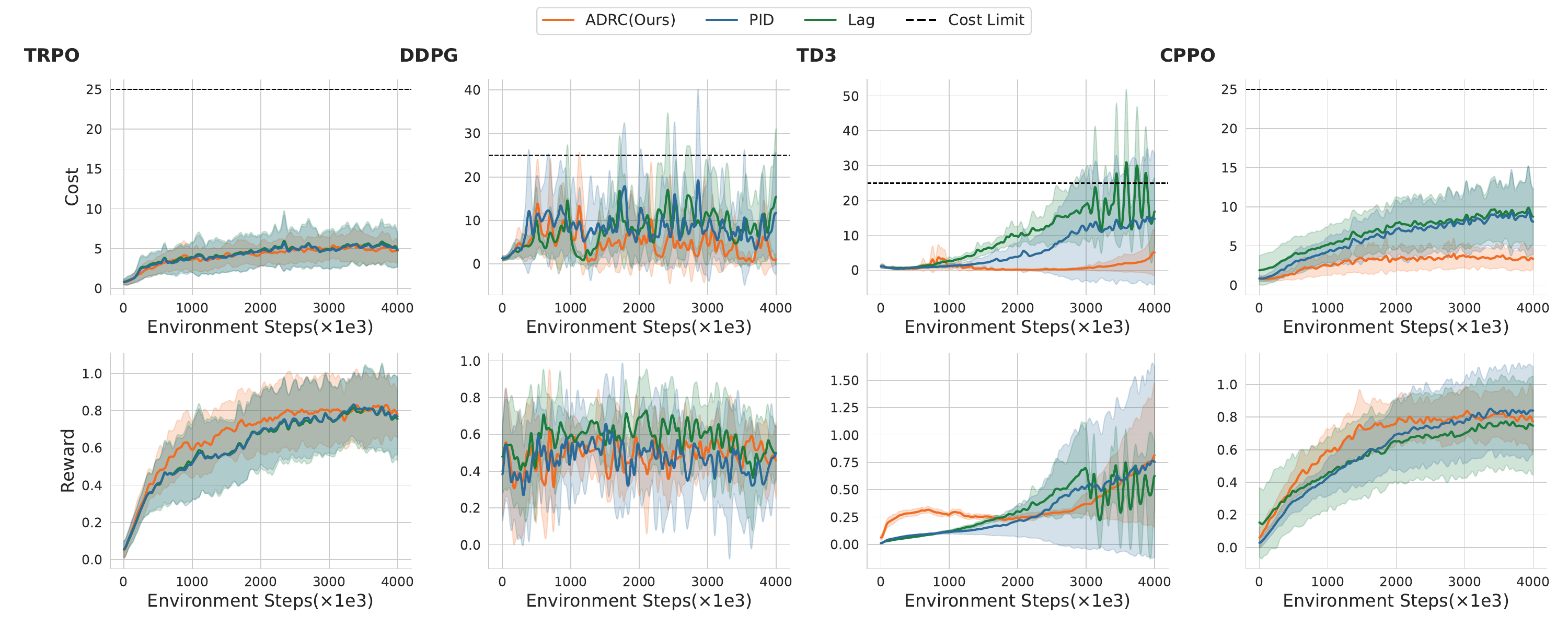}
    \caption{The training curves of AntButton with various Lagrangian methods across different algorithms.}
    \label{fig:antbutton_performance}
\end{figure}

\begin{figure}[htbp]
    \centering
    \includegraphics[width=\linewidth]{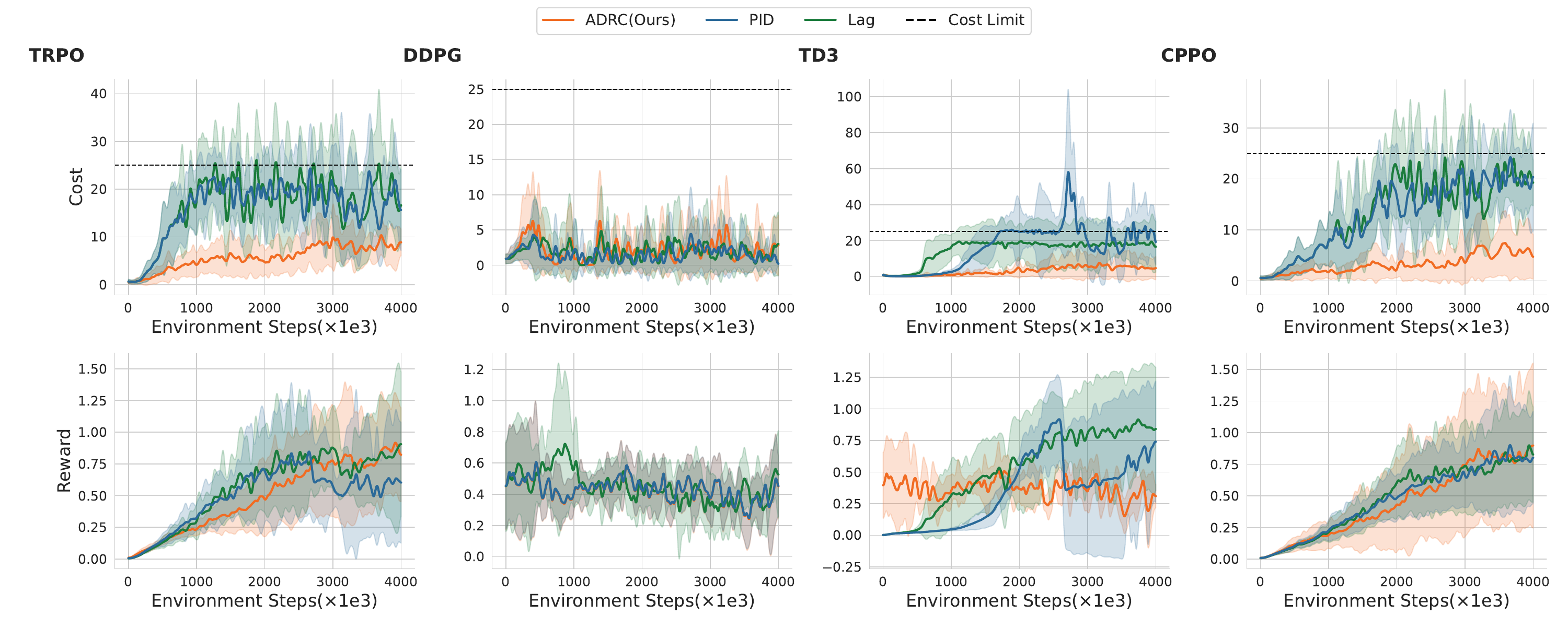}
    \caption{The training curves of AntCircle with various Lagrangian methods across different algorithms.}
    \label{fig:antcircle_performance}
\end{figure}

\begin{table*}[htbp]
\centering
\small
\caption{Comparison of violation rate, magnitude, and average cost on AntButton and AntCircle.}
\resizebox{\textwidth}{!}{
\begin{tabular*}{\textwidth}{@{\extracolsep{\fill}}lcccccc@{}}
\toprule
\textbf{Algorithm} 
& \multicolumn{3}{c}{\textbf{AntButton}} 
& \multicolumn{3}{c}{\textbf{AntCircle}} \\
\cmidrule(lr){2-4} \cmidrule(lr){5-7}
& \textbf{Vio. Rate (\%)} & \textbf{Magnitude} & \textbf{Avg. Cost} 
& \textbf{Vio. Rate (\%)} & \textbf{Magnitude} & \textbf{Avg. Cost} \\
\midrule
CPPOLag  & 0.22 ± 0.03 &  ± 0.83 &  6.09 ± 3.94 
         & 13.98 ± 3.11 & 6.59 ± 1.89 & 17.30 ± 3.00 \\
CPPOPID  & 0.30 ± 0.28 & 0.01 ± 0.01 & 6.95 ± 2.11
         & 10.82 ± 3.77 & 0.62 ± 0.36 & 12.95 ± 0.92 \\
CPPOADRC & \textbf{0.01 ± 0.01} & \textbf{0.00 ± 0.00} & \textbf{2.69 ± 0.72}
         & \textbf{0.00 ± 0.00} & \textbf{0.00 ± 0.00} & \textbf{3.23 ± 2.24} \\
\midrule
DDPGLag  & 5.72 ± 4.67 & 0.30 ± 0.24 & 7.35 ± 2.97
         & 0.07 ± 0.15 & 0.00 ± 0.00 & 1.93 ± 0.68 \\
DDPGPID  & 6.22 ± 7.10 & 0.47 ± 0.65 & 7.92 ± 3.86
         & 0.04 ± 0.07 & 0.00 ± 0.00 & 1.59 ± 0.50 \\
DDPGADRC & \textbf{1.93 ± 1.15} & \textbf{0.12 ± 0.10} & \textbf{4.87 ± 0.64}
         & 0.15 ± 0.27 & 0.00 ± 0.00 & 2.15 ± 0.41 \\
\midrule
TD3Lag   & 3.31 ± 5.67 & 0.32 ± 0.64 & 6.22 ± 4.96
         & 31.26 ± 2.86 & 1.76 ± 0.22 & 15.02 ± 0.99 \\
TD3PID   & 2.24 ± 5.01 & 0.11 ± 0.25 & 3.14 ± 4.22
         & 21.32 ± 11.16 & 3.61 ± 2.06 & 13.74 ± 4.75 \\
TD3ADRC  & \textbf{0.00 ± 0.00} & \textbf{0.00 ± 0.00} & \textbf{0.86 ± 0.41}
         & \textbf{0.02 ± 0.02} & \textbf{0.00 ± 0.00} & \textbf{2.49 ± 2.61} \\
\midrule
TRPOLag  & 0.01 ± 0.02 & 0.00 ± 0.00 & 4.40 ± 1.48
         & 20.80 ± 3.52 & 1.64 ± 0.37 & 16.58 ± 2.01 \\
TRPOPID  & 0.01 ± 0.02 & 0.00 ± 0.00 & 4.29 ± 1.35
         & 17.73 ± 2.21 & 1.06 ± 0.20 & 16.14 ± 1.31 \\
TRPOADRC & \textbf{0.00 ± 0.00} & \textbf{0.00 ± 0.00} & \textbf{4.15 ± 0.58}
         & \textbf{0.14 ± 0.28} & \textbf{0.00 ± 0.01} & \textbf{5.74 ± 1.93} \\
\bottomrule
\end{tabular*}
}
\label{table:ant_button_circle}
\end{table*}

\begin{figure}[htbp]
    \centering
    \includegraphics[width=\linewidth]{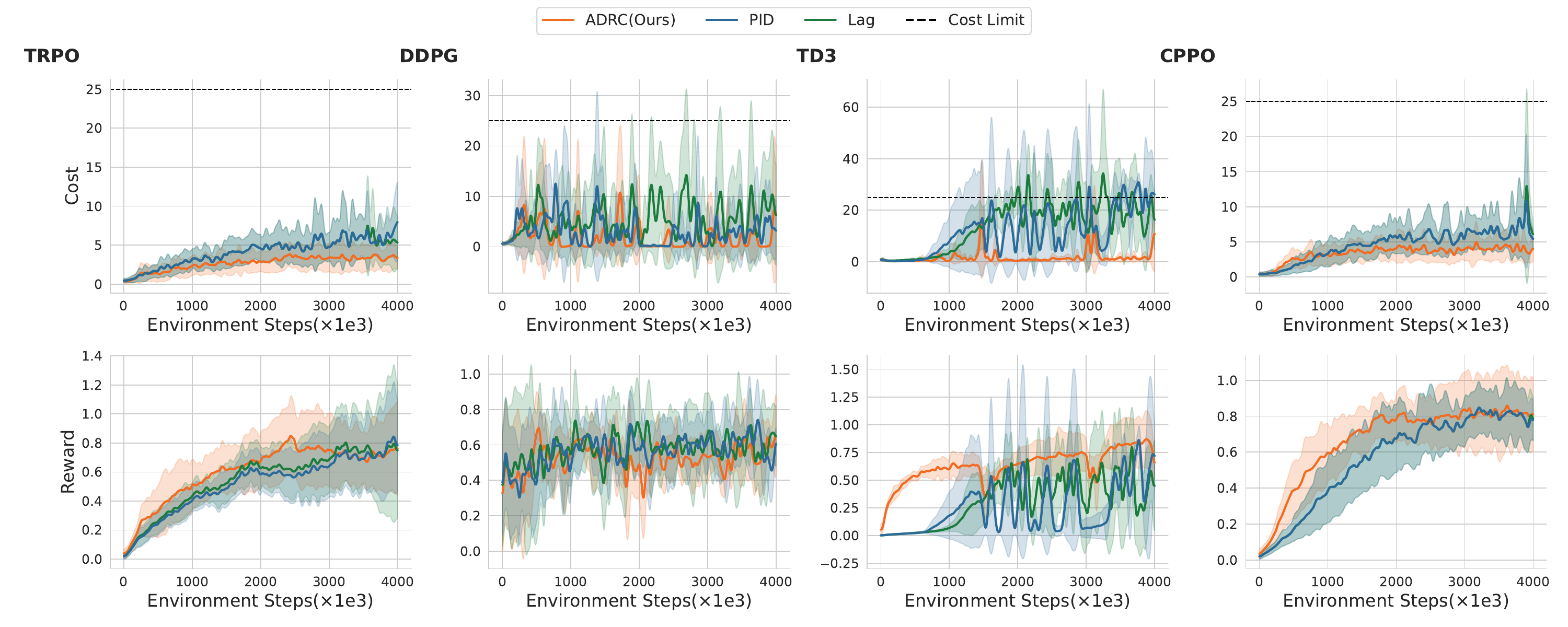}
    \caption{The training curves of AntGoal with various Lagrangian methods across different algorithms.}
    \label{fig:antgoal_performance}
\end{figure}

\begin{figure}[htbp]
    \centering
    \includegraphics[width=\linewidth]{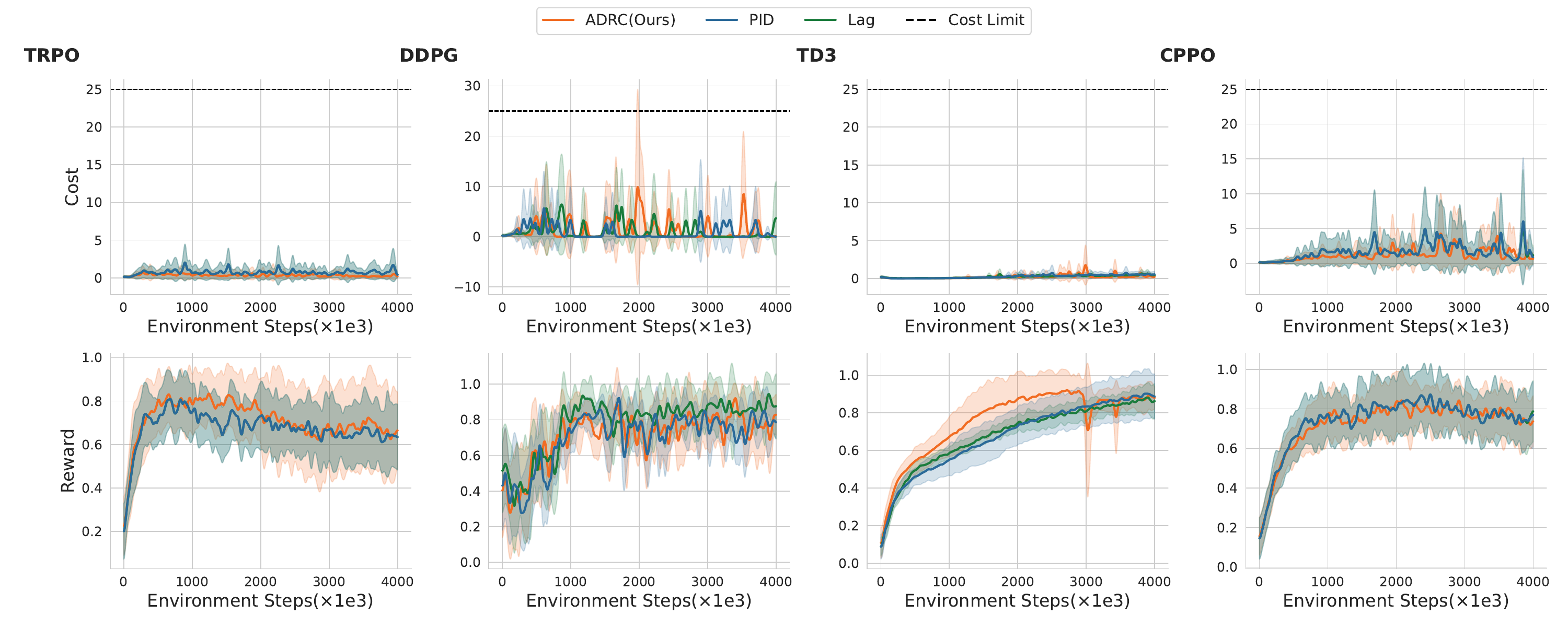}
    \caption{The training curves of AntPush with various Lagrangian methods across different algorithms.}
    \label{fig:antpush_performance}
\end{figure}

\begin{table*}[htbp]
\centering
\small
\caption{Comparison of violation rate, magnitude, and average cost on AntGoal and AntPush.}
\resizebox{\textwidth}{!}{
\begin{tabular*}{\textwidth}{@{\extracolsep{\fill}}lcccccc@{}}
\toprule
\textbf{Algorithm} 
& \multicolumn{3}{c}{\textbf{AntGoal}} 
& \multicolumn{3}{c}{\textbf{AntPush}} \\
\cmidrule(lr){2-4} \cmidrule(lr){5-7}
& \textbf{Vio. Rate (\%)} & \textbf{Magnitude} & \textbf{Avg. Cost} 
& \textbf{Vio. Rate (\%)} & \textbf{Magnitude} & \textbf{Avg. Cost} \\
\midrule
CPPOLag  & 0.41 ± 0.66 & 0.05 ± 0.01 & 4.68 ± 1.05 
         & 0.48 ± 0.48 & 3.65 ± 3.93 & 0.61 ± 0.55 \\
CPPOPID  & 0.31 ± 0.46 & 0.03 ± 0.06 & 4.63 ± 0.96
         & 0.36 ± 0.54 & 0.02 ± 0.03 & 1.70 ± 0.96 \\
CPPOADRC & \textbf{0.00 ± 0.00} & \textbf{0.00 ± 0.00} & \textbf{3.43 ± 0.47}
         & \textbf{0.09 ± 0.13} & \textbf{0.00 ± 0.01} & \textbf{1.25 ± 0.37} \\
\midrule
DDPGLag  & 3.09 ± 1.37 & 0.32 ± 0.20 & 5.34 ± 1.18
         & 0.03 ± 0.04 & 0.00 ± 0.00 & 1.05 ± 0.47 \\
DDPGPID  & 1.42 ± 0.58 & 0.19 ± 0.07 & 3.03 ± 1.01
         & 0.17 ± 0.24 & 0.02 ± 0.02 & \textbf{0.82 ± 0.51} \\
DDPGADRC & \textbf{1.19 ± 1.27} & \textbf{0.15 ± 0.15} & \textbf{1.88 ± 1.19}
         & 0.67 ± 1.19 & 0.11 ± 0.20 & 1.50 ± 0.80 \\
\midrule
TD3Lag   & 20.21 ± 4.27 & 1.94 ± 0.64 & 13.46 ± 1.39
         & 0.00 ± 0.00 & 0.00 ± 0.00 & \textbf{0.12 ± 0.07} \\
TD3PID   & 18.74 ± 12.17 & 2.00 ± 1.75 & 11.81 ± 5.51
         & 0.00 ± 0.00 & 0.00 ± 0.00 & 0.16 ± 0.14 \\
TD3ADRC  & \textbf{1.54 ± 1.50} & \textbf{0.20 ± 0.22} & \textbf{2.04 ± 1.34}
         & 0.00 ± 0.00 & 0.00 ± 0.00 & 0.18 ± 0.16 \\
\midrule
TRPOLag  & 0.25 ± 0.50 & 0.02 ± 0.04 & 4.10 ± 0.89
         & 0.00 ± 0.00 & 0.00 ± 0.00 & 0.73 ± 0.26 \\
TRPOPID  & 0.15 ± 0.30 & 0.01 ± 0.02 & 4.13 ± 0.94
         & 0.00 ± 0.00 & 0.00 ± 0.00 & 0.73 ± 0.26 \\
TRPOADRC & \textbf{0.00 ± 0.00} & \textbf{0.00 ± 0.00} & \textbf{2.70 ± 0.69}
         & 0.00 ± 0.00 & 0.00 ± 0.00 & \textbf{0.34 ± 0.15} \\
\bottomrule
\end{tabular*}
}
\label{table:ant_goal_push}
\end{table*}

To provide a more thorough and quantitative evaluation of our method, we report the results of experiments conducted on four challenging environments, AntButton, AntCircle, AntPush and AntGoal in Table~\ref{table:ant_button_circle} and Table~\ref{table:ant_goal_push}. The metrics compared include violation rate (\%), magnitude of violations, and average cost. Across all experiments, our ADRC method consistently outperforms baseline approaches (PID and Lagrange) in achieving lower violation rates and magnitudes, while maintaining competitive or reduced average costs. These results, validated across four RL algorithms (TRPO, PPO, DDPG, TD3), demonstrate the effectiveness and robustness of ADRC in handling constraint-aware reinforcement learning tasks.

\subsubsection{Car Environments}

Figures \ref{fig:carbutton_performance} to Figure~\ref{fig:carpush_performance} present the training curves for the Car environment across four tasks: Button, Circle, Goal, and Push. Each plot illustrates the episodic returns and costs averaged over five random seeds, with solid lines representing the mean and shaded areas denoting the variance.

\begin{figure}[htbp]
    \centering
    \includegraphics[width=\linewidth]{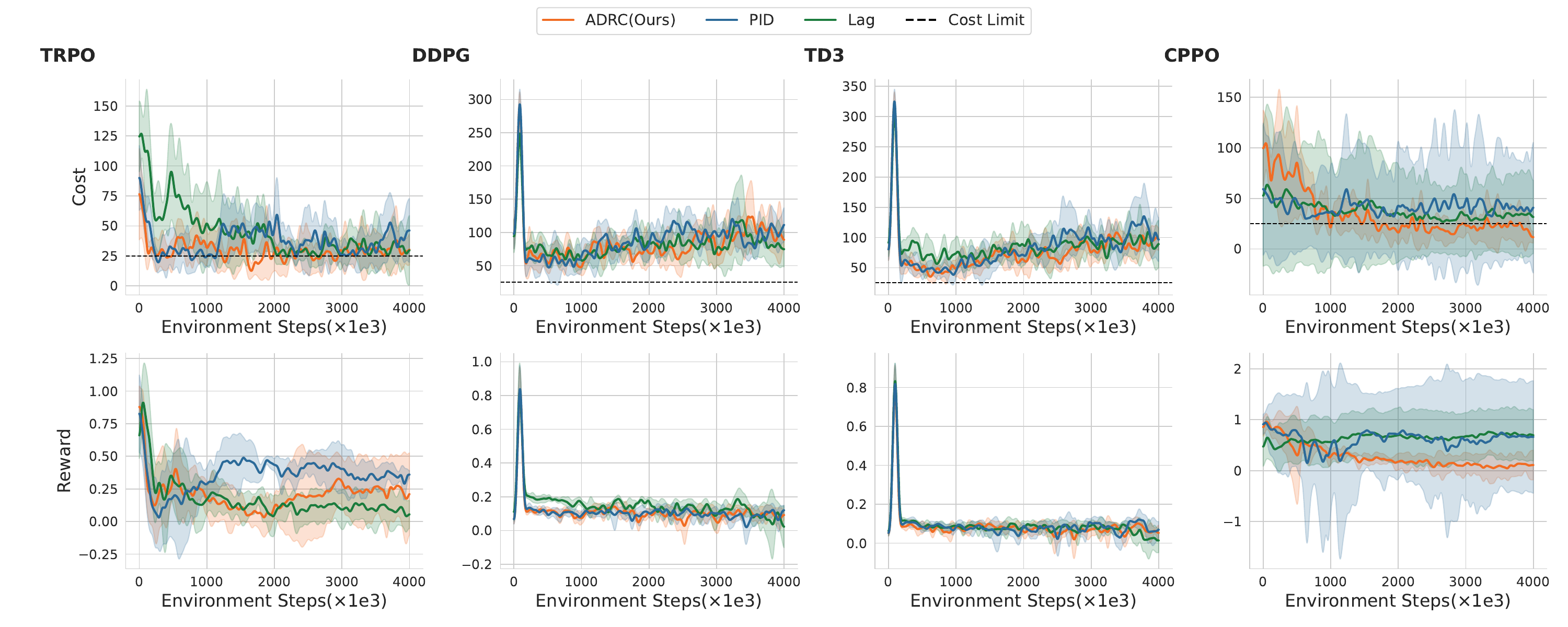}
    \caption{The training curves of CarButton with various Lagrangian methods across different algorithms.}
    \label{fig:carbutton_performance}
\end{figure}
\begin{figure}[htbp]
    \centering
    \includegraphics[width=\linewidth]{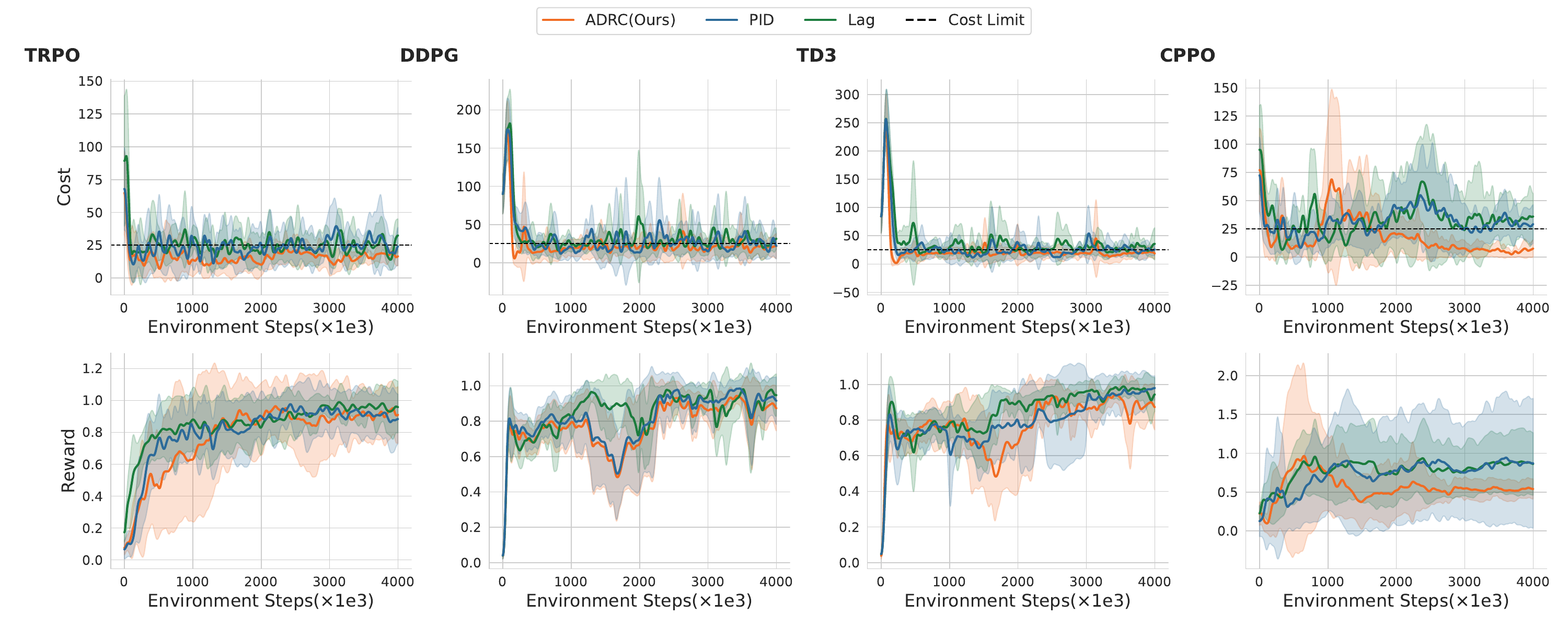}
    \caption{The training curves of CarCircle with various Lagrangian methods across different algorithms.}
    \label{fig:carcircle_performance}
\end{figure}

To provide a more thorough and quantitative evaluation of our method, we report the results of experiments conducted on four challenging environments, AntButton, AntCircle, AntPush and AntGoal in Table~\ref{table:car_button_circle} and Table~\ref{table:car_goal_push}. The metrics compared include violation rate (\%), magnitude of violations, and average cost. Across all experiments, our ADRC method consistently outperforms baseline approaches (PID and Lagrange) in achieving lower average cost and violation magnitudes, while maintaining competitive.

\begin{table*}[htbp]
\centering
\small
\caption{Comparison of violation rate, magnitude, and average cost on CarButton and CarCircle.}
\resizebox{\textwidth}{!}{
\begin{tabular}{lcccccc}

\toprule
\textbf{Algorithm} 
& \multicolumn{3}{c}{\textbf{CarButton}} 
& \multicolumn{3}{c}{\textbf{CarCircle}} \\
\cmidrule(lr){2-4} \cmidrule(lr){5-7}
& \textbf{Vio. Rate (\%)} & \textbf{Magnitude} & \textbf{Avg. Cost} 
& \textbf{Vio. Rate (\%)} & \textbf{Magnitude} & \textbf{Avg. Cost} \\
\midrule
CPPOLag  & 89.77 ± 19.38 & 59.77 ± 39.05 & 84.05 ± 40.18
         & 46.73 ± 20.16 & 15.85 ± 17.40 & 33.54 ± 20.09 \\
CPPOPID  & 85.09 ± 16.67 & 45.27 ± 46.80 & 68.95 ± 47.94
         & 52.77 ± 17.62 & 12.29 ± 8.14 & 31.24 ± 10.31 \\
CPPOADRC & \textbf{50.16 ± 17.08} & \textbf{14.80 ± 3.96} & \textbf{34.20 ± 5.75}
         & \textbf{21.35 ± 13.09} & \textbf{7.74 ± 6.46} & \textbf{18.85 ± 9.30} \\
\midrule
DDPGLag  & 99.93 ± 0.11 & 58.18 ± 12.27 & 83.17 ± 12.27
         & 51.85 ± 0.74 & 13.55 ± 2.86 & 33.49 ± 2.45 \\
DDPGPID  & 98.87 ± 2.05 & 64.10 ± 6.43 & 89.05 ± 6.51
         & 39.77 ± 4.31 & 13.56 ± 1.49 & 30.89 ± 1.44 \\
DDPGADRC & 99.49 ± 0.08 & \textbf{58.62 ± 7.25} & \textbf{83.60 ± 7.25}
         & 51.50 ± 5.10 & \textbf{7.73 ± 1.79} & \textbf{23.82 ± 1.33} \\
\midrule
TD3Lag   & 99.97 ± 0.04 & 62.20 ± 10.10 & 87.20 ± 10.11
         & 53.00 ± 1.06 & 17.18 ± 1.25 & 38.06 ± 1.57 \\
TD3PID   & 99.03 ± 1.00 & 59.44 ± 11.92 & 84.40 ± 11.95
         & 39.20 ± 1.56 & 12.52 ± 1.64 & 30.87 ± 1.18 \\
TD3ADRC  & \textbf{99.04 ± 0.73} & \textbf{50.84 ± 9.18} & \textbf{75.80 ± 9.21}
         & \textbf{48.41 ± 6.50} & \textbf{7.49 ± 1.54} & \textbf{24.26 ± 1.28} \\
\midrule
TRPOLag  & 74.24 ± 11.05 & 21.90 ± 7.38 & 44.84 ± 8.09
         & 40.04 ± 1.30 & 6.45 ± 1.28 & 25.52 ± 0.51 \\
TRPOPID  & 69.04 ± 17.53 & 14.60 ± 4.66 & 37.11 ± 6.10
         & 40.67 ± 3.51 & 6.58 ± 1.15 & 24.24 ± 1.44 \\
TRPOADRC & \textbf{53.28 ± 15.44} & \textbf{8.53 ± 4.33} & \textbf{29.57 ± 6.18}
         & \textbf{17.71 ± 2.92} & \textbf{1.92 ± 0.64} & \textbf{16.22 ± 1.75} \\
\bottomrule
\end{tabular}
}
\label{table:car_button_circle}
\end{table*}

\begin{figure}[htbp]
    \centering
    \includegraphics[width=\linewidth]{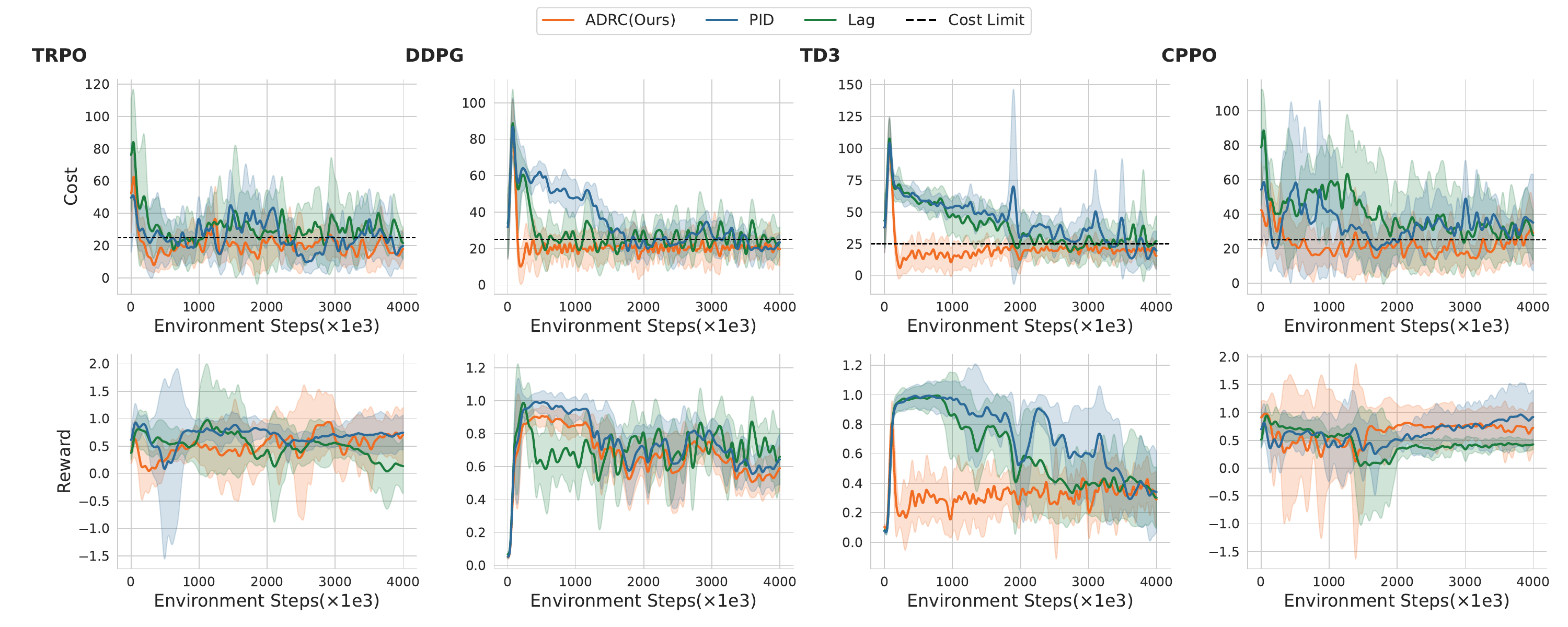}
    \caption{The training curves of CarGoal with various Lagrangian methods across different algorithms.}
    \label{fig:cargoal_performance}
\end{figure}

\begin{figure}[htbp]
    \centering
    \includegraphics[width=\linewidth]{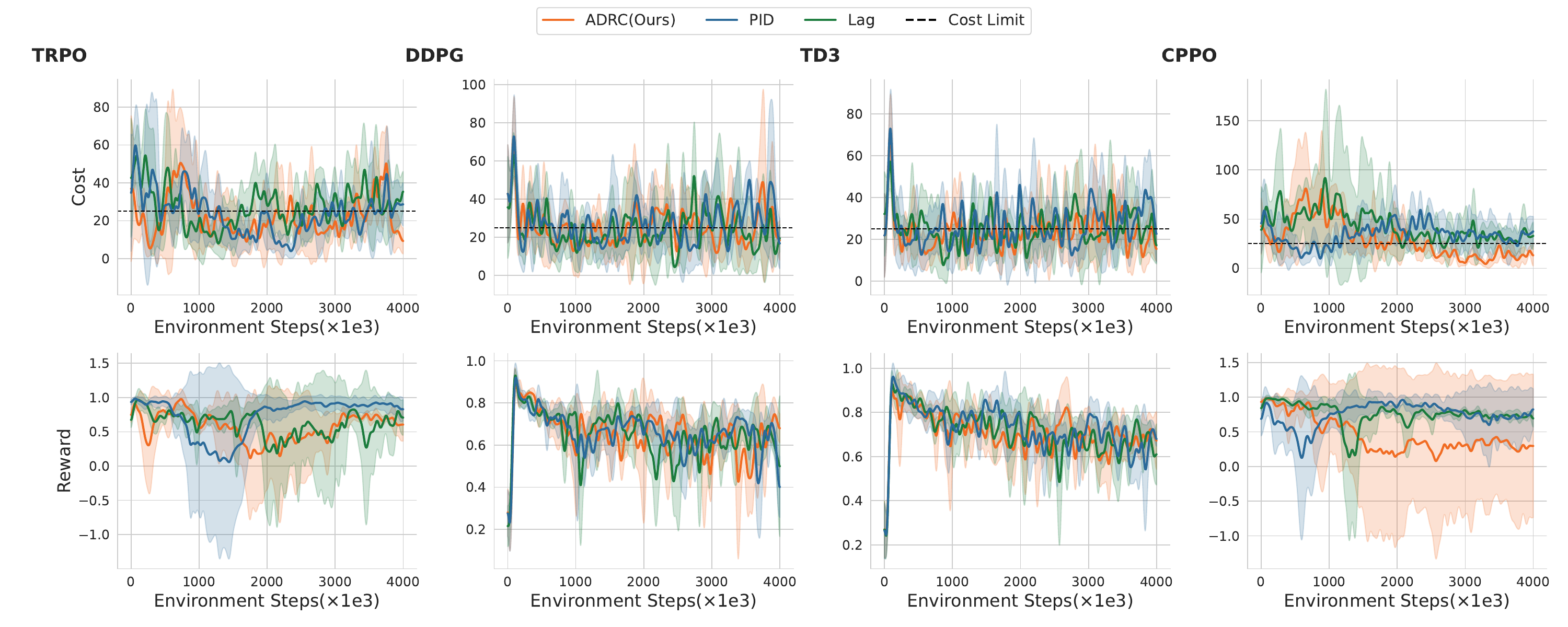}
    \caption{The training curves of CarPush with various Lagrangian methods across different algorithms}
    \label{fig:carpush_performance}
\end{figure}

\begin{table*}[htbp]
\centering
\small
\caption{Comparison of violation rate, magnitude, and average cost on CarGoal and CarPush.}
\resizebox{\textwidth}{!}{
\begin{tabular}{lcccccc}

\toprule
\textbf{Algorithm} 
& \multicolumn{3}{c}{\textbf{CarGoal}} 
& \multicolumn{3}{c}{\textbf{CarPush}} \\
\cmidrule(lr){2-4} \cmidrule(lr){5-7}
& \textbf{Vio. Rate (\%)} & \textbf{Magnitude} & \textbf{Avg. Cost} 
& \textbf{Vio. Rate (\%)} & \textbf{Magnitude} & \textbf{Avg. Cost} \\
\midrule
CPPOLag  & 65.20 ± 22.57 & 18.54 ± 11.41 & 40.05 ± 14.00
         & 68.45 ± 18.98 & 21.48 ± 16.81 & 43.38 ± 18.71 \\
CPPOPID  & 62.04 ± 18.18 & 12.95 ± 7.45 & 34.41 ± 9.04
         & 62.36 ± 10.66 & 12.40 ± 3.08 & 33.74 ± 3.88 \\
CPPOADRC & \textbf{34.97 ± 13.88} & \textbf{4.72 ± 2.14} & \textbf{21.99 ± 5.32}
         & \textbf{42.23 ± 15.34} & \textbf{11.68 ± 7.99} & \textbf{29.16 ± 10.19} \\
\midrule
DDPGLag  & 52.43 ± 0.12 & 7.25 ± 0.93 & 28.35 ± 0.44
         & 48.81 ± 1.62 & 8.20 ± 1.38 & 27.37 ± 1.19 \\
DDPGPID  & 65.52 ± 4.60 & 12.53 ± 0.72 & 35.03 ± 0.53
         & 48.81 ± 1.62 & 8.20 ± 1.38 & 27.37 ± 1.19 \\
DDPGADRC & \textbf{47.36 ± 1.90} & \textbf{2.88 ± 0.57} & \textbf{21.55 ± 0.29}
         & \textbf{42.43 ± 9.36} & \textbf{7.07 ± 1.69} & \textbf{25.70 ± 2.85} \\
\midrule
TD3Lag   & 70.47 ± 10.44 & 17.00 ± 1.29 & 38.90 ± 2.83
         & 46.97 ± 2.20 & 6.27 ± 1.53 & 25.51 ± 1.36 \\
TD3PID   & 80.94 ± 5.87 & 20.68 ± 3.94 & 43.43 ± 4.23
         & 49.83 ± 0.77 & 7.65 ± 1.15 & 26.66 ± 1.14 \\
TD3ADRC  & \textbf{40.62 ± 8.51} & \textbf{2.85 ± 0.31} & \textbf{20.65 ± 2.46}
         & \textbf{39.92 ± 1.66} & \textbf{5.15 ± 0.82} & \textbf{23.64 ± 0.93} \\
\midrule
TRPOLag  & 54.86 ± 6.74 & 10.46 ± 4.56 & 30.97 ± 4.69
         & 48.24 ± 6.24 & 8.51 ± 2.35 & 27.58 ± 3.16 \\
TRPOPID  & 44.79 ± 2.84 & 7.34 ± 1.31 & 25.84 ± 0.88
         & 40.05 ± 1.69 & 6.66 ± 1.82 & 23.92 ± 1.74 \\
TRPOADRC & \textbf{29.12 ± 3.70} & \textbf{3.44 ± 1.21} & \textbf{20.48 ± 0.99}
         & \textbf{34.75 ± 8.43} & \textbf{6.32 ± 2.13} & \textbf{22.40 ± 2.34} \\
\bottomrule
\end{tabular}
}
\label{table:car_goal_push}
\end{table*}

\subsubsection{Racecar Environments}

Figures \ref{fig:racecarbutton_performance} to Figure~\ref{fig:racecarpush_performance} present the training curves for the Ant environment across four tasks: Button, Circle, Goal, and Push. Each plot illustrates the episodic returns and costs averaged over five random seeds, with solid lines representing the mean and shaded areas denoting the variance.

\begin{figure}[htbp]
    \centering
    \includegraphics[width=\linewidth]{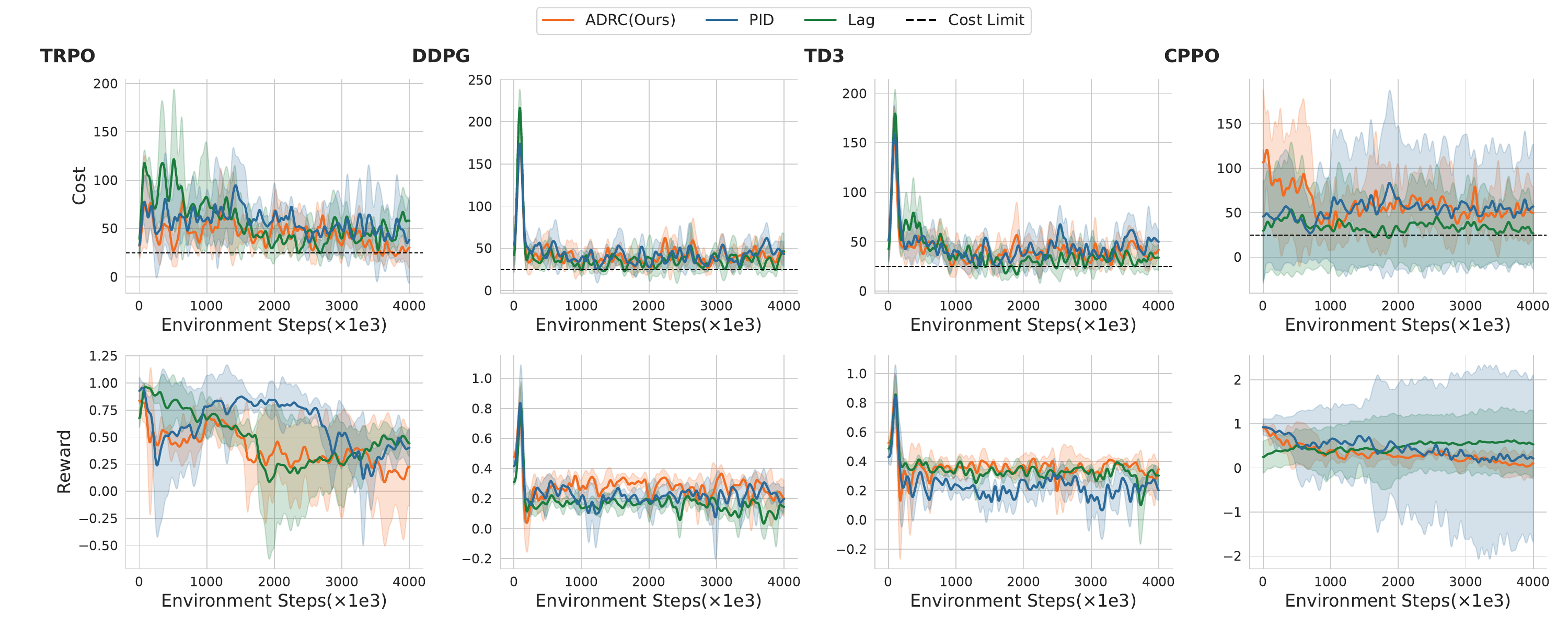}
    \caption{The training curves of RacecarButton with various Lagrangian methods across different algorithms.}
    \label{fig:racecarbutton_performance}
\end{figure}

\begin{figure}[htbp]
    \centering
    \includegraphics[width=\linewidth]{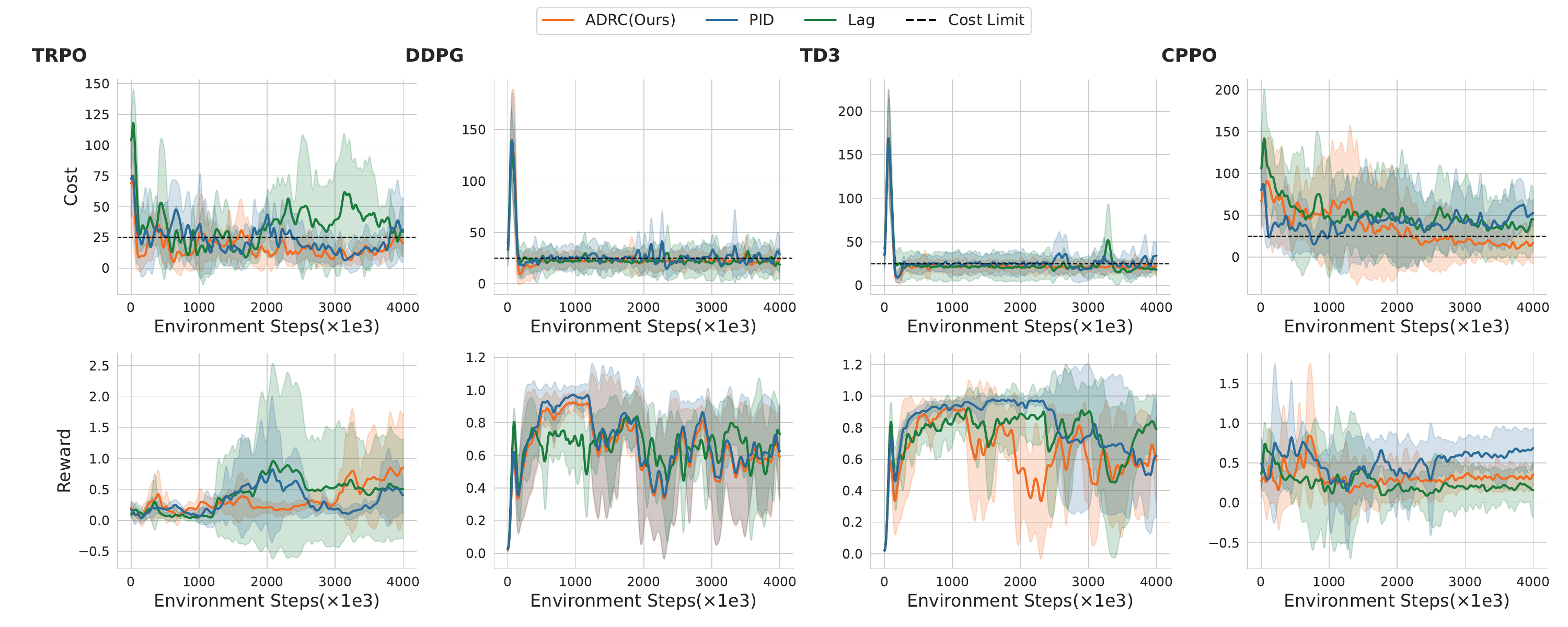}
    \caption{The training curves of RacecarCircle with various Lagrangian methods across different algorithms.}
    \label{fig:racecarcircle_performance}
\end{figure}

\begin{table*}[htbp]
\centering
\small
\caption{Comparison of violation rate, magnitude, and average cost on RacecarButton and RacecarCircle.}
\resizebox{\textwidth}{!}{
\begin{tabular}{lcccccc}

\toprule
\textbf{Algorithm} 
& \multicolumn{3}{c}{\textbf{RacecarButton}} 
& \multicolumn{3}{c}{\textbf{RacecarCircle}} \\
\cmidrule(lr){2-4} \cmidrule(lr){5-7}
& \textbf{Vio. Rate (\%)} & \textbf{Magnitude} & \textbf{Avg. Cost} 
& \textbf{Vio. Rate (\%)} & \textbf{Magnitude} & \textbf{Avg. Cost} \\
\midrule
CPPOLag  & 97.38 ± 2.85 & 65.75 ± 16.98 & 90.53 ± 17.10
         & 58.32 ± 31.47 & 30.72 ± 37.77 & 50.41 ± 41.68 \\
CPPOPID  & 97.37 ± 4.24 & 78.44 ± 43.36 & 103.21 ± 43.64
         & 56.82 ± 31.13 & 21.77 ± 25.28 & 40.95 ± 29.63 \\
CPPOADRC & \textbf{81.18 ± 20.97} & \textbf{35.76 ± 29.48} & \textbf{59.11 ± 31.27}
         & \textbf{39.94 ± 38.28} & \textbf{21.78 ± 31.79} & \textbf{35.72 ± 39.00} \\
\midrule
DDPGLag  & 75.88 ± 4.41 & 15.77 ± 3.10 & 39.47 ± 3.34
         & 50.52 ± 0.25 & 7.87 ± 0.74 & 25.03 ± 0.67 \\
DDPGPID  & 88.26 ± 2.48 & 20.12 ± 2.35 & 44.62 ± 2.47
         & 46.17 ± 1.40 & 7.94 ± 1.09 & 27.09 ± 0.56 \\
DDPGADRC & 85.18 ± 6.77 & \textbf{18.49 ± 2.43} & \textbf{42.87 ± 2.74}
         & 50.92 ± 2.27 & \textbf{5.00 ± 0.79} & \textbf{24.51 ± 0.33} \\
\midrule
TD3Lag   & 72.49 ± 2.60 & 16.31 ± 2.92 & 39.65 ± 3.12
         & 49.42 ± 0.19 & 8.22 ± 0.52 & 24.95 ± 0.52 \\
TD3PID   & 85.99 ± 7.60 & 21.46 ± 4.96 & 45.75 ± 5.43
         & 48.72 ± 1.24 & 8.57 ± 1.07 & 27.76 ± 1.30 \\
TD3ADRC  & 84.48 ± 4.33 & \textbf{18.37 ± 2.74} & \textbf{42.65 ± 2.93}
         & \textbf{42.08 ± 1.39} & \textbf{4.56 ± 0.79} & \textbf{24.21 ± 0.82} \\
\midrule
TRPOLag  & 87.31 ± 4.65 & 33.07 ± 7.84 & 57.34 ± 7.59
         & 51.39 ± 10.47 & 16.10 ± 10.65 & 34.48 ± 11.14 \\
TRPOPID  & 87.04 ± 3.69 & 32.94 ± 3.94 & 56.86 ± 3.65
         & 37.99 ± 3.47 & 7.92 ± 2.78 & 23.45 ± 1.84 \\
TRPOADRC & \textbf{80.06 ± 4.19} & \textbf{21.38 ± 5.00} & \textbf{45.00 ± 5.21}
         & \textbf{23.64 ± 3.83} & \textbf{3.54 ± 0.43} & \textbf{17.30 ± 1.32} \\
\bottomrule
\end{tabular}
}
\label{table:racecar_button_circle}
\end{table*}

\begin{figure}[htbp]
    \centering
    \includegraphics[width=\linewidth]{fig/training/RacecarGoal_performance.pdf}
    \caption{The training curves of RacecarGoal with various Lagrangian methods across different algorithms.}
    \label{fig:racecargoal_performance}
\end{figure}

\begin{figure}[htbp]
    \centering
    \includegraphics[width=\linewidth]{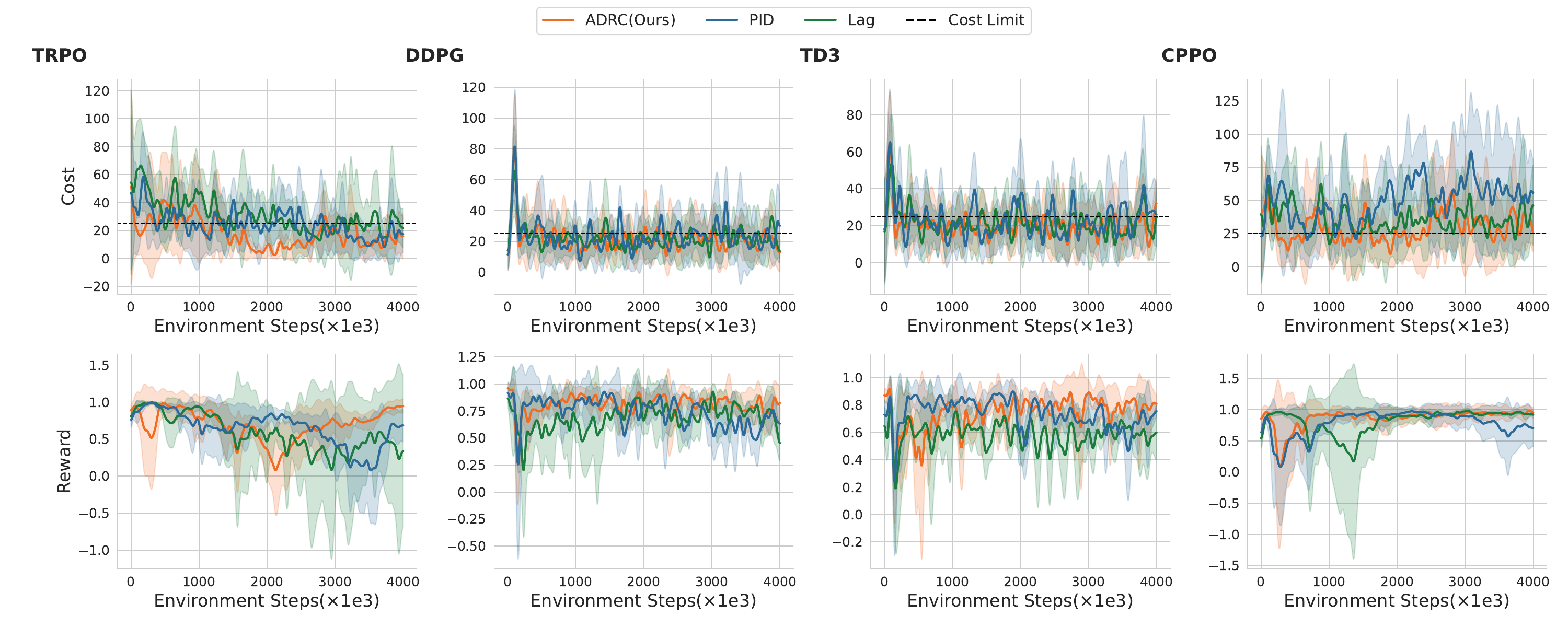}
    \caption{The training curves of RacecarPush with various Lagrangian methods across different algorithms.}
    \label{fig:racecarpush_performance}
\end{figure}

\begin{table*}[htbp]
\centering
\small
\caption{Comparison of violation rate, magnitude, and average cost on RacecarGoal and RacecarPush.}
\resizebox{\textwidth}{!}{
\begin{tabular}{lcccccc}
\toprule
\textbf{Algorithm} 
& \multicolumn{3}{c}{\textbf{RacecarGoal}} 
& \multicolumn{3}{c}{\textbf{RacecarPush}} \\
\cmidrule(lr){2-4} \cmidrule(lr){5-7}
& \textbf{Vio. Rate (\%)} & \textbf{Magnitude} & \textbf{Avg. Cost} 
& \textbf{Vio. Rate (\%)} & \textbf{Magnitude} & \textbf{Avg. Cost} \\
\midrule
CPPOLag  & 80.87 ± 19.17 & 31.18 ± 18.99 & 54.24 ± 21.01
         & 57.91 ± 22.12 & 15.45 ± 12.92 & 35.54 ± 15.56 \\
CPPOPID  & 72.30 ± 24.96 & 27.11 ± 15.58 & 49.02 ± 18.75
         & 70.84 ± 23.97 & 28.16 ± 16.84 & 49.67 ± 19.96 \\
CPPOADRC & \textbf{47.08 ± 21.59} & \textbf{12.31 ± 9.34} & \textbf{30.12 ± 12.74}
         & \textbf{47.28 ± 17.05} & \textbf{12.50 ± 7.05} & \textbf{29.35 ± 11.03} \\
\midrule
DDPGLag  & 71.44 ± 9.47 & 18.47 ± 4.13 & 40.76 ± 5.00
         & 39.58 ± 2.98 & 5.00 ± 0.49 & 22.52 ± 0.58 \\
DDPGPID  & 72.05 ± 4.09 & 17.93 ± 2.81 & 40.29 ± 2.99
         & 39.31 ± 1.84 & 6.89 ± 0.81 & 23.46 ± 0.98 \\
DDPGADRC & \textbf{68.41 ± 5.77} & \textbf{17.81 ± 1.34} & \textbf{39.41 ± 2.14}
         & \textbf{38.08 ± 3.96} & \textbf{5.10 ± 0.98} & \textbf{21.99 ± 1.13} \\
\midrule
TD3Lag   & 75.26 ± 2.13 & 19.93 ± 1.40 & 42.57 ± 1.59
         & 36.76 ± 1.53 & 5.25 ± 0.92 & 21.64 ± 0.97 \\
TD3PID   & 73.31 ± 6.06 & 19.19 ± 2.72 & 41.57 ± 3.22
         & 38.87 ± 1.99 & 6.76 ± 1.65 & 23.32 ± 2.45 \\
TD3ADRC  & \textbf{71.24 ± 3.00} & \textbf{17.55 ± 3.57} & \textbf{39.71 ± 3.94}
         & \textbf{36.21 ± 3.11} & \textbf{4.25 ± 1.50} & \textbf{20.70 ± 1.49} \\
\midrule
TRPOLag  & 64.31 ± 23.37 & 22.89 ± 18.69 & 43.67 ± 21.56
         & 51.18 ± 11.61 & 12.82 ± 3.76 & 31.19 ± 5.63 \\
TRPOPID  & 49.33 ± 15.00 & 11.70 ± 7.07 & 30.94 ± 8.81
         & 40.06 ± 1.74 & 7.98 ± 1.92 & 24.46 ± 1.55 \\
TRPOADRC & \textbf{34.03 ± 8.06} & \textbf{6.16 ± 2.37} & \textbf{22.02 ± 3.61}
         & \textbf{26.06 ± 11.09} & \textbf{5.41 ± 2.66} & \textbf{18.28 ± 5.41} \\
\bottomrule
\end{tabular}
}
\label{table:racecar_goal_push}
\end{table*}

To provide a more thorough and quantitative evaluation of our method, we report the results of experiments conducted on four challenging environments, AntButton, AntCircle, AntPush and AntGoal in Table~\ref{table:racecar_button_circle} and Table~\ref{table:racecar_goal_push}. The metrics compared include violation rate (\%), magnitude of violations, and average cost. Across all experiments, our ADRC method consistently outperforms baseline approaches (PID and Lagrange) in achieving lower violation rates and magnitudes, while maintaining competitive or reduced average costs.

\subsection{Velocity Control Results}
\label{sec:velocity_control_results}

To further evaluate our method’s performance in dynamic and velocity-sensitive environments, we conducted experiments on the Safety Velocity Control tasks, including SafetySwimmer and SafetyHopper. These tasks pose additional challenges by requiring agents to manage both positional constraints and velocity profiles. 

The following tables present the violation rates, violation magnitudes, average costs, and average rewards achieved by different methods. Across all settings, our ADRC Lagrangian method consistently demonstrates superior safety performance with competitive or improved final rewards compared to baseline PID Lagrangian methods.

\begin{table}[htbp]
\centering
\caption{Performance comparison on SafetySwimmer environment.}
\begin{tabular}{lcccc}
\toprule
Algorithm & Violate Rate (\%) & Magnitude & Avg Cost & Avg Reward \\
\midrule
CPPOPID & 28.33 & 1.84 & 23.64 & 32.54 \\
CPPOADRC & \textbf{6.95} & \textbf{1.56} & \textbf{18.34} & 29.07 \\
TRPOPID & 35.43 & 1.78 & 22.48 & 27.73 \\
TRPOADRC & \textbf{11.30} & 2.44 & 20.82 & \textbf{35.66} \\
\bottomrule
\end{tabular}
\label{tab:safetyswimmer}
\end{table}

\begin{table}[htbp]
\centering
\caption{Performance comparison on SafetyHopper environment.}
\begin{tabular}{lcccc}
\toprule
Algorithm & Violate Rate (\%) & Magnitude & Avg Cost & Avg Reward \\
\midrule
CPPOPID & 40.33 & 5.32 & 23.92 & \textbf{1365.60} \\
CPPOADRC & \textbf{17.40} & \textbf{1.84} & \textbf{17.35} & 1155.59 \\
TRPOPID & 39.33 & 7.52 & 24.41 & \textbf{1448.46} \\
TRPOADRC & \textbf{0.93} & \textbf{0.06} & \textbf{12.02} & 1080.80 \\
\bottomrule
\end{tabular}
\label{tab:safetyhopper}
\end{table}

These results validate that the ADRC-based methods significantly improve safety metrics (lower violation rate and cost) while maintaining comparable or strong reward performance in dynamic velocity control tasks. This further demonstrates the effectiveness and robustness of ADRC Lagrangian formulations under more complex and realistic settings.

\subsection{Comparison with State-of-the-Art Safe RL Algorithms}
\label{sec:compared_with_saferl}

To demonstrate the broader applicability and effectiveness of our ADRC-Lagrangian framework beyond traditional Lagrangian methods, we conduct comprehensive comparisons with state-of-the-art safe RL algorithms. Our evaluation includes both Lagrangian-based methods (RCPO and PDO)~\cite{rcpo,PDO} and non-Lagrangian approaches such as CUP~\cite{cup} and IPO~\cite{ipo}. This comparison validates our method's superiority across different safe RL paradigms and confirms that the benefits stem from ADRC's adaptive control principles rather than merely being artifacts of the Lagrangian framework.

We evaluate all methods on two challenging continuous control environments: HalfCheetah-Velocity and Hopper-Velocity from the Safety-Gymnasium benchmark. Each algorithm is trained with identical hyperparameters and evaluated using three random seeds. We report both training metrics (averaged over the entire training process) and final policy evaluation results to provide comprehensive performance assessment.

\begin{table}[htbp]
\centering
\small
\caption{Training performance on HalfCheetah-Velocity. Best results in \textbf{bold}, runner-up in \underline{underline}.}
\begin{tabular}{lcccc}
\toprule
\textbf{Algorithm} & \textbf{Vio. Rate (\%)} & \textbf{Magnitude} & \textbf{Avg. Cost} & \textbf{Avg. Reward} \\
\midrule
CUP       & 22.63±7.21 & 4.48±4.58 & 16.25±5.08 & 1532.23±255.05 \\
IPO       & 29.17±1.63 & 0.81±0.02 & 19.60±0.08 & 1460.56±210.91 \\
PDO       & 31.95±5.09 & 9.68±2.84 & 22.16±1.47 & \underline{1690.62±421.72} \\
RCPO      & 18.26±9.69 & 4.75±3.71 & 15.77±7.03 & 1497.99±410.94 \\
\midrule
RCPO-ADRC & \textbf{0.00±0.00} & \textbf{0.00±0.00} & \textbf{8.40±2.40} & 1329.62±293.58 \\
TRPO-ADRC & \underline{1.19±0.25} & \underline{0.09±0.10} & \underline{10.89±0.34} & \textbf{1743.09±295.33} \\
CPPO-ADRC & 8.53±12.06 & 0.55±0.78 & 15.36±2.05 & 1504.17±198.53 \\
\bottomrule
\end{tabular}
\label{tab:train_halfcheetah}
\end{table}

\begin{table}[htbp]
\centering
\small
\caption{Training performance on Hopper-Velocity. Best results in \textbf{bold}, runner-up in \underline{underline}.}
\begin{tabular}{lcccc}
\toprule
\textbf{Algorithm} & \textbf{Vio. Rate (\%)} & \textbf{Magnitude} & \textbf{Avg. Cost} & \textbf{Avg. Reward} \\
\midrule
CUP       & 37.11±3.59 & 5.47±0.59 & 21.73±1.03 & 1085.14±204.86 \\
IPO       & 51.99±8.89 & 1.68±0.18 & 24.70±0.84 & 1082.95±103.43 \\
PDO       & 27.56±9.25 & 8.42±3.48 & 20.00±7.01 & 1098.22±178.78 \\
RCPO      & 37.57±7.00 & 5.45±0.93 & 23.24±2.72 & \textbf{1247.72±292.68} \\
\midrule
RCPO-ADRC & \textbf{2.57±2.19} & \textbf{0.06±0.05} & \textbf{14.23±2.74} & 1186.87±70.94 \\
TRPO-ADRC & \underline{7.76±9.59} & \underline{0.33±0.40} & \underline{15.13±3.04} & \underline{1167.62±90.74} \\
CPPO-ADRC & 11.01±5.65 & 0.92±0.67 & 15.43±1.40 & 1083.94±68.49 \\
\bottomrule
\end{tabular}
\label{tab:train_hopper}
\end{table}
\begin{table}[htbp]
\centering
\small
\caption{Evaluation performance on HalfCheetah-Velocity. Best results in \textbf{bold}, runner-up in \underline{underline}.}
\begin{tabular}{lccc}
\toprule
\textbf{Algorithm} & \textbf{Reward} & \textbf{Cost} & \textbf{Length} \\
\midrule
CUP       & 2175.61±491.09 & 28.57±21.15 & 1000.00±0.00 \\
IPO       & 1819.75±292.45   & 16.10±12.99 & 1000.00±0.00 \\
PDO       & \textbf{2468.78±581.21} & \textbf{5.77±7.32} & 1000.00±0.00 \\
RCPO      & {2296.82±665.64} & 15.00±8.81  & 1000.00±0.00 \\
\midrule
RCPO-ADRC & 1642.22±211.39 & \underline{10.23±3.51}  & 1000.00±0.00 \\
TRPO-ADRC & \underline{2394.09±419.84} & 14.63±15.65 & 1000.00±0.00 \\
CPPO-ADRC & 2098.71±464.92 & 13.87±11.43 & 1000.00±0.00 \\
\bottomrule
\end{tabular}
\label{tab:eval_halfcheetah}
\end{table}
\begin{table}[htbp]
\centering
\small
\caption{Evaluation performance on Hopper-Velocity. Best results in \textbf{bold}, runner-up in \underline{underline}.}
\begin{tabular}{lccc}
\toprule
\textbf{Algorithm} & \textbf{Reward} & \textbf{Cost} & \textbf{Length} \\
\midrule
CUP       & 1326.84±386.22 & 26.90±10.59 & 854.40±165.37 \\
IPO       & 1216.01±129.12 & 27.57±4.95  & 797.33±118.41 \\
PDO       & 1177.19±135.02 & {18.03±25.50} & 797.57±167.85 \\
RCPO      & \textbf{1554.56±223.49} & 37.53±24.35 & 979.30±29.27 \\
\midrule
RCPO-ADRC & 1248.94±220.47 & \textbf{9.27±5.18}  & 818.10±128.67 \\
TRPO-ADRC & \underline{1470.61±152.19} & \underline{10.67±3.76}  & \textbf{1000.00±0.00} \\
CPPO-ADRC & 1322.83±157.06 & 10.43±7.12  & \underline{910.20±127.00} \\
\bottomrule
\end{tabular}
\label{tab:eval_hopper}
\end{table}

Tables~\ref{tab:train_halfcheetah} and \ref{tab:train_hopper} show that our ADRC variants markedly enhance \emph{training-time stability}: compared with existing safe RL methods, ADRC achieves consistently lower violation rates, smaller violation magnitudes, and reduced average costs. 
For example, on HalfCheetah, RCPO-ADRC eliminates violations entirely (0.00±0.00\% vs.\ 18.26±9.69\% for RCPO) and attains the lowest training cost (8.40±2.40); on Hopper, RCPO-ADRC sharply suppresses violations (2.57±2.19\%) with the smallest magnitudes (0.06±0.05) and cost (14.23±2.74). 
Crucially, this improved safety does \emph{not} come at the expense of learning quality: ADRC maintains competitive training rewards—and can be better—e.g., TRPO-ADRC attains the highest training reward on HalfCheetah (1743.09±295.33) with only 1.19±0.25\% violations, indicating stable and efficient optimization.

Tables~\ref{tab:eval_halfcheetah} and \ref{tab:eval_hopper} further examine \emph{convergence-time} performance (evaluation). 
Even without explicitly measuring constraints at evaluation, ADRC remains competitive—or superior—on task metrics: on HalfCheetah, TRPO-ADRC reaches runner-up reward (2394.09±419.84), close to the best; on Hopper, RCPO-ADRC achieves the lowest evaluation cost (9.27±5.18) and TRPO-ADRC sustains the maximum horizon (1000.00±0.00) with strong reward (1470.61±152.19). 
Together, these results confirm that ADRC improves training stability and safety while preserving (and in cases improving) final task performance and convergence behavior, offering a plug-and-play safety enhancement over existing safe RL baselines.

\subsection{Parameter Sensitivity Analysis}

\label{sec:ablation_study}
\subsubsection{Tuning parameter $k_{ap}$}
To assess the effect of the control gain $k_{ap}$ on the performance of ADRC-based Lagrangian methods, we conducted a series of ablation experiments. Specifically, we evaluated three distinct values of $k_{ad}$ ($0.01, 0.1, 1$) and compared them with existing approaches, including PID-based and classical Lagrangian methods. These experiments were carried out in two challenging environments, \textit{CarPush} and \textit{RacecarGoal}, using two reinforcement learning algorithms, \textit{PPO} and \textit{TRPO}. The results highlight ADRC's ability to dynamically adjust the control gain, demonstrating superior adaptability and improved performance with carefully selected parameter settings.
\label{sec:kap}

\begin{table*}[htbp]
\centering
\scriptsize
\caption{The proportion of constraint violations during training (Vio. Rate), the average magnitude of violations (Magnitude), and the average cost (Avg. Cost) for TRPO and PPO algorithms across CarPush and RacecarGoal environments with various $k_{ap}$ values, PID, and Lag methods. Bold values indicate better performance compared to PID.}
\resizebox{\textwidth}{!}{
\begin{tabular*}{\textwidth}{@{\extracolsep{\fill}}ccccc ccc@{}}
\toprule
\multirow[c]{2}{*}{\textbf{Algorithm}} & \multirow[c]{2}{*}{\textbf{Method}} & \multicolumn{3}{c}{\textbf{CarPush}} & \multicolumn{3}{c}{\textbf{RacecarGoal}} \\ 
\cmidrule(lr){3-5} \cmidrule(lr){6-8}
 & & \textbf{Vio. Rate (\%)} & \textbf{Magnitude} & \textbf{Avg. Cost} & \textbf{Vio. Rate (\%)} & \textbf{Magnitude} & \textbf{Avg. Cost} \\
\midrule
\multirow{5}{*}{TRPO} 
& $k_{ap}=1$ & \textbf{36.38} & \textbf{4.51}& \textbf{21.99} & \textbf{32.73} & \textbf{5.78 }& \textbf{22.27 }\\
& $k_{ap}=0.1$ & \textbf{33.08} & 5.78 & \textbf{21.22} & \textbf{29.05} & \textbf{3.44} & \textbf{18.95} \\
& $k_{ap}=0.01$ & \textbf{20.43} & \textbf{2.50} & \textbf{15.42} & \textbf{23.83} & \textbf{4.29} & \textbf{19.36} \\
& PID & 38.40 & 4.84 & 21.96 & 44.60 & 7.04 & 26.15 \\
& Lag & 39.88 & 5.38 & 23.31 & 87.33 & 37.36 & 61.53 \\
\midrule
\multirow{5}{*}{CPPO} 
& $k_{ap}=1$ & 86.08 & 24.38 & 48.31 & \textbf{69.98} & \textbf{18.62} & \textbf{39.87} \\
& $k_{ap}=0.1$ & \textbf{16.25} & \textbf{4.05} & \textbf{13.46} & \textbf{29.05} & \textbf{3.44} & \textbf{18.95} \\
& $k_{ap}=0.01$ &\textbf{ 42.83} & \textbf{5.75} &\textbf{ 23.56 }& \textbf{23.83} & \textbf{4.29} & \textbf{19.36} \\
& PID & 67.28 & 12.80 & 34.67 & 79.25 & 23.88 & 46.44 \\
& Lag & 46.43 & 6.67 & 25.90 & 84.35 & 30.16 & 53.38 \\
\bottomrule
\end{tabular*}
}

\label{tab:kap_results}
\end{table*}

As shown in Table~\ref{tab:kap_results}, we report the \textbf{Violation Rate (Vio. Rate)}, the \textbf{Magnitude} of constraint violations, and the \textbf{Average Cost (Avg. Cost)} for both the \textit{CarPush} and \textit{RacecarGoal} environments using the \textit{TRPO} and \textit{PPO} algorithms. The results demonstrate the superior performance of our ADRC approach with varying $k_{ap}$ values compared to baseline methods (PID and Lag). Specifically:
\begin{itemize}
    \item For the \textbf{CarPush} environment:
    \begin{itemize}
        \item Under \textbf{TRPO}, the configuration $k_{ap}=0.01$ achieves the lowest violation rate (20.43\%) and magnitude (2.50), alongside a significantly reduced average cost (15.42), outperforming both PID and Lag.
        \item For \textbf{CPPO}, $k_{ap}=0.1$ shows remarkable results, with a violation rate of 16.25\%, the smallest magnitude (4.05), and the lowest average cost (13.46). This highlights the adaptability of ADRC at this gain level.
    \end{itemize}
    \item For the \textbf{RacecarGoal} environment:
    \begin{itemize}
        \item With \textbf{TRPO}, $k_{ap}=0.01$ again demonstrates the best performance, achieving a violation rate of 23.83\%, a moderate magnitude (4.29), and a reduced average cost (19.36). This represents a clear improvement over both PID and Lag methods.
        \item Similarly, under \textbf{CPPO}, $k_{ap}=0.1$ achieves the best performance with a violation rate of 29.05\%, the smallest magnitude (3.44), and the lowest average cost (18.95). These results further emphasize ADRC's effectiveness.
    \end{itemize}
    \item For both environments, the baseline PID and Lag methods generally exhibit higher violation rates, magnitudes, and costs. Lag in particular performs poorly, especially in the \textit{RacecarGoal} environment, where it yields the highest violation rates and costs.
\end{itemize}

These results confirm that our ADRC method, with its dynamic control gain adjustments, consistently outperforms traditional methods, particularly when $k_{ap}=0.01$ or $k_{ap}=0.1$, demonstrating its robustness and adaptability across diverse environments and algorithms. 
\begin{figure}[htbp]
    \centering
    \includegraphics[width=\linewidth]{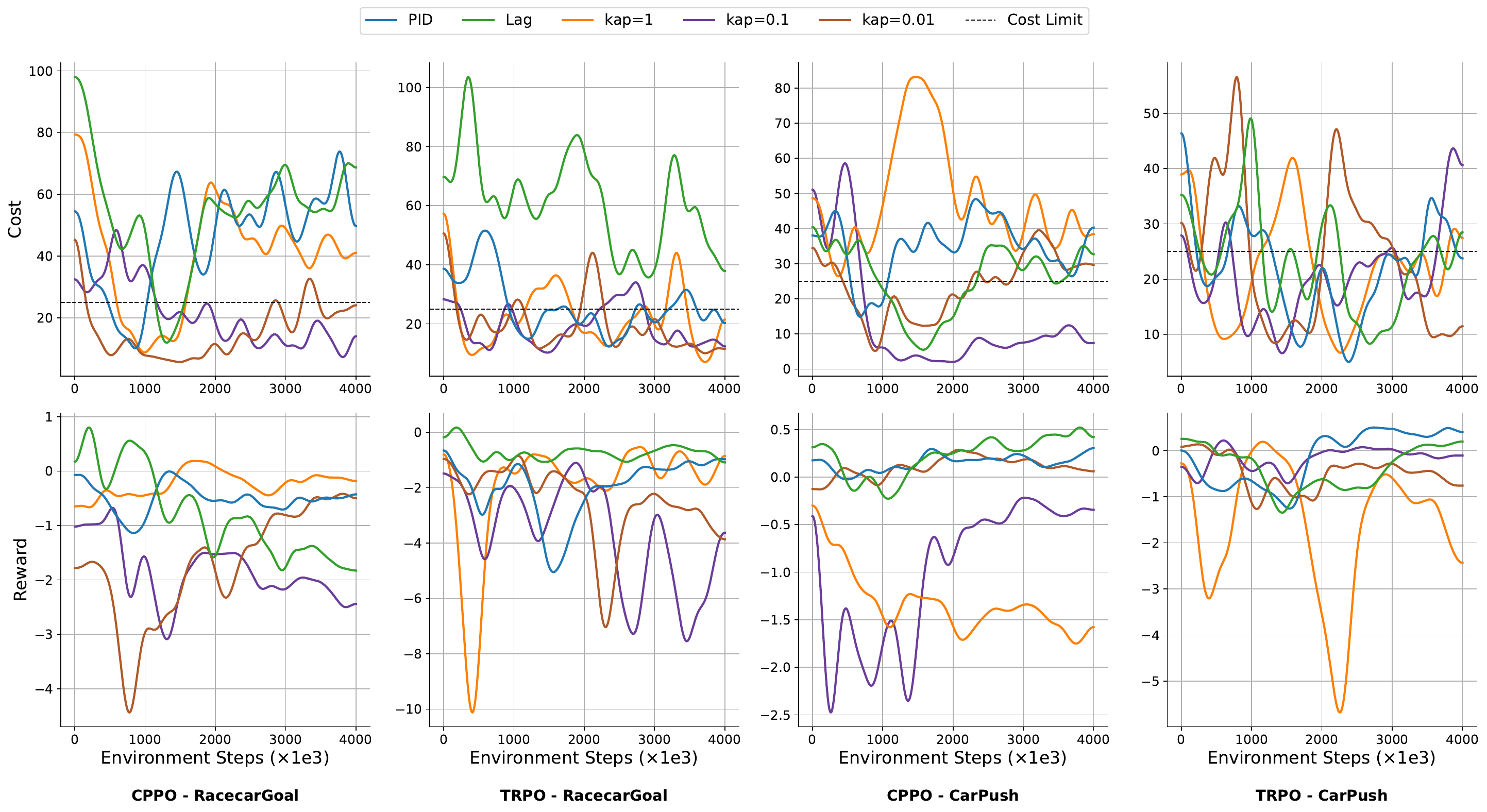}
    \caption{The training curve for TRPO and CPPO algorithms across CarPush and RacecarGoal environments with various $k_{ap}$ values, PID, and Lag methods.}
    \label{fig:kap}
\end{figure}

Figure~\ref{fig:training_c_r} provides the training curves for reward and cost across the evaluated $k_{ap}$ values, PID, and Lag methods. These curves illustrate the consistent performance improvements of our method throughout the training process. Our approach not only converges more effectively but also demonstrates a more favorable trade-off between reward maximization and cost minimization.
\subsubsection{Tuning parameter $k_{ad}$}
To assess the effect of the control gain $k_{ad}$ on the performance of ADRC-based Lagrangian methods, we conducted a series of ablation experiments. Specifically, we evaluated three distinct values of $k_{ad}$ ($0.01, 0.1, 1$) and compared them with existing approaches, including PID-based and classical Lagrangian methods. These experiments were carried out in two challenging environments, \textit{CarPush} and \textit{RacecarGoal}, using two reinforcement learning algorithms, \textit{CPPO} and \textit{TRPO}. The results highlight ADRC's ability to dynamically adjust the control gain, demonstrating superior adaptability and improved performance with carefully selected parameter settings.
\label{sec:kad}
To evaluate the impact of the tuning parameter $k_{ad}$ on ADRC Lagrangian methods' performance, we conducted ablation experiments by selecting three different values of $k_{ad}={0.01,0.1,1}$ and comparing them against existing methods, including PID Lagrangian methods and classical Lagrangian methods. The experiments were performed across two environments which are CarPush and RacecarGoal and adopt two algorithms which are CPPO and TRPO. 
\begin{table*}[htbp]
\centering
\scriptsize
\caption{The proportion of constraint violations during training (Vio. Rate), the average magnitude of violations (Magnitude), and the average cost (Avg. Cost) for TRPO and CPPO algorithms across CarPush and RacecarGoal environments with various $k_{ad}$ values, PID, and Lag methods. Bold values indicate better performance compared to PID.}
\resizebox{\textwidth}{!}{
\begin{tabular*}{\textwidth}{@{\extracolsep{\fill}}ccccc ccc@{}}
\toprule
\multirow[c]{2}{*}{\textbf{Algorithm}} & \multirow[c]{2}{*}{\textbf{Method}} & \multicolumn{3}{c}{\textbf{CarPush}} & \multicolumn{3}{c}{\textbf{RacecarGoal}} \\ 
\cmidrule(lr){3-5} \cmidrule(lr){6-8}
 & & \textbf{Vio. Rate (\%)} & \textbf{Magnitude} & \textbf{Avg. Cost} & \textbf{Vio. Rate (\%)} & \textbf{Magnitude} & \textbf{Avg. Cost} \\
\midrule
\multirow{5}{*}{TRPO} 
& $k_{ad}=1$ & \textbf{38.23} & 6.16 & 23.11 & \textbf{28.55} & \textbf{6.17} & \textbf{20.23} \\
& $k_{ad}=0.1$ & \textbf{36.68} & 6.29 & 21.99 & \textbf{38.25} & 9.12 & \textbf{23.92} \\
& $k_{ad}=0.01$ & \textbf{30.20} & \textbf{3.86} & \textbf{20.36} & \textbf{39.08} & \textbf{5.75 }& \textbf{23.33} \\
& PID & 38.40 & 4.84 & 21.96 & 44.60 & 7.04 & 26.15 \\
& Lag & 39.88 & 5.38 & 23.31 & 87.33 & 37.36 & 61.53 \\
\midrule
\multirow{5}{*}{CPPO} 
& $k_{ad}=1$ & \textbf{16.20} & \textbf{1.78} & \textbf{16.60} & \textbf{48.68} & \textbf{9.12} & \textbf{27.80} \\
& $k_{ad}=0.1$ & \textbf{15.08} & \textbf{5.06} & \textbf{15.45} & \textbf{48.55} & \textbf{8.59} & \textbf{27.80} \\
& $k_{ad}=0.01$ & \textbf{16.25} & \textbf{4.05} & \textbf{13.46} & \textbf{33.08} & \textbf{5.75} & \textbf{23.33} \\
& PID & 67.28 & 12.80 & 34.67 & 79.25 & 23.88 & 46.44 \\
& Lag & 46.43 & 6.67 & 25.90 & 84.35 & 30.16 & 53.38 \\
\bottomrule
\end{tabular*}
}

\label{tab:kad_results}
\end{table*}

As shown in Table~\ref{tab:tuning_kap_kad}, we report the \textbf{Violation Rate (Vio. Rate)}, the \textbf{Magnitude} of constraint violations, and the \textbf{Average Cost (Avg. Cost)} for both the \textit{CarPush} and \textit{RacecarGoal} environments using the \textit{TRPO} and \textit{CPPO} algorithms. The results highlight the superior performance of our ADRC approach with varying $k_{ad}$ values compared to the baseline methods (PID and Lag). Specifically:
\begin{itemize}
    \item For the \textbf{CarPush} environment:
    \begin{itemize}
        \item Under \textbf{TRPO}, $k_{ad}=0.01$ achieves the lowest violation rate (30.20\%) and the smallest magnitude (3.86), alongside a reduced average cost (20.36). This indicates better constraint satisfaction and efficiency compared to PID and Lag.
        \item For \textbf{CPPO}, $k_{ad}=0.1$ yields the best performance with the lowest violation rate (15.08\%), a moderate magnitude (5.06), and the smallest average cost (15.45). These results highlight ADRC's adaptability at this parameter setting.
    \end{itemize}
    \item For the \textbf{RacecarGoal} environment:
    \begin{itemize}
        \item With \textbf{TRPO}, $k_{ad}=1$ shows excellent performance, achieving the lowest violation rate (28.55\%) and average cost (20.23), alongside a relatively small magnitude (6.17).
        \item Under \textbf{CPPO}, $k_{ad}=0.01$ demonstrates the best results, with a low violation rate (33.08\%), a reduced magnitude (5.75), and the smallest average cost (23.33). This showcases ADRC's ability to manage constraints effectively in this challenging environment.
    \end{itemize}
    \item Across both environments, the baseline methods (PID and Lag) consistently exhibit higher violation rates, larger magnitudes, and higher costs. Lag performs particularly poorly, with significantly higher metrics, especially in the \textit{RacecarGoal} environment, where it records the highest violation rate (87.33\%) and average cost (61.53).
\end{itemize}

\begin{figure}[htbp]
    \centering
    \includegraphics[width=\linewidth]{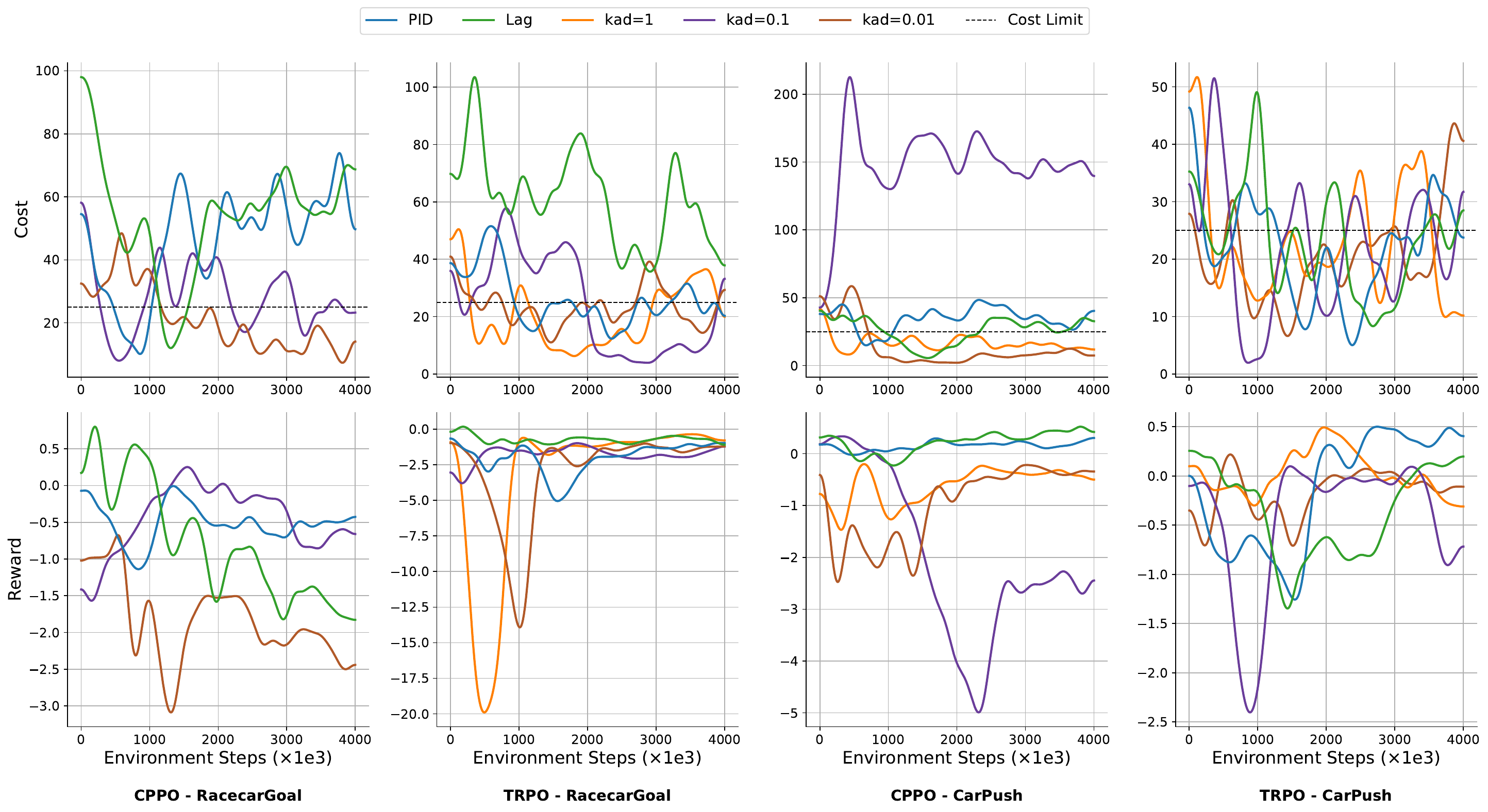}
    \caption{The training curve for TRPO and CPPO algorithms across CarPush and RacecarGoal environments with various $k_{ad}$ values, PID, and Lag methods.}
    \label{fig:kad}
\end{figure}

\subsubsection{Tuning parameter $c_{r}$}
\label{sec:appendix_tuning_c_r}

To evaluate the impact of the tuning parameter $c_r$ on ADRC Lagrangian methods' performance, we conducted ablation experiments by selecting five different values of $c_r={0.05,0.1,0.15,0.2,0.25}$ and comparing them against existing methods, including PID Lagrangian methods and classical Lagrangian methods. The experiments were performed across two environments which are CarPush and RacecarGoal and adopt two algorithms which are CPPO and TRPO. 

As shown in Table~\ref{tab:entire_c_r}, we report the \textbf{Violation Rate (Vio. Rate)}, the \textbf{Magnitude} of constraint violations, and the \textbf{Average Cost (Avg. Cost)}. The results demonstrate that our method consistently outperforms the baseline methods (PID Lagrangian methods and classical Lagrangian methods) across a majority of the $c_r$ values. Specifically:
\begin{itemize}
    \item For the {CarPush} environment, our method achieves lower violation rates and magnitudes in most cases, while maintaining competitive average costs.
    \item Similarly, in the {RacecarGoal} environment, our approach demonstrates significant improvements, particularly with $c_r = 0.1$ and $c_r = 0.2$, where it achieves the lowest violation rates and magnitudes.
\end{itemize}

\begin{table*}[htbp]
\centering
\scriptsize
\caption{The proportion of constraint violations during training (Vio. Rate), the average magnitude of violations (Magnitude), and the average cost (Avg. Cost) for TRPO and CPPO algorithms across CarPush and RacecarGoal environments with various $c_r$ values, PID, and Lag methods. Bold values indicate the better performance compared to PID.}
\resizebox{\textwidth}{!}{
\begin{tabular*}{\textwidth}{@{\extracolsep{\fill}}ccccc ccc@{}}
\toprule
\multirow[c]{2}{*}{\textbf{Algorithm}} & \multirow[c]{2}{*}{\textbf{Method}} & \multicolumn{3}{c}{\textbf{CarPush}} & \multicolumn{3}{c}{\textbf{RacecarGoal}} \\ 
\cmidrule(lr){3-5} \cmidrule(lr){6-8}
 & & \textbf{Vio. Rate (\%)} & \textbf{Magnitude} & \textbf{Avg. Cost} & \textbf{Vio. Rate (\%)} & \textbf{Magnitude} & \textbf{Avg. Cost} \\
\midrule
\multirow{7}{*}{TRPO} 
& Lag & 43.83 & 7.99 & 26.19 & 87.33 & 37.36 & 61.53 \\
& PID & 38.40 & 4.84 & 21.96 & 44.60 & 7.04 & 26.15 \\
& $c_r=0.05$ & 46.25 & 6.35 & 24.79 & \textbf{33.98} & \textbf{5.25} & \textbf{20.83} \\
& $c_r=0.1$ & \textbf{30.20} & \textbf{3.86} & \textbf{20.36} & \textbf{29.05} & \textbf{3.44} & \textbf{18.95} \\
& $c_r=0.15$ & \textbf{34.83} & 10.92 & 27.00 & \textbf{31.25} & \textbf{5.34} & \textbf{21.16} \\
& $c_r=0.2$ & \textbf{28.60} & 7.32 & \textbf{20.72} & \textbf{40.65} & \textbf{6.10} & \textbf{23.67} \\
& $c_r=0.25$ & \textbf{34.13} & 6.26 & 22.46 & \textbf{38.50} & \textbf{6.71} & \textbf{23.40} \\
\midrule
\multirow{7}{*}{CPPO} 
& Lag & 46.43 & 6.67 & 25.90 & 84.35 & 30.16 & 53.38 \\
& PID & 67.28 & 12.80 & 34.67 & 79.25 & 23.88 & 46.44 \\
& $c_r=0.05$ & \textbf{44.73} & \textbf{5.95} & \textbf{24.80} & \textbf{34.38} & \textbf{3.90} & \textbf{22.45} \\
& $c_r=0.1$ & \textbf{16.25} & \textbf{4.05} & \textbf{13.46} & \textbf{33.08} & \textbf{5.78} & \textbf{21.22} \\
& $c_r=0.15$ & \textbf{56.70} & \textbf{10.22} & \textbf{30.40} & \textbf{52.88} & \textbf{10.69} & \textbf{31.37} \\
& $c_r=0.2$ & \textbf{12.30} & \textbf{2.31} & \textbf{13.45} & \textbf{48.95} & \textbf{8.77} & \textbf{26.91 }\\
& $c_r=0.25$ & \textbf{34.00} & \textbf{6.12} & \textbf{22.09} & \textbf{62.83} & \textbf{13.37}& \textbf{33.99} \\
\bottomrule
\end{tabular*}
}

\label{tab:entire_c_r}
\end{table*}

Figure~\ref{fig:training_c_r} provides the training curves for reward and cost across the evaluated $c_r$ values, PID, and Lag methods. These curves illustrate the consistent performance improvements of our method throughout the training process. Our approach not only converges more effectively but also demonstrates a more favorable trade-off between reward maximization and cost minimization.

Overall, the results highlight the robustness of our method, as it achieves superior performance in the majority of scenarios. This indicates that the choice of $c_r$ significantly influences the balance between reward and cost, and our approach consistently outperforms existing methods under comparable conditions.
\begin{figure*}[htbp]
    \centering
    \includegraphics[width=\linewidth]{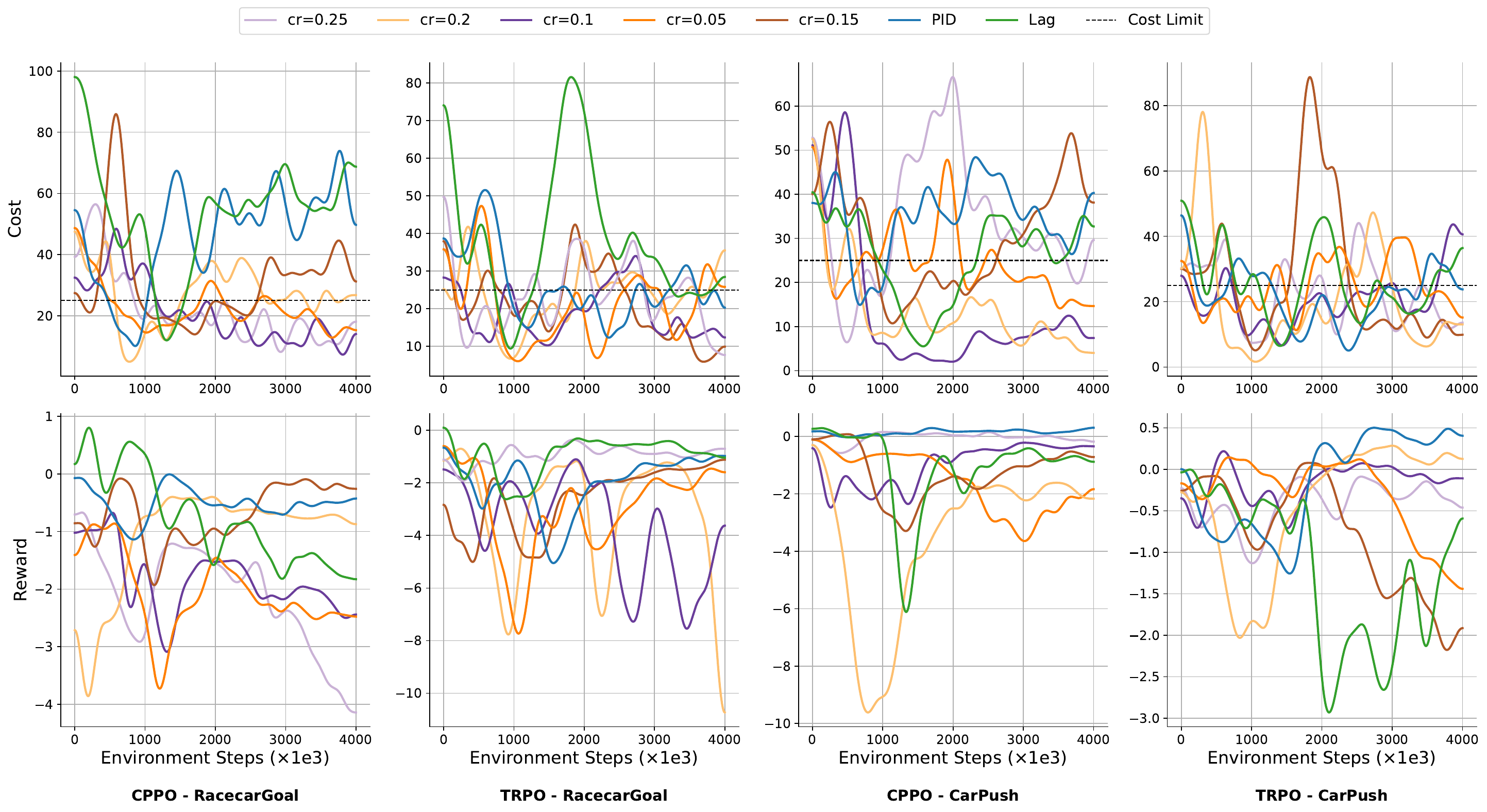}
    \caption{The training curve for TRPO and CPPO algorithms across CarPush and RacecarGoal environments with various $c_r$ values, PID, and Lag methods.}
    \label{fig:training_c_r}
\end{figure*}

\subsection{Noise Sensitivity Analysis}
\label{sec:noise_sensitivity}

\begin{table}[htbp]
\centering
\small
\caption{Training noise sensitivity on Swimmer-Velocity.}
\begin{tabular}{lcccc}
\toprule
\textbf{Method / $\sigma$} & \textbf{Vio. Rate (\%)} & \textbf{Magnitude} & \textbf{Avg. Cost} & \textbf{Avg. Reward} \\
\midrule
PID       & 35.43 & 1.77 & 22.47 & 27.73 \\
$\sigma=0$  & 11.30 & 2.44 & 20.82 & 35.66 \\
$\sigma=2$  & 10.93 & 2.55 & 21.93 & 22.41 \\
$\sigma=5$  &  \textbf{5.93} & 1.62 &  \textbf{8.04} & 16.18 \\
$\sigma=10$ & 23.95 & 3.55 & 19.93 & 12.41 \\
\bottomrule
\end{tabular}
\label{tab:train_noise_swimmer}
\end{table}

\begin{table}[htbp]
\centering
\small
\caption{Evaluation noise sensitivity on Swimmer-Velocity.}
\begin{tabular}{lcccc}
\toprule
\textbf{Method / $\sigma$} & \textbf{Avg. Cost} & \textbf{Vio. Rate (\%)} & \textbf{Magnitude} & \textbf{Avg. Reward} \\
\midrule
PID  & 23.05 & 39.79 & 6.23  & 32.19 \\
$\sigma=0$  & 18.40 & 12.07 & 11.35 & 39.55 \\
$\sigma=2$  & 18.00 & 27.60 & 14.52 & 19.72 \\
$\sigma=5$  &  \textbf{4.16} & 4.57  & 9.18  & 17.23 \\
$\sigma=10$ & 18.20 & 28.72 & 15.06 & 20.51 \\
\bottomrule
\end{tabular}
\label{tab:eval_noise_swimmer}
\end{table}

Tables~\ref{tab:train_noise_swimmer}--\ref{tab:eval_noise_swimmer} report the sensitivity of our TRPO-based ADRC approach to injected noise in the \textsc{Swimmer}-Velocity environment. The results demonstrate that TRPO-ADRC is robust to disturbances: compared with PID-type controllers, TRPO-ADRC achieves consistently lower violation rates, smaller violation magnitudes, and reduced average costs, while maintaining stable reward learning. For example, during training, TRPO-ADRC reduces the violation rate from 35.43\% (PID) to 11.30\% under $\sigma{=}0$, and further to only 5.93\% under $\sigma{=}5$, with the average cost dropping from 22.47 to 8.04. 

Importantly, these safety and stability benefits do not compromise convergence performance. At evaluation, TRPO-ADRC achieves competitive or even higher rewards while retaining robustness. Under $\sigma{=}0$, TRPO-ADRC attains higher reward than TRPO-PID (39.55 vs.\ 32.19) while simultaneously lowering both cost and violation rate. Even when the disturbance level increases ($\sigma{=}5,10$), TRPO-ADRC sustains reasonable rewards with substantially reduced safety violations compared to PID. These findings confirm that TRPO-ADRC improves training stability and safety without hindering convergence, demonstrating strong robustness to noise in the \textsc{Swimmer}-Velocity environment.

\subsection{Ablation Study}
\label{sec:ablation_in_appendix}
To evaluate the contribution of key components in the ADRC-Lagrangian framework, we conducted an ablation study by systematically modifying specific features of the proposed method. Specifically, we examined the impact of replacing the transient process \(r(t)\) with a static reference signal and fixing the dynamically adjusted compensation gain \(\omega_o\) to a constant value. These modifications simplify the framework to a configuration resembling a PID-based Lagrangian method, enabling a fair comparison of their relative contributions.

In the first ablation, the transient process \(r(t)\) is replaced with a fixed reference signal \(r(t) = d\), as used in traditional PID Lagrangian methods. While \(r(t)\) is designed in the full ADRC framework to provide a smooth transition toward the cost threshold denoted as $d$, setting \(r(t) = d\) eliminates this smoothing effect. This simplification forces the system to directly track the constant reference signal, potentially causing abrupt updates to the policy parameters and destabilizing training. The updating law is transformed into:
\begin{equation}
\lambda_t = k_{ap}(x_1 - d) + k_{ad}x_2 + \omega_o k_{ap} \int_0^t(x_1(\tau) - d) d\tau.
\end{equation}

For the second ablation, we fixed the compensation gain \(\omega_o\) to ensure its parameters match those of the PID baseline. Specifically, we solved the following equations to determine fixed values for \(\omega_o\), \(k_{ap}\), and \(k_{ad}\):
\begin{equation}
\label{eq:ablated_params}
\begin{aligned}
k_{ap} + \omega_o k_{ad} &= k_p, \\
\omega_o + k_{ad} &= k_d, \\
\omega_o k_{ap} &= k_i.
\end{aligned}
\end{equation}
The solutions to these equations yield parameter values that maintain equivalence with the PID updating law while removing the adaptivity of \(\omega_o\). With these parameters, the updating law for the Lagrangian multiplier \(\lambda_t\) reduces to:
\begin{equation}
\label{eq:ablated_law}
\lambda_t = k_p(x_1 - r) + k_{d}(x_2 - \dot{r}) + k_i \int_0^t(x_1(\tau) - r(\tau)) d\tau - \ddot{r}.
\end{equation}

The ablation experiments evaluate the importance of the dynamic transient process \(r(t)\) and the adaptive compensation gain \(\omega_o\) in the ADRC-Lagrangian framework. Specifically, we tested two simplified configurations: "delete\_r(t)," where the transient process \(r(t)\) is replaced with a static reference signal \(r(t) = d\), and "delete\_$\omega_o$," where the dynamic adjustment of \(\omega_o\) is replaced with fixed parameter values derived from PID-based methods. These modifications isolate the contributions of each component while retaining equivalent control parameters. The experiments were conducted in the CarButton and RacecarGoal environments using TRPO and CPPO as base algorithms, with the results summarized in Table~\ref{tab:ablation_results}.

\begin{table*}[htbp]
\centering
\scriptsize
\caption{The proportion of constraint violations during training (Vio. Rate), the average magnitude of violations (Magnitude), and the average cost (Avg. Cost) for TRPO and CPPO algorithms across CarButton and RacecarGoal environments with various methods. Bold values indicate better performance compared to PID.}
\resizebox{\textwidth}{!}{
\begin{tabular*}{\textwidth}{@{\extracolsep{\fill}}ccccc ccc@{}}
\toprule
\multirow[c]{2}{*}{\textbf{Algorithm}} & \multirow[c]{2}{*}{\textbf{Method}} & \multicolumn{3}{c}{\textbf{CarButton}} & \multicolumn{3}{c}{\textbf{RacecarGoal}} \\ 
\cmidrule(lr){3-5} \cmidrule(lr){6-8}
 & & \textbf{Vio. Rate (\%)} & \textbf{Magnitude} & \textbf{Avg. Cost} & \textbf{Vio. Rate (\%)} & \textbf{Magnitude} & \textbf{Avg. Cost} \\
\midrule
\multirow{5}{*}{TRPO} 
& Lag & 55.90 & 17.65 & 39.28 & 87.33 & 37.36 & 61.53 \\
& PID & 85.15 & 18.12 & 42.13 & 44.60 & 7.04 & 26.15 \\
& delete\_$\omega_o$ & \textbf{71.85} & \textbf{9.42} & \textbf{32.56} & \textbf{40.53} & \textbf{7.29} & \textbf{26.14} \\
& delete\_r(t) & \textbf{74.95} & \textbf{16.36} & \textbf{40.23} & \textbf{31.65} & \textbf{6.15} & \textbf{22.26} \\
& ADRC & \textbf{26.35} & \textbf{6.14} & \textbf{23.42} & \textbf{29.05} & \textbf{3.44} & \textbf{18.95} \\
\midrule
\multirow{5}{*}{CPPO} 
& Lag & 99.88 & 98.34 & 123.31 & 84.35 & 30.16 & 53.38 \\
& PID & 99.88 & 57.95 & 82.92 & 79.25 & 23.88 & 46.44 \\
& delete\_$\omega_o$ & 99.90 & \textbf{51.61} & \textbf{76.58} & \textbf{65.40} & \textbf{13.99} & \textbf{36.38} \\
& delete\_r(t) & \textbf{92.53} & \textbf{33.78} & \textbf{58.20} & \textbf{54.08} & \textbf{15.23} & \textbf{34.66} \\
& ADRC & \textbf{93.68} & \textbf{33.97} & \textbf{58.74} & \textbf{33.08} & \textbf{5.78} & \textbf{21.22} \\
\bottomrule
\end{tabular*}
}

\label{tab:ablation_results}
\end{table*}

Replacing \(r(t)\) with a static reference signal ("delete\_r(t)") resulted in higher violation rates and magnitudes compared to the full ADRC framework but still outperformed the PID baseline. For instance, in the RacecarGoal environment with CPPO, the violation rate decreased from 79.25\% (PID) to 54.08\% ("delete\_r(t)"), but ADRC achieved a further reduction to 33.08\%. Similarly, fixing \(\omega_o\) ("delete\_w0") impaired the system’s adaptability to environmental changes, leading to increased average costs. In the same environment, the average cost dropped from 46.44 (PID) to 36.38 ("delete\_w0") but was substantially lower with ADRC at 21.22. These results demonstrate that while both components are essential for optimal performance, even the simplified versions outperform PID, highlighting the robustness of the ADRC-Lagrangian framework.

\begin{figure}
    \centering
    \includegraphics[width=\linewidth]{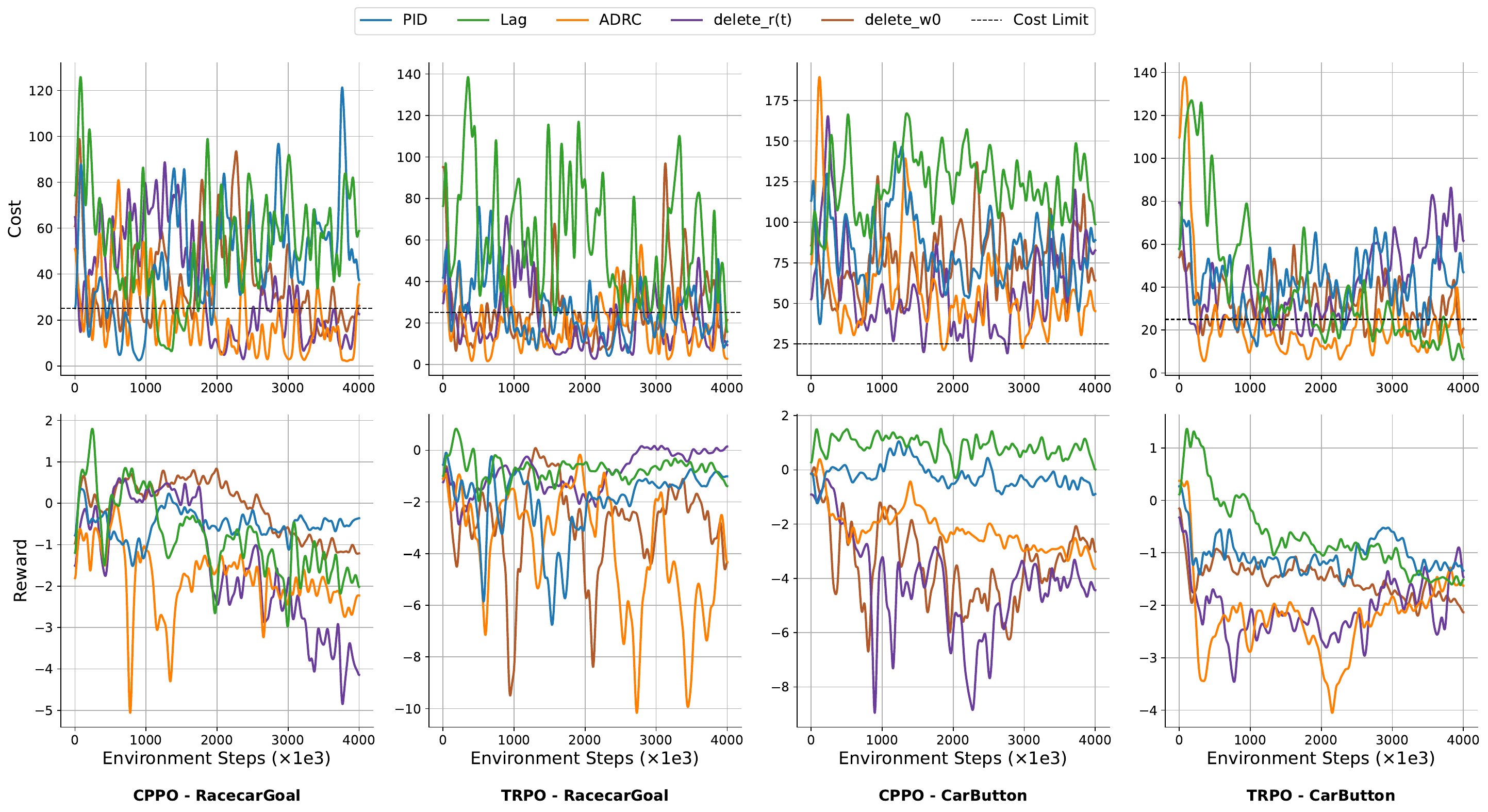}
    \caption{The training curve for TRPO and CPPO algorithms across CarButton and RacecarGoal environments with ablation study.}
    \label{fig:ablation_study}
\end{figure}

Figure~\ref{fig:ablation_study} provides the training curves for reward and cost across the ablation study. These curves illustrate the consistent performance improvements of our method throughout the training process. Our approach not only converges more effectively but also demonstrates a more favorable trade-off between reward maximization and cost minimization.

\subsubsection{Additional Ablation on CPPO}
To evaluate the effectiveness of our proposed dynamic parameter adjustment and transient process, we conducted ablation studies, with results summarized in Table~\ref{tab:ablation_CarPush_CPPO}. In this table, “Delete $r(t)$” refers to the removal of the dynamic adjustment component $r(t)$, while “Delete $\omega_o$” refers to the exclusion of the transient weight $\omega_o$ from the algorithm. The results show that removing either component results in a clear performance degradation in terms of violation rate, violation magnitude, and average cost. However, even with these removals, the performance of our approach remains superior to the baseline PID method, demonstrating the robustness of our framework. Additionally, the complete ADRC method achieves the best results across all metrics, further highlighting the significance of combining both $r(t)$ and $\omega_o$ in achieving optimal performance. For further details and results, please refer to Appendix~\ref{sec:ablation_in_appendix}.
\begin{table}[H]
\centering
\small
\caption{Ablation study of CPPO algorithm under RacecarGoal.}
\begin{tabular}{lccc}
\toprule
\textbf{Method} & \textbf{Vio. Rate(\%)} & \textbf{Magnitude} & \textbf{Avg. Cost} \\
\midrule
Delete $r(t)$ & 65.40 & 13.99 & 36.38 \\
Delete $\omega_o$  & 54.08 & 15.23 & 34.66 \\
ADRC    & \textbf{33.08} & \textbf{5.78} & \textbf{21.22  } \\
\midrule
PID           & 79.25 & 23.88 & 46.44 \\
Lag           & 84.35 & 30.16 & 53.38 \\
\bottomrule
\end{tabular}
\label{tab:ablation_CarPush_CPPO}
\end{table}


\subsection{Case Study} 

To gain a deeper understanding of how our ADRC Lagrangian methods outperform the baseline, we conduct a case study adopting the TRPO algorithm in the CarCircle-1 environment. In this environment, agents are tasked with navigating around a fixed-radius circle. The agents' goal is to maintain a smooth circular trajectory while staying within the designated circular boundary and avoiding collisions with obstacles.

The reward structure in the CarCircle-1 environment is designed to encourage agents to follow the circle boundary as closely as possible while maintaining a smooth motion. High rewards are achieved when the agent's trajectory aligns with the circle's radius, and its velocity vector aligns tangentially to the circular path. The reward for the agent is calculated using the following formula:
\begin{equation}
\label{eq:circle_reward}
\text{Reward} = \frac{\frac{-u \cdot y + v \cdot x}{r}}{1 + \left|r - R\right|} \cdot \text{reward\_factor},    
\end{equation}
where \(x, y\) represent the agent's position, \(u, v\) represent the agent's velocity, \(r\) is the distance from the agent to the circle center (\(r = \sqrt{x^2 + y^2}\)), \(R\) is the fixed radius of the circle, and \(\text{reward\_factor}\) is a scaling constant. This formula incentivizes agents to maintain a smooth and stable trajectory around the circle. 

To increase the complexity of the environment, CarCircle-1 introduces two vertical walls positioned symmetrically near the circle's boundary. These walls present an additional challenge, as agents must avoid crossing into the wall regions while navigating the circle. Costs are incurred when the agent violates safety constraints, such as exceeding the circle's radius or crossing the boundaries defined by the walls. The cost is computed using the following conditions:
\begin{equation}
\label{eq:cost_circle}
\text{Cost} = 
\begin{cases} 
1 & \text{if } |x| > \text{wall\_threshold} \text{ or } \sqrt{x^2 + y^2} > R, \\
0 & \text{otherwise},
\end{cases}    
\end{equation}
where \(\text{wall\_threshold}\) is the horizontal boundary defined by the walls, and \(R\) is the circle's radius.

In this study, we adopt the default settings in OmniSafe \citep{omnisafe}, with $R = 1.5$, \(\text{wall\_threshold} = 1.125\), and \(\text{reward\_factor} = 0.1\). The study is conducted using the TRPO algorithm, with hyperparameters detailed in Appendix~\ref{sec:parameters}. The models are trained over 4,000 episodes, with each episode consisting of 1,000 steps. After training, the final checkpoint is used for evaluation. During evaluation, we simulate a single episode of 5,000 steps, where rewards are calculated using Eqn.~\ref{eq:circle_reward}, and costs are determined by Eqn.~\ref{eq:cost_circle}. We collected the position ant the corrsponding reward and the cost of the agents and the results of this case study are presented in Figure~\ref{fig:case_study_circle}.

\begin{figure}[htbp]
    \centering
    \includegraphics[width=\linewidth]{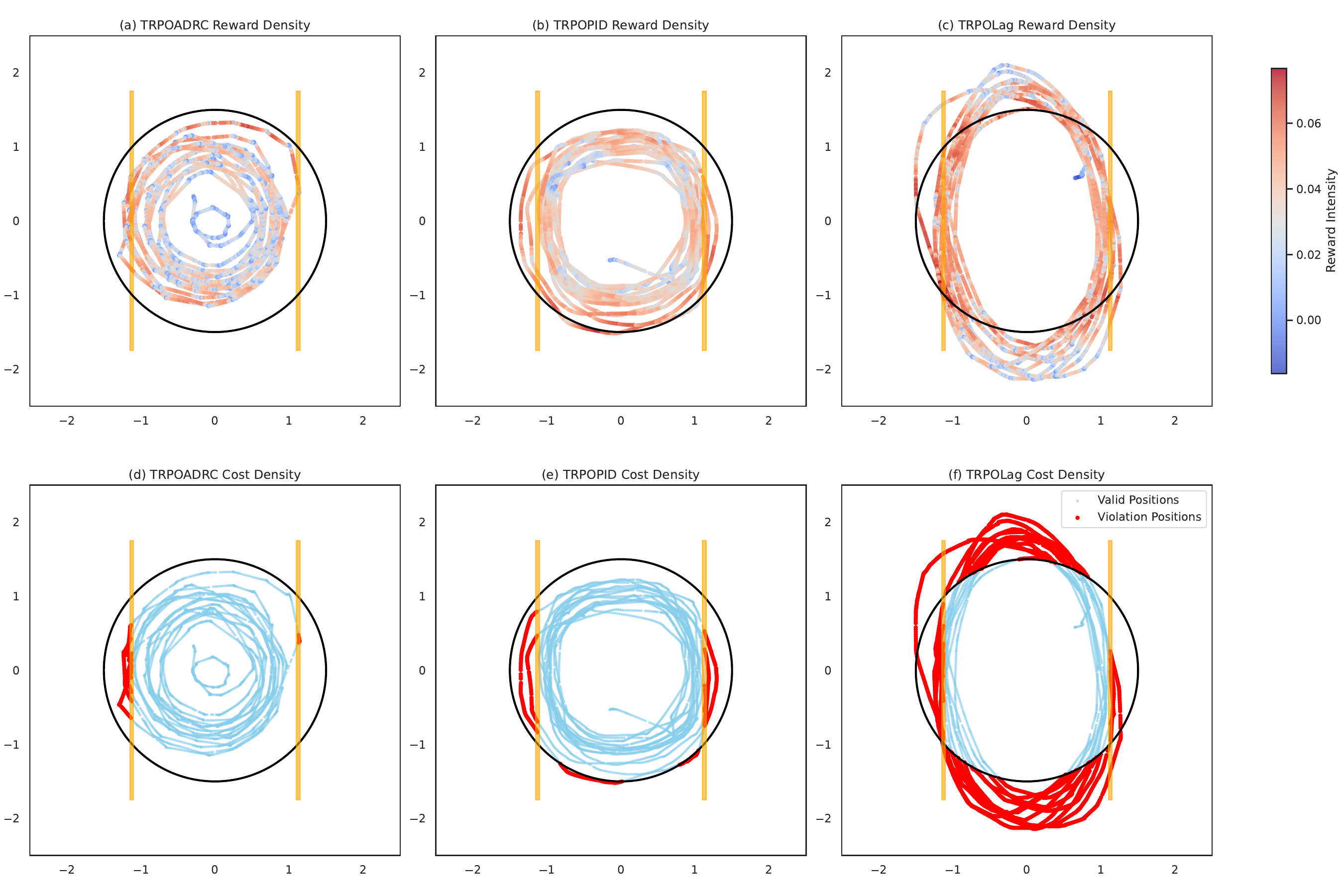}
    \caption{Reward density and cost analysis for TRPO algorithm adapting three Lagrangian methods under CarCircle environment.}
    \label{fig:case_study_circle}
\end{figure}

In Figure~\ref{fig:case_study_circle}, the color of the points represents the reward, with deeper colors indicating higher rewards. The first row visualizes the reward density at each position during the episode, while the second row illustrates the cost associated with each position. Blue points indicate safe positions, whereas red points represent unsafe ones. The results show that the classical Lagrangian method achieves the highest reward, but the trajectory deviates from a perfect circle, forming an ellipse instead. Agents trained with this method learn to avoid the walls but fail to recognize the importance of staying within the circle. To maximize rewards and bypass the walls, these agents move outside the circle, resulting in a total of 496 safety violations. In contrast, agents trained with the PID Lagrangian method demonstrate better safety awareness, recognizing that moving outside the circle is unsafe. However, to achieve higher rewards, they still cross the walls frequently, leading to 320 safety violations. Finally, agents trained with our ADRC method maintain a strict adherence to staying within the circle and exhibit only 132 safety violations, establishing a superior safety performance.

The improved results achieved by the ADRC method can be attributed to its ability to reduce phase lag and minimize oscillations, thereby enhancing the stability of training. These properties enable ADRC to maintain better control over the agent’s behavior, ensuring stricter adherence to safety constraints while still optimizing for rewards. The stability and precision offered by ADRC during training allow the agent to effectively balance the trade-off between maximizing rewards and minimizing safety violations, demonstrating the advantages of our proposed method in challenging environments.

\subsection{Final Policy Performance}
\label{sec:converge_performance}
To assess the final performance of the trained policies rather than intermediate training behavior, we conducted experiments on the Swimmer and Hopper environments from the Velocity tasks suite. The results compare ADRC-based and PID-based methods under CPPO and TRPO frameworks.

\begin{table}[h]
\centering
\caption{Performance on Swimmer environment.}
\begin{tabular}{lccc}
\toprule
Algorithm & Avg Reward & Avg Cost & Violate Rate (\%) \\
\midrule
CPPOPID & 30.10 & 22.44 & 28.02 \\
CPPOADRC & 29.39 & 16.77 & 14.16 \\
TRPOPID & 28.72 & 21.34 & 37.85 \\
TRPOADRC & 36.32 & 19.03 & 12.16 \\
\bottomrule
\end{tabular}
\label{tab:velocity_swimmer_appendix}
\end{table}

\begin{table}[h]
\centering
\caption{Performance on Hopper environment.}
\begin{tabular}{lccc}
\toprule
Algorithm & Avg Reward & Avg Cost & Violate Rate (\%) \\
\midrule
CPPOPID & 1466.47 & 48.20 & 30.00 \\
CPPOADRC & 1520.18 & 8.20 & 24.63 \\
TRPOPID & 1038.47 & 18.70 & 29.76 \\
TRPOADRC & 1384.11 & 12.90 & 10.98 \\
\bottomrule
\end{tabular}
\label{tab:velocity_hopper}
\end{table}

These results demonstrate that the ADRC-based method achieves lower constraint violation rates and costs, while maintaining or improving the overall reward compared to PID-based baselines.

\subsection{Sensitivity Analysis of PID Lagrangian Methods}

We also empirically validated the sensitivity of PID Lagrangian methods to the control gain tuning, particularly for the derivative term $k_d$. Experiments were conducted in the CarPush and CarButton environments using CPPO algorithms with varying $k_d$ values.

\begin{table*}[htbp]
\centering
\small
\caption{Sensitivity analysis of PID Lagrangian methods by varying the derivative gain $k_d$ on CarPush and CarButton environments (using CPPO).}
\resizebox{0.9\textwidth}{!}{
\begin{tabular*}{\textwidth}{@{\extracolsep{\fill}}lccccc@{}}
\toprule
\textbf{Environment} & \textbf{$k_d$ Value} & \textbf{Violate Rate (\%)} & \textbf{Violate Magnitude} & \textbf{Avg Cost} & \textbf{Avg Reward} \\
\midrule
\multirow{4}{*}{CarPush (CPPO)}
    & 1     & 50.55 & 19.17 & 35.28 & -1.37 \\
    & 0.1   & 66.00 & 24.87 & 46.81 & -2.26 \\
    & 0.01  & 67.28 & 12.80 & 34.67 & 0.15 \\
    & 0.001 & \textbf{99.80} & \textbf{96.50} & \textbf{121.45} & -0.02 \\
\midrule
\multirow{3}{*}{CarButton (CPPO)}
    & 1     & \textbf{99.88} & \textbf{89.36} & \textbf{114.33} & 0.21 \\
    & 0.1   & 99.80 & 43.32 & 68.29 & -1.89 \\
    & 0.01  & 99.88 & 64.74 & 89.72 & -0.69 \\
\bottomrule
\end{tabular*}
}
\label{tab:sensitivity_pid}
\end{table*}

These results clearly illustrate that PID Lagrangian methods are highly sensitive to the choice of the $k_d$ value. Suboptimal tuning can lead to substantial degradation in both safety and overall performance.

\section{Large Language Models Usage}

We used GPT\mbox{-}5 (OpenAI) for grammar and style editing of the paper and for debugging auxiliary code (e.g., resolving error messages and minor refactoring). All technical ideas, method designs, experiments, and conclusions were created and verified by the authors. No confidential or reviewer-only information was shared with the model.

\section{Computational Cost Analysis}
\label{sec:computational_cost}
In this section, we evaluate the computational cost of our ADRC Lagrangian method compared to the PID and classical Lagrangian methods (Lag). The evaluation includes normalized computation time during the rollout phase (interaction with the environment) and the update phase (policy updates). The analysis is conducted using the DDPG and CPPO algorithms on the RacecarButton task, which features a multi-dimensional action space.

\begin{table*}[h!]
\centering
\caption{Normalized Computation Time for DDPG and CPPO under RacecarButton task.}
\begin{tabular}{lcccccc}
\hline
\textbf{Metric}       & \multicolumn{3}{c}{\textbf{DDPG}} & \multicolumn{3}{c}{\textbf{CPPO}} \\
\cline{2-4} \cline{5-7}
                      & \textbf{PID} & \textbf{Lag} & \textbf{ADRC} & \textbf{PID} & \textbf{Lag} & \textbf{ADRC} \\
\hline
Rollout Time          & 0.95         & 1.00         & 0.95          & 0.95         & 1.00         & 0.95         \\
Update Time           & 0.94         & 1.00         & 1.00          & 0.94         & 1.00         & 1.00         \\
Total Time       & 0.95         & 1.00         & 0.97          & 0.95         & 1.00         & 0.97         \\
\hline
\end{tabular}

\label{tab:normalized_times}
\end{table*}

Compared with PID, at each episode, our ADRC method introduces minimal additional computation. Specifically, ADRC calculates the reference signal $r(t)$, solves the equation defined by Eqn.\ref{eq:manifold}, and determines the optimal parameters $\omega_o$ based on Eqn.\ref{eq:lower_bound_of_omega}. These operations involve fixed and lightweight calculations that do not scale with the problem size, ensuring no additional time complexity is introduced.

The results in Table~\ref{tab:normalized_times} confirm that ADRC achieves comparable computation times to the baselines. The rollout time of ADRC matches that of PID and slightly outperforms Lag, demonstrating efficiency in policy adjustments. During the update phase, ADRC incurs no additional cost compared to PID and Lag, aligning with the theoretical analysis that these computations are efficiently integrated into the training process.

All experiments were conducted on a machine equipped with an NVIDIA RTX 3090 GPU with 24GB of memory. Each task was trained for 4 million steps. For TRPO and CPPO, each training run takes approximately 10 GPU-hours, while DDPG and TD3 require about 18 GPU-hours per run.

\end{document}